\documentclass[5p,]{elsarticle}
\usepackage{bm}
\usepackage{amsmath,amsfonts}
\usepackage{amsthm}
\usepackage{tabularx}
\usepackage{array}
\usepackage{subfigure}
\usepackage{diagbox}
\usepackage{multirow}
\usepackage{makecell}
\usepackage{algorithm}
\usepackage{hyperref}
\usepackage{algpseudocode}
\newtheorem{defi}{Definition}
\newtheorem{proposition}{Proposition}
\newtheorem{thm}{Theorem}

\newdefinition{rmk}{Remark}
\newproof{pf}{Proof}

\biboptions{sort&compress}

\usepackage{placeins}  
\usepackage{color}
\usepackage{xcolor}
\usepackage[normalem]{ulem}

\begin{document}

\begin{frontmatter}

\title{Controlled oscillation modeling using port-Hamiltonian neural networks}

\author[1]{M. Linares\corref{cor1}}
\ead{maximino.linares@ircam.fr}
\author[1]{G. Doras}
\ead{guillaume.doras@ircam.fr}
\author[1]{T. H\'elie}
\ead{thomas.helie@ircam.fr}
\author[1]{A. Roebel}
\ead{axel.roebel@ircam.fr}
\cortext[cor1]{Corresponding author}

\affiliation[1]{organization={IRCAM},
                addressline={1, place Igor-Stravinsky},
                postcode={75004},
                city={Paris},
                country={France}}

\begin{abstract}
    Learning dynamical systems through purely data-driven methods is challenging as 
    they do not learn the underlying conservation laws that enable them to correctly 
    generalize. Existing port-Hamiltonian neural network methods have recently been 
    successfully applied for modeling mechanical systems. However, even though these 
    methods are designed on power-balance principles, they usually do not consider 
    power-preserving discretizations and often rely on Runge-Kutta numerical methods. 
    In this work, we propose to use a second-order discrete gradient method embedded 
    in the learning of dynamical systems with port-Hamiltonian neural networks. 
    Numerical results are provided for three systems deliberately selected to span 
    different ranges of dynamical behavior under control: a baseline harmonic oscillator 
    with quadratic energy storage; a Duffing oscillator, with a non-quadratic Hamiltonian 
    offering amplitude-dependent effects; and a self-sustained oscillator, which can 
    stabilize in a controlled limit cycle through the incorporation of a nonlinear 
    dissipation. We show how the use of this discrete gradient method outperforms the 
    performance of a Runge-Kutta method of the same order. Experiments are also carried 
    out to compare two theoretically equivalent port-Hamiltonian systems formulations 
    and to analyze the impact of regularizing the Jacobian of port-Hamiltonian neural 
    networks during training. 
\end{abstract}

\begin{keyword}
physics-informed machine learning, port-Hamiltonian neural networks, discrete gradient, Jacobian regularization  
\end{keyword}

\end{frontmatter}

\section{Introduction}
Purely data-driven methods for dynamical systems pose several challenges: 
the volume of useful data is generally limited, they produce accurate-but-wrong 
predictions, they are not capable of dealing with uncertainty and their predictions 
are not explainable nor interpretable~\cite{Cicirello_2024}. At the same time, it is 
natural to leverage the prior knowledge obtained through centuries of scientific 
study in the form of inductive bias~\cite{Baxter_2000} when designing predictive 
models~\cite{karniadakis2021}. In this direction, a successful data-driven physical 
model is one whose inductive bias better captures the true dynamics and is able to 
predict a correct outcome for data not observed during training. These inductive bias are 
incorporated through soft and hard constraints~\cite{eidnes2023}. Soft constraints add penalty terms to the training loss function, discouraging violations of physical laws. This approach is widely applicable, but the model must balance adherence to the constraints against fitting the observed data, without providing any formal guarantee. The seminal \textit{Physics-Informed Neural Networks} (PINNs)~\cite{raissi2019} 
is an example of soft-constraining the model. In contrast, hard constraints ensure 
strict compliance with specified physical laws by embedding them directly into the 
model’s structure, independently of the available data. In this sense, hard constraints 
can be used to incorporate energy conservation laws, symmetry, numerical methods for 
PDEs or Koopman theory~\cite{meng2025physics}. However, imposing hard constraints reduces
 the space of possible solutions and generally limits the model’s expressiveness. 
 As a result, hard constraints are difficult to apply in practice: incorrect assumptions
  about the physical system can lead to overly biased models with a poor generalization 
  performance. \\

In this paper, we consider dynamical systems whose state $\bm{x}$ is governed by the following ODE: 
\begin{equation}
    \label{eq: dynamical_system_introduction}
    \frac{d\bm{x}}{dt}:=\dot{\bm{x}}=\bm{f}(\bm{x})
\end{equation}
and analyze how hard constraints based on energy conservation and power balance principles are 
incorporated into neural networks based on \textit{physically consistent} port-Hamiltonian systems formulations. 
By \textit{physically consistent}, we refer, in this work, to the combination of power-balanced state-space 
models with discrete gradient numerical methods, which preserve the system's energy during discretization. 
The central hypothesis is that enforcing this physical structure as a hard constraint improves interpretability
 and generalization with respect to a vanilla NeuralODE. To substantiate this claim, we conduct a 
systematic study on three controlled oscillatory systems of increasing modeling complexity: a harmonic oscillator, 
a Duffing oscillator and a self-sustained oscillator. These systems are deliberately selected to include linear 
and nonlinear Hamiltonian dynamics as well as nonlinear dissipation mechanisms. The harmonic oscillator serves 
as the simplest baseline example with quadratic energy storage; the Duffing oscillator offers nonlinearities in 
the Hamiltonian, capturing amplitude-dependent effects; and the self-sustained oscillator incorporates a 
nonlinear dissipation which can stabilize the system in a controlled-limit cycle. \\

The main contributions of this paper are
\begin{itemize}
    \item a comparison of two theoretically equivalent port-Hamiltonian systems (PHS) formulations: the 
    semi-explicit PH-Differential-Algebraic-Equations (PH-DAE) and the input-state-output PHS with feedthrough; 
    when they are implemented as port-Hamiltonian neural networks (PHNNs).
    \item a performance comparison between the Gonzalez discrete gradient method, which is a second-order 
    energy-preserving numerical method, and a second-order explicit Runge-Kutta method when used to discretize the 
    PHNN model during learning.
    \item an empirical study of the impact of regularizing the Jacobian of PHNN through two methods already 
    applied to NeuralODEs and a new one tackling the stiffness of the learned ODE solutions.
\end{itemize}

The rest of this paper is organized as follows. Section \ref{sec:Preliminaries} introduces the necessary 
preliminaries on dynamical systems, port-Hamiltonian systems, numerical methods, neural ordinary differential 
equations, and port-Hamiltonian neural networks. Section \ref{sec:Port-Hamiltonian-Model} presents the port-Hamiltonian 
formulations considered in this work, the enforced physical constraints, and the oscillatory examples used 
throughout the paper. Section \ref{sec:phnn-models} focuses on port-Hamitonian neural networks, detailing how 
physical constraints are incorporated into the learning process, the comparison between continuous- and 
discrete-time models, and the Jacobian regularization in the port-Hamiltonian neural networks. Section \ref{sec:experiments} 
formulates the key research questions addressed by the experimental study. Section \ref{sec:results} reports 
and discusses the results of the experiments. Finally, Section \ref{sec:conclusion} concludes the paper and outlines 
directions for future work. The code is publicly available:  \url{https://github.com/mlinaresv/ControlledOscillationPHNNs} [will be released after paper acceptance].
\section{Preliminaries}
\label{sec:Preliminaries}

\subsection{Dynamical systems}
\label{sec:dynamical_systems}
\subsubsection{Dynamical system ODE}

Consider a dynamical system governed by the following system of equations:
\begin{equation}
\label{eq: dynamical_system}
\begin{cases}
    \dot{\bm x}(t)=\bm f(\bm x(t),\bm u(t))  \\
     \bm y(t)=\bm h(\bm x(t),\bm u(t)) 
\end{cases}
\end{equation}
where $\bm{x}(t)\in\mathbb{R}^{n_x},\bm{u}(t)\in\mathbb{R}^{n_u}$ and $\bm{y}(t)\in\mathbb{R}^{n_y}$ 
are referred to as the \textit{state}, \textit{input} and \textit{output} of the system, respectively; 
and where $\bm{f}:\mathcal{D}\rightarrow\mathbb{R}^{n_x}$, $\bm{h}:\mathcal{D}\rightarrow\mathbb{R}^{n_y}$ 
are $\mathcal{C}^1$ functions with domain $\mathcal{D}\subseteq\mathbb{R}^{n_x}\times\mathbb{R}^{n_u}$. The system of equations \eqref{eq: dynamical_system} as a whole is referred to as \textit{state-space model}~\cite{khalil_nonlinear_2002}.\\

In this work, we consider autonomous systems (i.e. where $\bm u(t)$ is constant for all $t$).
Obtaining a solution for a given initial condition is often referred to as solving the \textit{initial value problem} (IVP)
\begin{equation}
\label{eq: ivp}
\dot{\bm x}(t)=\bm f(\bm x(t))~~~~~~~~\bm x(0)=\bm x_0.
\end{equation}
In the following, we omit the explicit time dependence of $\bm x(t)$, $\bm y(t)$ and $\bm u (t)$ to simplify the notation, when there is no ambiguity, and we let $\bm J_{\bm f}(\bm x)$ denote the \textit{Jacobian matrix} of the function $\bm f$ of \eqref{eq: ivp}.

\subsubsection{Well-posed problems}
The term \textit{well-posed} was introduced 
to refer to problems where a) the solution exists, b) is unique, and c) depends continuously on the initial conditions and parameters~\cite{hadamard1902problemes}.
A sufficient condition for an IVP to be well-posed is that $\bm f$ is $K$-Lipschitz~\cite{hirsch1974differential}. \\

The \textit{spectral norm} of a matrix $\bm A$ is defined as: 
\begin{equation}
    \label{eq: spectral-norm-jacobian}
    \| \bm A \|_2=\max_{\|\bm x\|_2=1}\|\bm A\bm x\|_2=\sigma_{max}(\bm A),
\end{equation}
where $\sigma_{max}(\bm A)$ is the largest \textit{singular value} of $\bm A$~\cite{golub2013matrix}. It is a standard result that if the following condition holds:
\[ \|\bm J_{\bm f}(\bm x)\|_2\leq K<\infty~~~~~\forall\bm x\in\mathcal{D} \]
then $\bm f$ is K-Lipschitz \cite{hirsch1974differential} (see \ref{ap: preliminaries}). \\

In practice, controlling the well-posedness of the IVP reduces to enforcing an upper bound on the spectral norm of the Jacobian of $\bm f$.

\subsubsection{Well-conditioned problems}
The term \textit{well-conditioned} refers to problems seen as a function $\bm f$ where a small perturbation of $\bm x$ yields only small changes in $\bm f (\bm x)$ -- the meaning of \textit{small} depending on the context~\cite{trefethrn1997}. This relationship can be characterized by the \textit{relative condition number} of $\bm f$ defined as:
\begin{equation}
    \label{eq: condition-number-trefethen}
    \kappa_{\bm f}(\bm x)=\lim_{\delta\rightarrow0}\sup_{\|\delta \bm x\|\leq\delta}\left(\frac{\|f(\bm x+\delta\bm x)-f(\bm x)\|_2}{\|f(\bm x)\|_2}\left / \frac{\|\delta\bm x\|_2}{\| \bm x\|_2}\right. \right),
\end{equation}
where $\delta \bm x$ and $\delta\bm f$ are infinitesimal. If $\bm f\in\mathcal{C}^1$, this rewrites as:
\begin{equation}
    \label{eq: condition-number-trefethen-def}
    \kappa_{\bm f}(\bm x)=\frac{\|\bm J_{\bm f}(\bm x)\|_2}{\|\bm f(\bm x)\|_2/\|\bm x \|_2}.
\end{equation}
for $\|\bm x \|_2 \neq 0$ and its limit when $\|\bm x \|_2 \rightarrow 0$. If $\kappa_{\bm f}(\bm x)$ is small (resp. large), the problem is said to be \textit{well-conditioned} (resp. \textit{ill-conditioned}). \\

For a linear system $\bm f(\bm x)=\bm A\bm x$, the Jacobian of $\bm f$ becomes $\bm J_{\bm f}(\bm x) = \bm A, \quad \forall \bm x$. It is a standard result that the condition number can then be bounded s.t.:
\begin{equation}
    \label{eq: condition-number-linear-systems}
    \kappa_{\bm f}(\bm x)\leq\|\bm J_{\bm f}(\bm x)\|_2\|\bm J^{-1}_{\bm f}(\bm x)\|_2=\frac{\sigma_{max}(\bm J_{\bm f}(\bm x))}{\sigma_{min}(\bm J_{\bm f}(\bm x))}
\end{equation}
where $\sigma_{max}(\bm J_{\bm f}(\bm x)),\sigma_{min}(\bm J_{\bm f}(\bm x))$ are the maximum and minimum singular values of $\bm J_{\bm f}(\bm x)$~\cite{trefethrn1997}. In the following, the upper bound in \eqref{eq: condition-number-linear-systems} is denoted as $\kappa(\bm J_{\bm f}(\bm x))$.\\

In practice, controlling the well-conditioning of a linear problem $\bm f$ reduces to enforcing an upper bound on its condition number.


\subsubsection{Stiff ODE systems}
The term \textit{stiff} refers to an initial value problem for which certain numerical methods require prohibitively small step sizes to maintain stability. This behavior can be characterized by the \textit{stiffness ratio} defined as:
\begin{equation}
\label{eq: stiffness-ratio}
    \rho(\bm J_{\bm f})=\frac{\max|\mathcal{R}(\lambda(\bm J_{\bm f}))|}{\min|\mathcal{R}(\lambda(\bm J_{\bm f}))|},
\end{equation}
where $\lambda(\bm J_{\bm f})$ are the \textit{eigenvalues} of the Jacobian matrix and $\mathcal{R}(\cdot):\mathbb{C}\rightarrow\mathbb{R}$ is the real part operator~\cite{iserles2009first}. Although it is rigorously only true for linear equations, a system with a large stiffness ratio is generally stiff~\cite{iserles2009first}. \\

In practice, controlling the stiffness of an ODE system reduces to enforcing an upper bound on its stiffness ratio.

\subsection{Port-Hamiltonian systems}

In the Hamiltonian formalism \cite{taylor2005}, the mechanical state of $\mathcal{S}$ is represented by a vector $\bm{x} = [\bm{q}, \bm{p}]^\top \in \mathbb{R}^{n_x}$, 
where $\bm{q}, \bm{p} \in \mathbb{R}^{n_x/2}$ denote the generalized coordinates and conjugate momenta. The dynamics is governed through a scalar-valued energy function 
$H(\bm{x})$ known as \textit{Hamiltonian}, and the time evolution of the system follows Hamilton’s equations:
\begin{equation}
\label{eq: hamilton-equation}
\dot{\bm{x}} = \bm f(\bm x)=\bm{S} \nabla H(\bm{x}) = \begin{bmatrix}
0 & \mathbb{I}_n \\
- \mathbb{I}_n & 0
\end{bmatrix} \nabla H(\bm{x}),
\end{equation}
where $\bm{S} \in \mathbb{R}^{n_x \times n_x}$ is the canonical symplectic matrix which imposes the energy conservation principle. This formalism is restricted to 
conservative closed systems and does not readily describes dissipation or external control, which are common to many real-world systems. \\ 

Port-Hamiltonian formalism~\cite{maschke1992,duindam2009modeling,van2014port} generalizes Hamiltonian mechanics to multi-physics open systems by explicitly modeling energy exchange with the 
environment through ports (inputs/outputs) and dissipation. In this work, the considered class of open systems is represented in this formalism as a network of:
\begin{itemize}
    \item \textit{energy storing components} 
    with state $\bm{x}$ and energy $E := H(\bm{x})$, with $H$ positive definite and $\mathcal{C}^1$-regular, so the stored power is $P_{\text{stored}} := \dot{E} = \nabla H(\bm{x})^\top \dot{\bm{x}}$,
    \item \textit{dissipative components}, described by an effort law $\bm{z}(\bm{w})$ for a flow variable $\bm{w}$, where $P_{\text{diss}}:=\bm{z}(\bm{w})^\top \bm{w}$; 
    by convention, the dissipated power is counted positively, i.e. $P_{\text{diss}}\geq 0$, with $P_{\text{diss}} = 0$ for conservative components,
    \item \textit{external components} are represented through ports by system inputs $\bm{u}$ and outputs $\bm{y}$, with the convention that $P_{\text{ext}} := \bm{u}^\top \bm{y}$ 
    is positive when received by these external components.
\end{itemize}
The coupling of internal flows
$\mathcal{F}$ and efforts $\mathcal{E}$ governs 
the time evolution of the system, expressed as:

\begin{equation} \label{eq:phs_S_intro}
    \underbrace{
    \begin{bmatrix}
    \dot{\bm{x}} \\
    \bm{w} \\
    \bm{y}
    \end{bmatrix}}_{:=\mathcal{F}}
    =
    \bm{S}
    \underbrace{
    \begin{bmatrix}
    \nabla H(\bm{x}) \\
    \bm{z}(\bm{w}) \\
    \bm{u}
    \end{bmatrix}
    }_{:=\mathcal{E}},
    \end{equation}
    where $\bm S=-\bm S^T$ is skew-symmetric so that
    \begin{equation}
        0 = \underbrace{P_{\text{stored}}}_{\nabla H(\bm x)^T\dot{\bm x}} + \underbrace{P_{\text{diss}}}_{\bm z(\bm w)^T\bm w}+ \underbrace{P_{\text{ext}}}_{\bm u^T\bm y} = \langle {\mathcal{E}}|{\mathcal{F}}\rangle 
    \end{equation}
    with $\langle\mathcal{E}|\mathcal{F}\rangle=\mathcal{E}^T\mathcal{F}$. Indeed,
    \begin{equation}
        \mathcal{E}^T\mathcal{F}\overset{(14)}{=}\mathcal{E}^TS\mathcal{E}\overset{(S=-S^T)}{=}0
    \end{equation}
The passivity of the system stems from the fact that $P_{\text{diss}} \ge 0$, which imposes that $P_{\text{stored}} = -P_{\text{diss}} - P_{\text{ext}}\leq-P_{\text{ext}}$. As a consequence, $P_{\text{stored}} \leq 0$ (non-increasing internal energy) when external sources are off. This intrinsically physics-consistent formulation is highly general and 
can be applied to a wide range of physical domains, including acoustics, fluid mechanics, quantum physics and others \cite{chaigne2016acoustics, aoues2017modeling,helie2022,cardoso2024port,roze2024time} 
(see~\cite{van2014port} for formulations more general than \eqref{eq:phs_S_intro}).

\subsection{Numerical methods}
For a given IVP \eqref{eq: ivp}, a \textit{numerical method} approximates the solution without the need to analytically solve the ODE. A \textit{discrete trajectory} $\mathcal{T}$ is 
defined as the set $\{\bm x_n\}_{n\in \mathbb{N}}$ of consecutive states where $\bm x_n:=\bm x(nh)$, $h=1/sr$ is the time step and $sr$ the sampling rate. 
Given the \textit{initial state} $\bm x_0=\bm x(0)$, the applied control $\bm u$ and the ODE governing $\bm x$,
a trajectory $\mathcal{T}$ is obtained by such numerical method. If the governing equation is given by \eqref{eq:phs_S_intro}, the resulting $\mathcal{T}$ 
is said to be a \textit{discretization of a port-Hamiltonian system}. In this work, we use numerical methods that belong to two different categories:
\subsubsection{Runge-Kutta (RK) methods} 
 A well-known family of numerical methods is the \textit{Runge-Kutta methods}~\cite{Hairer2006}. Let $b_i,a_{i,j}~(i,j=1,...,s)$ be real numbers and let $c_i=\sum_{j=1}^sa_{ij}$. 
 An \textit{s-stage} Runge-Kutta method is given by 
\begin{equation}
    \begin{cases}
        \bm k_i=\bm f\left(t_0+hc_i,\bm x_n+h\sum_{j=1}^sa_{i,j}\bm k_j \right),~~~i=1,...,s \\
        \bm x_{n+1}=\bm x_n+h\sum_{i=1}^sb_i\bm k_i
    \end{cases}
\end{equation}
where the weights $b_i,a_{i,j}$ are chosen to reach an accuracy order. In particular, the \textit{explicit midpoint method} (RK2)
\begin{equation}
    \label{eq: rk2-method}
    \bm x_{n+1}=\bm x_n+\bm f\left(t_n+\frac{h}{2},\bm x_n+\frac{h}{2}\bm f(t_n,\bm x_n) \right)
\end{equation}
is a second-order explicit Runge-Kutta method.

\subsubsection{Discrete gradient (DG) methods}
Physical properties like energy-preservation are not in general respected when discretizing a port-Hamiltonian system using RK methods 
\cite{celledoni2017energy}. However, in Hamiltonian mechanics there is a rich theory of structure preserving integrators \cite{Hairer2006}. In particular, discrete gradient methods 
are a family of geometrical integrators that preserve the exact energy by construction \cite{quispel1996discrete}. A \textit{discrete gradient} $\overline{\nabla}H:\mathbb{R}^{n_x}\times\mathbb{R}^{n_x}\rightarrow\mathbb{R}^{n_x}$ 
is an approximation of the gradient of a function $H:\mathbb{R}^{n_x}\rightarrow\mathbb{R}$, satisfying the following two properties\footnote{In the literature, these two properties are usually written as 
\begin{enumerate}
    \item $\overline{\nabla}H(\bm x,\bm x')^T(\bm x'-\bm x)=H(\bm x')-H(\bm x)$
    \item $\overline{\nabla}H(\bm x,\bm x)=\nabla H(\bm x)$.
\end{enumerate}
}:
\begin{enumerate}
    \item $\overline{\nabla}H(\bm x,\delta\bm x)^T\delta\bm x=H(\bm x+\delta\bm x)-H(\bm x)$
    \item $\overline{\nabla}H(\bm x,\bm 0)=\nabla H(\bm x)$
\end{enumerate}
where $\delta\bm x=\bm x'-\bm x$. In this work, we consider the \textit{Gonzalez discrete gradient} method \cite{gonzalez1996time} (DG)
\begin{equation}
\label{eq: gonzalez-discrete-gradient}
\begin{split}
    \overline{\nabla}_{mid}H(\bm x,\delta\bm x)&=\nabla H\left(\bm x+\frac{1}{2}\delta\bm x\right)\\ &+\frac{H(\bm x+\delta\bm x)-H(\bm x)-\nabla H(\bm x+\frac{1}{2}\delta\bm x)^T\delta\bm x}{\delta\bm x^T\delta\bm x}\delta\bm x.
\end{split}
\end{equation} 
Note that this method is, in general, a second-order inverse-explicit integrator \cite{CELLEDONI2025134471} that becomes linearly implicit when 
the Hamiltonian satisfies $H(\bm x)=\frac{1}{2}\bm x^T\bm Q\bm x$.

\subsection{Neural ODEs and Jacobian regularization}
\textit{Neural Differential Ordinary Equations} (NODEs) were introduced by Chen et al.~\cite{chen2018} to define the evolution of a system's state using an ODE whose dynamics are modeled by a neural network:
\begin{equation}
\label{eq: neural-ode}
    \frac{d\bm x(t)}{dt}=\bm f_{\theta}(t,\bm x(t))
\end{equation}
where $\theta$ represents the parameters of the neural network. The forward pass of a NODE is defined as
\begin{equation}
    \bm x(t_{n+1})=\bm x(t_n)+\int_{t_n}^{t_{n+1}}\bm f_{\theta}(t,\bm x(t))dt
\end{equation}
where in practice the integration is approximated by any numerical method. \\

A well-documented challenge in the training of NODEs is the fact that the Jacobian of the learned dynamics $\bm f_{\theta}$ becomes poorly conditioned as the training progresses. Several studies have underlined this phenomenon. 
For instance, Dupont et al.~\cite{dupont2019augmentedneuralodes} reported that when NODEs are overfitted on the MNIST dataset~\cite{deng2012mnist}, the resulting flow may become so ill-conditioned that the numerical ODE solver 
is forced to take time steps smaller than machine precision, leading to numerical underflow and an increase in the number of function evaluations (NFE). Such pathological dynamics can also destabilize training and cause the loss 
to diverge. Motivated by these observations, Finlay et al.~\cite{finlay2020trainneuralodeworld} emphasized the importance of explicitly constraining the learned vector field. They noted that, in the absence of such constraints, 
the learned dynamics may exhibit poor conditioning, which in turn degrades numerical integration and degrades training performance. To address this issue, they proposed regularizing the Jacobian of $\bm f_{\theta}$ using its 
Frobenius norm. In this direction, Josias et al.~\cite{josias2022} proposed instead to regularize the Jacobian condition number, arguing that it reduced NFE without a significant loss in accuracy and controlling at the same time 
the Jacobian norm. Focusing on Jacobian regularization enables a connection to sensitivity analysis found in neural network literature, where also the spectral norm regularization has been employed~\cite{yoshida2017spectralnormregularizationimproving}
\cite{miyato2018spectralnormalizationgenerativeadversarial}.

\subsection{Port-Hamiltonian neural networks}
In the context of learning Hamiltonian ODEs, Greydanus et al. \cite{greydanus2019} introduced \textit{Hamiltonian Neural Network} (HNN), modifying the usual NODEs framework by parameterizing the Hamiltonian function of a given conservative physical system. 
Once the Hamiltonian is parameterized, \eqref{eq: hamilton-equation} is leveraged in the loss function. This work led to many others trying to generalize the framework to learn more general physical systems. For example, Sosanya et al.~\cite{greydanus2022} 
generalized the approach of HNN to non-conservative systems using Helmholtz's decomposition theorem to parameterize the dissipation potential. Several works integrating port-Hamiltonian systems theory into neural networks are found in literature. 
Most of them implement the input-state-output representation\footnote{In the literature, this equation adopts the convention that $\overline{\bm u}^T\overline{\bm y}=\overline{P}_{ext}$ is positive when given to the system by the external components, 
so that $\overline{P}_{ext}=-P_{ext}$ and that $(\overline{\bm u},\overline{\bm y})=(\bm u,-\bm y)$ or $(-\bm u,\bm y)$, contrary to conventions used in \eqref{eq:phs_S_intro} and all this paper}~\cite{van2014port}
\begin{equation}
    \label{eq:symplectic-like formulation phs}
    \begin{cases}
        \dot{\bm x}=(\bm J(\bm x)-\bm R(\bm x))\nabla_{\bm x}H(\bm x)+\bm G(\bm x)\overline{\bm u} \\
        \overline{\bm y}=\bm G^T(\bm x)\nabla H(\bm x)
    \end{cases}
\end{equation}
for designing the computation graph of the dynamics. In this context, Desai et al. \cite{desai2021} introduce the \textit{Port-Hamiltonian Neural Network} (PHNN) to learn damped and controlled systems. Zhong et al. \cite{zhong2020} use this formulation to model 
conservative systems with external control introducing the \textit{Symplectic ODE-Net} (SymODEN). In another work, the same authors generalize the framework of SymODEN to learn also the dissipation~\cite{zhong2020_workshop}. Cherifi et al.~\cite{cherifi2025nonlinearporthamiltonianidentificationinputstateoutput} 
propose a framework based on input-state-output data to learn port-Hamiltonian systems described by \eqref{eq:symplectic-like formulation phs}. Roth et al. \cite{roth2025stableporthamiltonianneuralnetworks} introduce \textit{Stable Port-Hamiltonian Neural Networks} 
(sPHNN) to learn dynamical systems with a single equilibrium under stability guarantees. A similar work to the one presented in this article, but based on pseudo-Hamiltonian formalism, is found in \cite{eidnes2023}. These approaches learn the dynamics through 
\eqref{eq:symplectic-like formulation phs} and impose \textit{a priori} knowledge, either on the expression of the Hamiltonian \cite{zhong2020, zhong2020_workshop, cherifi2025nonlinearporthamiltonianidentificationinputstateoutput, roth2025stableporthamiltonianneuralnetworks,chen2019} 
or on the way dissipation affects the system \cite{eidnes2023, desai2021, cherifi2025nonlinearporthamiltonianidentificationinputstateoutput, roth2025stableporthamiltonianneuralnetworks}. As in any other NODE framework, \eqref{eq:symplectic-like formulation phs} is discretized, 
during or after the training, using a numerical method so that there is no preservation of the power balance and no guarantee of stability. Generally, this has been done either by high-order RK methods~\cite{greydanus2019,greydanus2022,desai2021,cherifi2025nonlinearporthamiltonianidentificationinputstateoutput,roth2025stableporthamiltonianneuralnetworks}, 
but also via symplectic integrators~\cite{chen2019, zhu2020deephamiltoniannetworksbased, dipietro2020sparsesymplecticallyintegratedneural, xiong2022nonseparablesymplecticneuralnetworks, choudhary2025learninggeneralizedhamiltoniansusing}, although the impact on the 
accuracy of one or the other approach has not been thoroughly studied. To the best of our knowledge, the use of discrete gradients methods has not yet been described in the PHNN literature.
\section{Port Hamiltonian Models}\label{sec:Port-Hamiltonian-Model}

\subsection{Port-Hamiltonian formulations}\label{sec:PHS-formulations}

Let $\bm x\in\mathcal{X}\subset\mathbb{R}^{n_x}$ be the state of $\mathcal{S}$ with an 
associated Hamiltonian $H:\mathcal{X}\rightarrow\mathbb{R}$ and $\bm u,\bm y\in\mathbb{R}^{n_u}$ 
be the input and output of the system, respectively. We consider three formulations in 
this work: 

\begin{itemize}

\item[(i)] Semi-explicit PH-Differential-Algebraic-Equations (PH-DAE):
\begin{equation}
\label{eq:phs_S}
    \underbrace{
    \begin{bmatrix} \dot{\bm x} \\ \bm w \\ \bm y \end{bmatrix}
    }_{\text{flows~}\mathcal{F}_i}
= \bm S
\underbrace{
\begin{bmatrix} \nabla H(\bm x) \\ \bm z(\bm w) \\ \bm u \end{bmatrix}
}_{\text{efforts~}\mathcal{E}_i},
\end{equation}
where $\bm S=-\bm S^{T}\in\mathbb{R}^{(n_x+n_w+n_u)\times(n_x+n_w+n_u)}$ and 
$\bm z(\bm w):\mathbb{R}^{n_w}\rightarrow\mathbb{R}^{n_w}$ is the resistive structure of $\mathcal{S}$. 
The skew-symmetry of $\bm S$ guarantees energy conservation and the resistive property of $\bm z(\bm w)$ 
is given by $\bm z(\bm w)^T\bm w \geq 0$, which guarantees passivity.

\item [(ii)] Input-state-output PHS with feedthrough~\cite{van2000l2}:
\begin{equation}
    \label{eq:phs_JR}
    \underbrace{
    \begin{bmatrix} \dot{\bm x} \\ \bm y \end{bmatrix}
    }_{\text{flows~}\mathcal{F}_{ii}}
    = 
\big(\bm J- \bm R(\bm x, \bm u)\big) 
\underbrace{
\begin{bmatrix} \nabla H(\bm x) \\ \bm u \end{bmatrix}
}_{\text{efforts~}\mathcal{E}_{ii}},
\end{equation}
where $\bm J$ and $\bm R(\bm x,\bm u)\in\mathbb{R}^{(n_x+n_u)\times(n_x+n_u)}$ 
satisfy $\bm J=-\bm J^{T}$ and $\bm R=\bm R^{T}\succeq0$. The skew-symmetry of 
$\bm J$ accounts for conservative connections and the positive semi-definiteness 
of $\bm R(\bm x,\bm u)$ guarantees passivity. This formulation can be retrieved 
from (i) in particular when $\bm S_{ww}=0$ and $\bm S_{wx},\bm S_{wu}$ do not depend on $\bm w$. 
It is also an extension of \eqref{eq:symplectic-like formulation phs} for systems 
with direct feed-through.

\item[(iii)] Skew-symmetric gradient PH-DAE: If the function $\bm z(\bm w)=\partial_{\bm w} Z(\bm w)$ is 
derived from a potential $Z$ (often referred to as the Rayleigh potential),
the system (\ref{eq:phs_S}) rewrites as the skew-gradient system~\cite{muller2018power}
\begin{equation}
    \label{eq:skew-gradient-system}
    \underbrace{
    \begin{bmatrix} \dot{\bm x} \\ \bm w \\ \bm y \end{bmatrix}
    }_{\text{flows~}\mathcal{F}_{iii}}
    = \bm S \, \underbrace{\nabla F\!\!\left(\! \begin{bmatrix}
        \bm x \\ \bm w \\ \bm u
    \end{bmatrix}
    \!\right)}_{\text{efforts~}\mathcal{E}_{iii}},
\end{equation}
where $\bm S=-\bm S^{T}\in\mathbb{R}^{(n_x+n_w+n_u)\times(n_x+n_w+n_u)}$ and $F= H(\bm x)+Z(\bm w) + \frac{\bm u^T \bm u}{2}$.
\end{itemize}
Only the formulations (i) and (ii) are considered in the experiments. The skew-symmetric gradient PH-DAE 
is presented as a generalization of the formulation used by Sosanya et al.~\cite{greydanus2022}. 
The PHS formulations (i-iii) are passive (see \ref{appendix-a-passive-power-balance}).

\subsection{Physical constraints enforcement}\label{sec:physical_constraints} 
The general class of physical systems considered in this work is such that the Hamiltonian $H$ is $\mathcal{C}^1$-regular positive definite. 
In addition, we consider the subclass of Hamiltonians of the form
\begin{equation} 
\label{eq:quadratic_H}
H(\bm x)=\frac{1}{2}\bm x^T\bm Q(\bm x)\bm x, 
\end{equation}
where $\bm Q(\bm x)$ is a $\mathcal{C}^1$-regular symmetric positive definite matrix function. As already mentioned, The PHS formulations (i)-(ii) 
satisfy two types of constraints by construction:
\begin{align}
\label{eq:physical_constraints}
\begin{cases} 
    \bm S \text{ (or } \bm J) \text{ skew-symmetric} &\text{(energy conservation)}   \\
    \bm z(\bm w)^T\bm w \geq0, \forall \bm w \text{ (or } \bm R \succeq 0) &\text{(passive laws)}
\end{cases}
\end{align}
In the PH-DAE formulation, we consider the subclass of dissipation function of the form:
\begin{equation} 
\label{eq:z_form}
\bm z(\bm w)=\bm \Gamma(\bm w)\bm w 
\end{equation}
where $\bm \Gamma(\bm w)\in\mathbb{R}^{n_w\times n_w}$ is $\mathcal{C}^0$-regular positive semidefinite matrix function that admits a symmetric/skew-symmetric 
decomposition  $\bm\Gamma(\bm w)=\bm \Gamma^{skew}(\bm w)+\bm\Gamma^{sym}(\bm w)$ with $\bm \Gamma^{skew}=\frac{1}{2}(\bm \Gamma-\bm\Gamma^T)$ and $\bm\Gamma^{sym}=\frac{1}{2}(\bm\Gamma+\bm\Gamma^T)\succeq0$. The resistive property is satisfied as
\begin{equation}
\label{eq: proof-resistive-property}
\begin{split}
    \forall \bm w,\bm z(\bm w)^T\bm w &=\bm w^T\Gamma(\bm w)^T\bm w\\&=\bm w^T\Gamma^{sym}(\bm w)^T\bm w\overset{\Gamma^{sym}\succeq0}{\geq}0
\end{split}
\end{equation}
Note that the classes of systems described by equations \eqref{eq:quadratic_H} to \eqref{eq: proof-resistive-property} cover a large spectrum of physical systems and remain fairly general.

\subsection{Reparameterization of constraints}

We now reparameterize the matrix constraints using the following properties:
\begin{itemize}
    \item any symmetric positive semidefinite (resp. definite) matrix $\bm M^{sym}$ can be written in the form $\bm M^{sym} = \bm L^T \bm L$ where $\bm L$ is a 
    lower triangular matrix with positive (resp. strictly  positive) diagonal coefficients (Cholesky factorization~\cite{press2007numerical}),
    \item any skew-symmetric matrix $\bm M^{skew}$ can be written in the form $\bm M^{skew} = \bm K - \bm K^T$ where $\bm K$ is a strictly 
    lower triangular matrix. 
\end{itemize} 

Then, the physical constraints \eqref{eq:physical_constraints} are naturally satisfied considering the following reparametrizations:
\begin{align}
&\bm Q(\bm x) = \bm L_Q^T(\bm x) \bm L_Q(\bm x)  \label{eq:physical_constraints_H}\\ 
&\bm \Gamma(\bm w) = \bm L_\Gamma^T(\bm w)\bm L_\Gamma(\bm w)+\bm K_\Gamma(\bm w) - \bm K_\Gamma^T(\bm w) \label{eq:physical_constraints_Gamma} \text{ or }\\
&\bm R(\bm x, \bm u) = \bm L_R^T(\bm x, \bm u) \bm L_R(\bm x, \bm u) \label{eq:physical_constraints_R} 
\end{align}
where $\bm L_{Q, R, \Gamma}$, resp. $\bm K_\Gamma$, are lower, resp. strictly lower, triangular matrices.

\begin{table*}[h!]
    \centering
    
    \begin{tabular}{|l|c|c|c|}
    \hline
    System
    & Harmonic oscillator & Duffing oscillator  & Self-sustained oscillator
    \\
     & (linear) & (nonlinear)    & (nonlinear) 
    \\ \hline
    $
    \begin{array}{l}
    \bm x   = \begin{bmatrix}
      q \\ p  
    \end{bmatrix}
    \\
    \bm u
    \end{array}
    $
    & \multicolumn{3}{c|}{
    $\begin{array}{rll}
          q : & \text{centered position (elongation)} & \text{[m]}
          \\
          p : & \text{momentum} & \text{[Kg.m.s$^{-1}$]}
          \\
          u=f : & \text{force (applied or exterior)} & \text{[N]}
    \end{array}$
    }
    \\
    \hline
    \rule{0pt}{0.5cm} $H(\bm x)$     
    & $\displaystyle \frac{p^2}{2m}+\frac{k q^2}{2}$ 
    & $\displaystyle \frac{p^2}{2m}+\frac{k_1 q^2}{2}+\frac{k_3 q^4}{4}$
    & $\displaystyle \frac{p^2}{2m}+\frac{k q^2}{2}$ 
    \\
    
    \hline
    \rule{0pt}{0.8cm}
    $\bm J$
    & \multicolumn{2}{c|}{
    $\bm J = 
    \left[ \begin{array}{cc|c} 0 & 1 & 0 \\ -1 & 0 & -1 \\ \hline 0 & 1 & 0 \end{array}\right]$} & $\bm J = 
    \left[ \begin{array}{cc|c} 0 & -1 & 0 \\ 1 & 0 & 0\\ \hline 0 & 0 & 0 \end{array}\right]$ \\
    
    \hline
    \rule{0pt}{1cm}
    $\bm R$ & \multicolumn{2}{c|}{\centering $\bm R=\alpha\, \left[ \begin{array}{cc|c} 0 & 0 & 0 \\ 0 & 1 & 0 \\ \hline 0 & 0 & 0 \end{array}\right]$~with 
    $\alpha >0$~[N$/ms^{-1}$]
    } & {\centering $\bm R=\Gamma(w)\left[ \begin{array}{cc|c} 1 & 0 & 1 \\ 0 & 0 & 0 \\ \hline 1 & 0 & 1 \end{array}\right]$~\text{with}~$\Gamma(w)>0$~[$m/s$N]
    } \\
    
    \hline
    \rule{0pt}{1cm}
    $\bm S$ & \multicolumn{2}{c|}{\centering 
    $\bm S = 
    \left[ \begin{array}{cc|c|c} 0 & 1 & 0 & 0\\ -1 & 0 & -1 & -1\\ \hline 0 & 1 & 0 & 0 \\ 0 & 1 & 0 & 0\end{array}\right]$
    } & {\centering 
    $\bm S = 
    \left[ \begin{array}{cc|c|c} 0 & -1 & 1 & 0\\ 1 & 0 & 0 & 0\\ \hline -1 & 0 & 0 & -1 \\ 0 & 0 & 1 & 0\end{array}\right]$
    } 
    \\
    \hline
    $\Gamma(w)$ & \multicolumn{2}{c|}{$\alpha$} & $aw^2+bw+c$ \\ \hline 
    $z(w)$ & 
    \multicolumn{2}{c|}{$z(w)=\alpha \, w$}
    & $z(w)=aw^3+bw^2+cw$ 
    \\
    \hline
    $Z(w)$ & 
    \multicolumn{2}{c|}{$Z(w) = \frac{\alpha w^2}{2}$}
    & $Z(w)=\frac{aw^4}{4}+\frac{bw^3}{3}+\frac{cw^2}{2}$
    \\
    \hline
    \rule{0pt}{0.5cm}
    $\bm J_{\bm f}(\bm x)$ & $\begin{bmatrix}
        0 & \frac{1}{m} \\ -k & -\frac{\alpha}{m}
    \end{bmatrix}$ & $\begin{bmatrix}
        0 & \frac{1}{m} \\ -k_1-3k_3q^2 & -\frac{\alpha}{m}
    \end{bmatrix}$ & $\begin{bmatrix}
        -\Gamma'(w)kq-\Gamma(w)k & -\frac{1}{m} \\
        k & 0
    \end{bmatrix}$ \\[3mm]
    \hline 
    \rule{0pt}{0.5cm}
    $\|\bm J_f(\bm x) \|_2$ & \multicolumn{3}{c|}{${\sqrt{\lambda_{max}((\bm J_{\bm f}(\bm x))^T\bm J_{\bm f}(\bm x))}}$~(see the exact details on the \ref{ap: jacobian-quantities})} \\[1.5mm]
    \hline
    \rule{0pt}{0.5cm}
    $\kappa(\bm J_{\bm f}(\bm x))$ & \multicolumn{3}{c|}{${\frac{\sigma_{max}(\bm J^{\bm f}(\bm x))}{\sigma_{min}(\bm J^{\bm f}(\bm x))}=\sqrt{\frac{\lambda_{max}((\bm J_{\bm f}(\bm x))^T\bm J_{\bm f}(\bm x))}{\lambda_{min}((\bm J_{\bm f}(\bm x))^T\bm J_{\bm f}(\bm x))}}}$~(see the exact details on the \ref{ap: jacobian-quantities})} \\
    \hline 
    $\rho(\bm J_{\bm f}(\bm x))$ & \multicolumn{3}{c|}{$1,~\text{if the system is underdamped}$}  \\
    \hline 
    \end{tabular}
    \caption{Different port-Hamiltonian structural elements, Jacobian matrices and related quantities for the considered oscillators.}
    \label{tab:PHS-examples-and-formulations}
\end{table*}

\subsection{Oscillatory physical examples}
Three oscillating systems that satisfy \eqref{eq:quadratic_H}-(\ref{eq:z_form}) are considered: 
\begin{itemize}
    \item the harmonic oscillator, which is 
quadratic in energy, with a linear dissipation; 
\item the Duffing oscillator, which is non-quadratic in energy, with a linear dissipation; 
\item a self-sustained oscillator, 
which is quadratic in energy, with a nonlinear dissipation. 
\end{itemize}
This last physical system is of particular dynamical interest as it is designed to self-oscillate when combined with an adapted constant input, leading to a stable limit cycle~\cite{helie2025}. Table \ref{tab:PHS-examples-and-formulations} shows the different port-Hamiltonian structural elements for each of these systems.

\section{Port-Hamiltonian neural networks}
\label{sec:phnn-models}

\subsection{Port-Hamiltonian neural network models}
\label{subsec:phnn-models-neural-networks}

A port-Hamiltonian neural network (PHNN) parameterizes the right-hand side of \eqref{eq:phs_S}-(\ref{eq:phs_JR}), 
modeling the Hamiltonian and the dissipative terms with two distinct neural networks. In this work, we assume 
that the interconnection matrices $\bm{J}$ and $\bm{S}$, are given a priori. The Hamiltonian function is 
parameterized identically for each formulation by a neural network $H_{\theta_H}(\bm x)$, and its gradient is 
derived by auto-differentiation as proposed in the seminal HNN \cite{greydanus2019}. The implementation of the 
dissipative term depends on the formulation considered (i or ii), which yields two different PHNN architectures:

\begin{itemize}
    \item[(i)] PHNN-S models \eqref{eq:phs_S}, where the dissipation function $\bm z(\bm w):\mathbb{R}^{n_w}\rightarrow\mathbb{R}^{n_w}$ 
    is parameterized by a neural network $\bm z_{\theta_z}(\bm w)$, implementing the dynamic function:
    
        \begin{equation}
        \label{eq:f_theta-PHS-DAF}
            \bm f_{\theta}(\bm x,\bm u)=\bm f^{S}_{\theta}(\bm x,\bm u)=\bm S\begin{bmatrix}
                \nabla H_{\theta_H}(\bm x) \\ \bm z_{\theta_z}(\bm w) \\ \bm u
            \end{bmatrix}
        \end{equation}
        
    \item[(ii)] PHNN-JR models \eqref{eq:phs_JR}, where the coefficients of the dissipation matrix $\bm R(\bm x,\bm u)$ 
    are parameterized by the outputs a neural network $\bm R_{\theta_R}(\bm x,\bm u)$, implementing the dynamic function:
    
        \begin{equation}
        \label{eq:f_theta-PHS-JR}
            \bm f_{\theta}(\bm x,\bm u)=\bm f^{JR}_{\theta}(\bm x,\bm u)=(\bm J-\bm R_{\theta_R}(\bm x,\bm u))\begin{bmatrix}
                \nabla H_{\theta_H}(\bm x) \\ \bm u
            \end{bmatrix}
    \end{equation}

\end{itemize}

\begin{figure*}[h!]
    \centering
        \subfigure[Architecture of PHNN-S]{\includegraphics[width=0.48\textwidth]{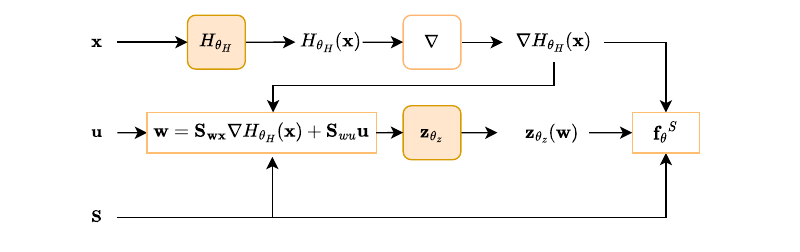}}
        \hfill 
        \subfigure[Architecture of PHNN-JR]{\includegraphics[width=0.48\textwidth]{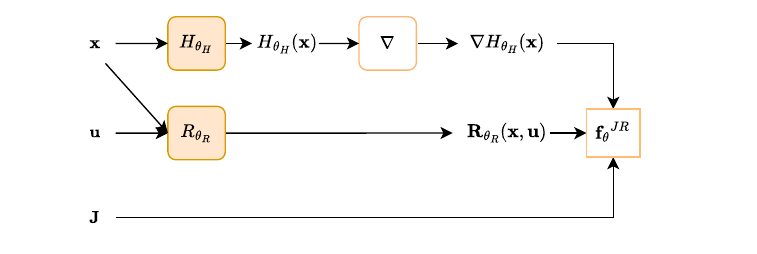}}
        \caption{Architecture of the two PHNN models considered in this work. White boxes with orange contour denote fixed algebraic operations whereas orange boxes indicate the trainable parameters.}
        \label{fig:phnn-architectures}
\end{figure*}

Figure \ref{fig:phnn-architectures} shows the architecture of the two different PHNN models according to each of the PHS formulations.

\subsection{Physical constraints enforcement in neural networks}

According to Section \ref{sec:physical_constraints}, the PHS model is passive if the Hamiltonian and the dissipative terms take the form of \eqref{eq:physical_constraints_H} and \eqref{eq:physical_constraints_Gamma} 
or \eqref{eq:physical_constraints_R}. The non-zero coefficients of a $n \times n$ lower (resp. strictly lower) triangular matrix can be parameterized by the $n(n+1)/2$ (resp. $n(n-1)/2$) outputs of a neural network 
$\bm L_{\theta_L}$ (resp. $\bm {K}_{\theta_K}$), implementing the functions:
\begin{equation}
\begin{cases} 
H_{\theta_H}(\bm x)=\frac{1}{2}\bm x^T\bm L_{\theta_{L_H}}^T(\bm x)\bm L_{\theta_{L_H}}(\bm x)\bm x\\
\bm z_{\theta_z}(\bm w)=\left(\bm L_{\theta_{L_z}}^T(\bm w)\bm L_{\theta_{L_z}}(\bm w)+\bm K_{\theta_{K_z}}(\bm w) - \bm K_{\theta_{K_z}}^T(\bm w)\right)\bm w \\
\bm R_{\theta_{R}}(\bm x,\bm u)=\bm L_{\theta_{L_R}}^T(\bm x,\bm u)\bm L_{\theta_{L_R}}^T(\bm x,\bm u)
\end{cases}
\end{equation}
This parametrization is widely used in the literature~\cite{zhong2020,cherifi2025nonlinearporthamiltonianidentificationinputstateoutput,roth2025stableporthamiltonianneuralnetworks,
schwerdtner2021porthamiltonianidentificationnoisyfrequency,schwerdtner2022structurepreservingmodelorderreduction}. Table \ref{tab: inputs, outputs, train loss and learned objects} shows the fixed and learned objects 
for the port-Hamiltonian networks based on formulations (i) and (ii).
\begin{table*}[h!]
    \renewcommand{\arraystretch}{2} 
    \centering
    \begin{tabular}{|c|c|c|}
    \hline \centering \diagbox[width=4.2cm]{Objects}{Formulation}
 &
    
     (i)\hspace{0.2cm} $\begin{bmatrix}
        \bm{\Dot{x}} \\ \bm{y}
        \end{bmatrix}=(\bm{J}-\bm{R}(\bm{x},\bm{u}))\begin{bmatrix}
        \nabla H(\bm{x}) \\ \bm{u}
        \end{bmatrix}$ &
        
        (ii) \hspace{0.2cm}$\begin{bmatrix}
        \bm{\Dot{x}} \\ \bm{w} \\ \bm{y}
        \end{bmatrix}=\bm{S}\begin{bmatrix}
        \nabla H(\bm{x}) \\ \bm{z}(\bm{w}) \\ \bm{u}
        \end{bmatrix}$  \\
        \hline
        $H(\bm{x})$ & \multicolumn{2}{|c|}{$ H_{\theta_H}(\bm x)=\frac{1}{2}\bm x^T\bm L_{\theta_{L_H}}(\bm x)^T\bm L_{\theta_{L_H}}(\bm x)\bm x$} \\
        
        \hline
        Dissipation  &   $\bm R_{\theta_R}(\bm w)=\bm L_{\theta_{L_R}}(\bm x,\bm u)^{T}\bm L_{\theta_{L_R}}(\bm x,\bm u)$ &  $\begin{aligned} \bm z_{\theta_z}(\bm w)&=\bm \Gamma_{\theta_{\Gamma}}(\bm w)\bm w \\ \bm \Gamma_{\theta_{\Gamma}}(\bm w)&=(\bm L_{\theta_{L_z}}^T(\bm w)\bm L_{\theta_{L_z}}(\bm w))\\ &+(\bm K_{\theta_{K_z}}(\bm w)-\bm K_{\theta_{K_z}}^T(\bm w))\end{aligned}$  \\
        \hline
         \makecell{Interconnection \\ matrix}  & $\bm J\in\mathbb{R}^{(n_x+n_u)\times (n_x+n_u)}$ & $\bm S\in\mathbb{R}^{(n_x+n_w+n_u)\times (n_x+n_w+n_u)}$ \\
        \hline
    \end{tabular}
    \caption{Fixed and learned objects for the different port-Hamiltonian networks based on formulations (i) and (ii).}
    \label{tab: inputs, outputs, train loss and learned objects}
\end{table*}

\subsection{Continuous- vs. discrete-time models}

A \textit{continuous model} $\bm f_{\theta}(\bm x_t,\bm u_t)$ represents the 
field $\dot{\bm x}_t$. During training, it learns an estimate of $\dot{\bm x}_t$ by minimizing the loss 
\begin{equation}
\label{eq: continuous-loss}
    \mathcal{L}_{c}=\| \dot{\bm x}_t - \bm f_{\theta}(\bm x_t,\bm u_t) \|_2^2
\end{equation}
During inference, it is integrated as the right-hand side of the ODE using a numerical method. \\

A \textit{discrete model} $\bm g_{\theta,h}(\bm x_n,\bm u_n)$ represents the next state $\bm x_{n+1}$. During training, 
it learns an estimate of $\bm x_{n+1}$ by minimizing the loss
\begin{equation}
\label{eq: discrete-loss}
    \mathcal{L}_{d}=\left\| \frac{\bm x_{n+1} - \bm g_{\theta,h}(\bm x_n,\bm u_n)}{h}\right\|_1
\end{equation}
via backpropagation through a differentiable numerical ODE solver using a discretization step $h$. We introduce \eqref{eq: discrete-loss} inspired by Zhu et al.~\cite{zhu2022numericalintegrationneuralordinary}, 
as it was shown to have theoretical guarantees for NeuralODEs using explicit Runge Kutta methods (see \cite{zhu2022numericalintegrationneuralordinary}, Theorem 3.1). During inference, the discrete model is applied autoregressively to predict future states. Examples of discrete PHNN model learning framework can be found in literature \cite{zhong2020, zhong2020_workshop, neary2023, chen2019}.\\

\begin{figure*}[t!]
    \centering
    \includegraphics[width=0.6\linewidth]{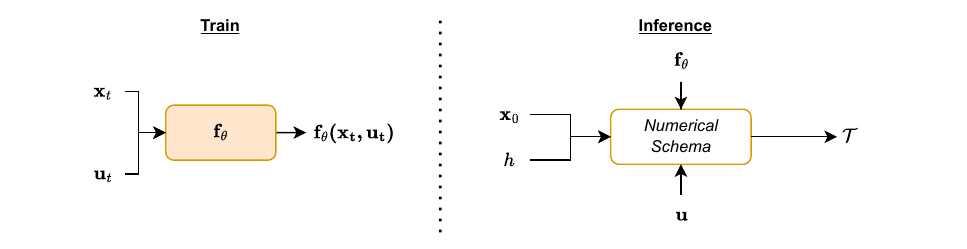}
    \caption{Training and inference diagram for the continuous models $\bm f_{\theta}$.}
    \label{fig:continuous models diagram}
\end{figure*} 

\begin{figure*}[t!]
    \centering
        \includegraphics[width=\linewidth]{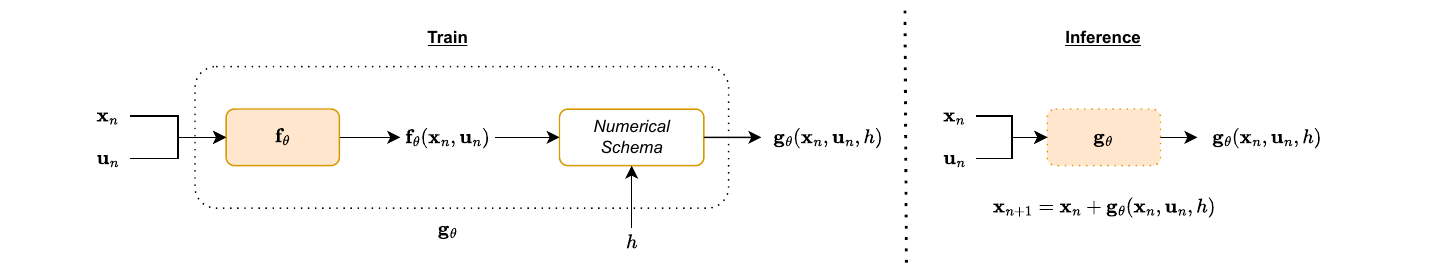}
        \caption{Training and inference diagram for the discrete models $\bm g_{\theta}$.}
        \label{fig:discrete models diagram}
    \end{figure*}

As it can be seen on Figures \ref{fig:continuous models diagram} and \ref{fig:discrete models diagram}, the neural network parameterizing the continuous- and the discrete-time models is the same. However, the ODE solver is used at different phases: during training and inference for discrete-time model, but only during 
inference for continuous-time model. In this work, we focus on discrete models, where the two different ODE solver schemes compared are shown in Table \ref{tab:numerical-schema}. This decision is based on the fact that in practice it is generally impossible to have access to state derivatives $\dot{\bm x}_t$. Following 
\eqref{eq: discrete-loss}, training discrete models does not require state derivative measurements and relies only on state measurements. Nevertheless, their performance during inference is limited to using the same discretization step $h$ as during training.
\begin{table}[h!]
\centering
\scalebox{0.7}{\begin{tabular}{ |c|c|c|c|} 
\hline
Accuracy order & Name & Characteristic & Power-balance \\
 \hline
2 & Explicit midpoint (RK2) & Explicit &  $\times$\\
2 & Discrete gradient (DG) & Inverse-explicit & $\checkmark$  \\
 \hline
 
\end{tabular}}
\caption{Accuracy order, name, characteristic and whether power-balance is respected for the chosen numerical schema.}
\label{tab:numerical-schema}
\end{table} 

\subsection{Jacobian regularization on PHNN}
Let $\bm{J}_{\bm f_{\theta}}(\bm{x},\bm u)$ denote 
the Jacobian of a PHNN, where $\bm{f}_{\theta}(\bm{x},\bm{u})$ corresponds to \eqref{eq:f_theta-PHS-DAF}-(\ref{eq:f_theta-PHS-JR}). 
Ideally, we would like $\bm f_{\theta}(\bm x,\bm u)$ not only to be  physically-consistent, which is already structurally guaranteed in 
the PHNNs, but also \textit{numerically smooth}. This means that, among the different physically-consistent solutions learned by
 our models, those that are ill-posed, ill-conditioned or highly stiff should be penalized through some kind of Jacobian 
regularization, as proposed in the NeuralODE literature~\cite{dupont2019augmentedneuralodes,finlay2020trainneuralodeworld,josias2022}. 
Following this approach, we soft-constraint the training loss $\mathcal{L}_d$ to avoid non-desirable numerical behaviors in the PHNNs. \\

We experiment with regularizing the spectral norm $\|\bm{J}_{\bm f_{\theta}}\|_2$, the condition number $\kappa(\bm{J}_{\bm f_{\theta}})$ 
or the stiffness ratio $\rho(\bm{J}_{\bm f_{\theta}})$ as defined in Section \ref{sec:dynamical_systems}. Penalizing higher spectral norm values promotes solutions with a lower Lipschitz 
constant whereas doing so with condition number rewards well-conditioned solutions. The rationale for introducing $\rho(\mathbf{J}_{\bm f_{\theta}})$ 
as a new regularization term stems from the observation that ill-conditioned Jacobians, often associated with a high NFE in adaptive numerical 
methods, are characteristic of stiff ODE systems. Thus, penalizing large values of $\rho(\bm{J}_{\bm f_{\theta}})$ discourages the training procedure 
from converging toward stiff dynamics, thereby promoting indirectly better-conditioned ODE solutions. During the training, these regularization 
quantities are obtained after evaluating the Jacobian at the input data from a given batch, making the same assumption as in~\cite{josias2022}: 
\textit{that regularizing dynamics at the input data can lead to regularized dynamics across the entire solution space}. The following loss functions 
with Jacobian regularization terms are introduced:
\begin{equation}
    \label{eq:condition-number-loss}
    \mathcal{L}_{CN}=\mathcal{L}_d+\lambda_{CN}\|\kappa(\bm{J}_{\bm f_{\theta}}(\bm{x},\bm u))\|^2_2
\end{equation}
\begin{equation}
    \label{eq:spectral-norm-loss}
    \mathcal{L}_{SN}=\mathcal{L}_d+\lambda_{SN}\|\bm{J}_{\bm f_{\theta}}(\bm{x},\bm u)\|_2
\end{equation}
\begin{equation}
    \label{eq:stiffness-ratio-loss}
    \mathcal{L}_{SR}=\mathcal{L}_d+\lambda_{SR}\|\rho(\bm{J}_{\bm f_{\theta}}(\bm{x},\bm u))-1\|^2_2
\end{equation}
We set $\lambda_{SN}=10^{-6}$, $\lambda_{CN}=10^{-6}$ and $\lambda_{SR}=10^{-4}$ in later experiments, and, as in~\cite{josias2022}, we add $10^{-6}$ 
to the denominator of $\kappa(\bm{J}_{\bm f_{\theta}})$ and $\rho(\bm{J}_{\bm f_{\theta}})$ to avoid underflow in the early stages of training.
\section{Experiments}\label{sec:experiments}

\subsection{Our questions}
Given a discrete trajectory $\mathcal{T}$ and its initial state $\bm x_0$, the objective of this work is to design 
PHNN models capable of accurately generating a trajectory $\tilde{\mathcal{T}}=\{ \tilde{\bm x}_n\}_{n\in \mathbb{N}^{+}}$, with $\tilde{\bm x}_0=\bm x_0$, 
that approximates $\mathcal{T}$ as closely as possible. Each of the PHNN models presented in this work can be characterized by the: 
\begin{enumerate}
    \item PHS formulation: PHNN-S or PHNN-JR.
    \item Type of model: Continuous or discrete.
    \item Numerical method: RK2 or DG (see Table \ref{tab:numerical-schema}).
    \item Number of trainable parameters: 800, 2k or 10k.
    \item Number of training points: 25, 100 or 400.
    \item Loss function: $\mathcal{L}_d$, $\mathcal{L}_{SN}$, $\mathcal{L}_{CN}$ or $\mathcal{L}_{SR}$.
\end{enumerate}
We set a \textit{baseline} NODE~\cite{chen2018} model and conduct several experiments to compare its performance to 
the proposed familiy of PHNNs models.  \\ 

\textit{Study I}: \textbf{Impact of the number of training points}. The first study focuses on how the different PHNN architectures 
compare at inference stage with different numerical methods for the systems in Table \ref{tab:PHS-examples-and-formulations}. For 
this study, experiments are carried out for 25, 100 and 400 training points and the smallest number of training parameters is 
considered ($\approx 800$). 

\vspace{\baselineskip}
\textit{Study II}: \textbf{Impact of the number of trainable parameters}. The second study uses the same combination of PHNN 
architectures and numerical methods but, opposite to Study I, the number of training points is fixed to 25 and the number of training 
parameters varies between \textit{small}($\approx 800$), \textit{medium}($\approx 4k$) and \textit{large}($\approx 20k$). Tables 
\ref{tab:reference-size-parameters-neural-networks}-\ref{tab:low-size-parameters-neural-networks} in \ref{appendix:implementation-details} 
detail the design choices of the neural network components for each model. In each case, the criteria is to have a similar number of 
parameters for each formulation, which enables a fair comparison between them.

\vspace{\baselineskip}
\textit{Study III}: \textbf{Impact of the Jacobian regularizations}. The third study focuses on how the different Jacobian 
regularizations in \eqref{eq:condition-number-loss}-(\ref{eq:stiffness-ratio-loss}) influence the inference performance of the models. 
In this case, the same combination of PHNN architectures, numerical methods, training parameters and training points as in Study I is considered.

\begin{table*}[t!]
\centering
\begin{tabular}{|c|c|c|c|}
\hline
System & \textbf{Harmonic Oscillator} & \textbf{Duffing Oscillator} & \multicolumn{1}{|c|}{\textbf{Self-sustained oscillator}}   \\ 
\hline
Intrinsic parameters & $(m,k)$ & $(m,k_1,k_3)$ & $(m,k)$  \\
\hline
$f_0$~[Hz] & \multicolumn{3}{|c|}{$1Hz$} \\
\hline

$m$~[kg] & \multicolumn{3}{|c|}{$0.16$} \\
\hline
$1/k,1/k_1$~[$N/m$] & \multicolumn{3}{|c|}{$0.16$} \\
\hline
$k_3$~[$N/m^3$] & - & $100k_1$ & -  \\
\hline
$z(w)$ & \multicolumn{2}{|c|}{$0.9w$~[N]} & $1.3w^3-4w^2+3w$~$[m/s]$ \\
\hline
\hline

Constant control $u$~[N] & \multicolumn{2}{|c|}{Such that the equilibrium point energy lies in $\mathcal{I}_{E_0}$} & Such that system reaches a limit cycle \\
\hline 
$\mathcal{I}_{E_0},\mathcal{I}_{E_{eq}}$~[J] & \multicolumn{3}{|c|}{$[0.1,1]$}  \\
\hline
${sr}_{gen}$~[Hz] & \multicolumn{3}{|c|}{$400f_0$} \\
\hline
Generation numerical method & \multicolumn{3}{|c|}{Gonzalez discrete gradient \eqref{eq: gonzalez-discrete-gradient}} \\
\hline
\hline 
$\alpha$ & \multicolumn{3}{|c|}{$0.31$}  \\
\hline
$\beta$ & \multicolumn{3}{|c|}{5}  \\
\hline
${D}_{train}$~[s] & \multicolumn{3}{|c|}{$\alpha T_0$} \\
\hline
${D}_{infer}$~[s] & \multicolumn{3}{|c|}{$\beta T_0$} \\
\hline 
${sr}_{train},{sr}_{infer}$~[Hz] & \multicolumn{3}{|c|}{$100f_0$} \\
\hline \hline 
$N_{train}$ & \multicolumn{3}{|c|}{$[25,100,400]$} \\
\hline
Batch size & \multicolumn{3}{|c|}{$64$} \\
\hline 
$N_{eval}$ & \multicolumn{3}{|c|}{$2500$} \\
\hline 
$N_{infer}$ & \multicolumn{3}{|c|}{$100$} \\
\hline 
\end{tabular}
\caption{Implementation hyperparameters for the different experiments.}
\label{tab:physical_parameters_}
\end{table*} 

\begin{figure*}[h!]
    \centering
        \subfigure[Harmonic oscillator]{\includegraphics[width=0.32\textwidth]{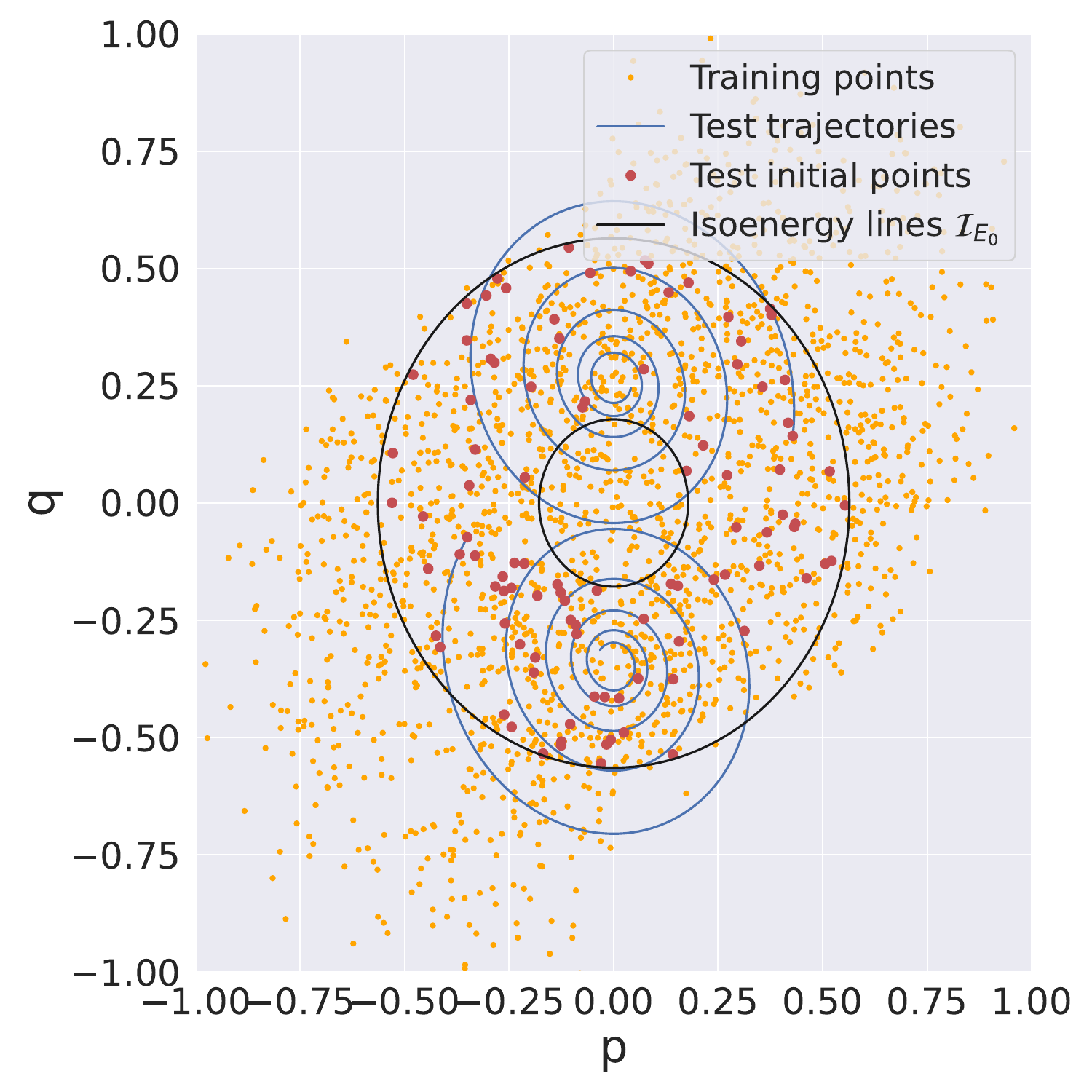}}
        \hfill 
        \subfigure[Duffing oscillator]{\includegraphics[width=0.32\textwidth]{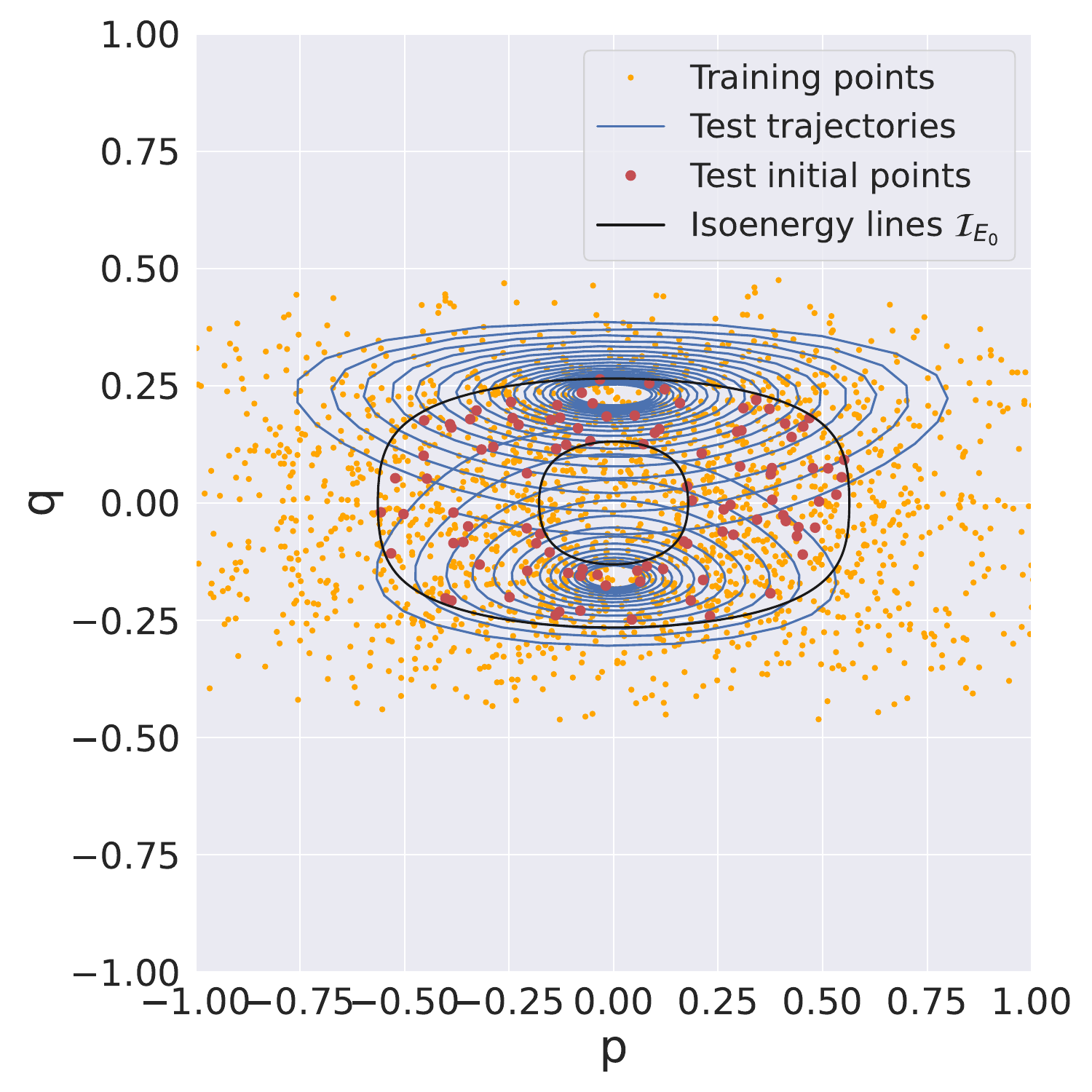}}
        \hfill 
        \subfigure[Self-sustained oscillator]{\includegraphics[width=0.32\textwidth]{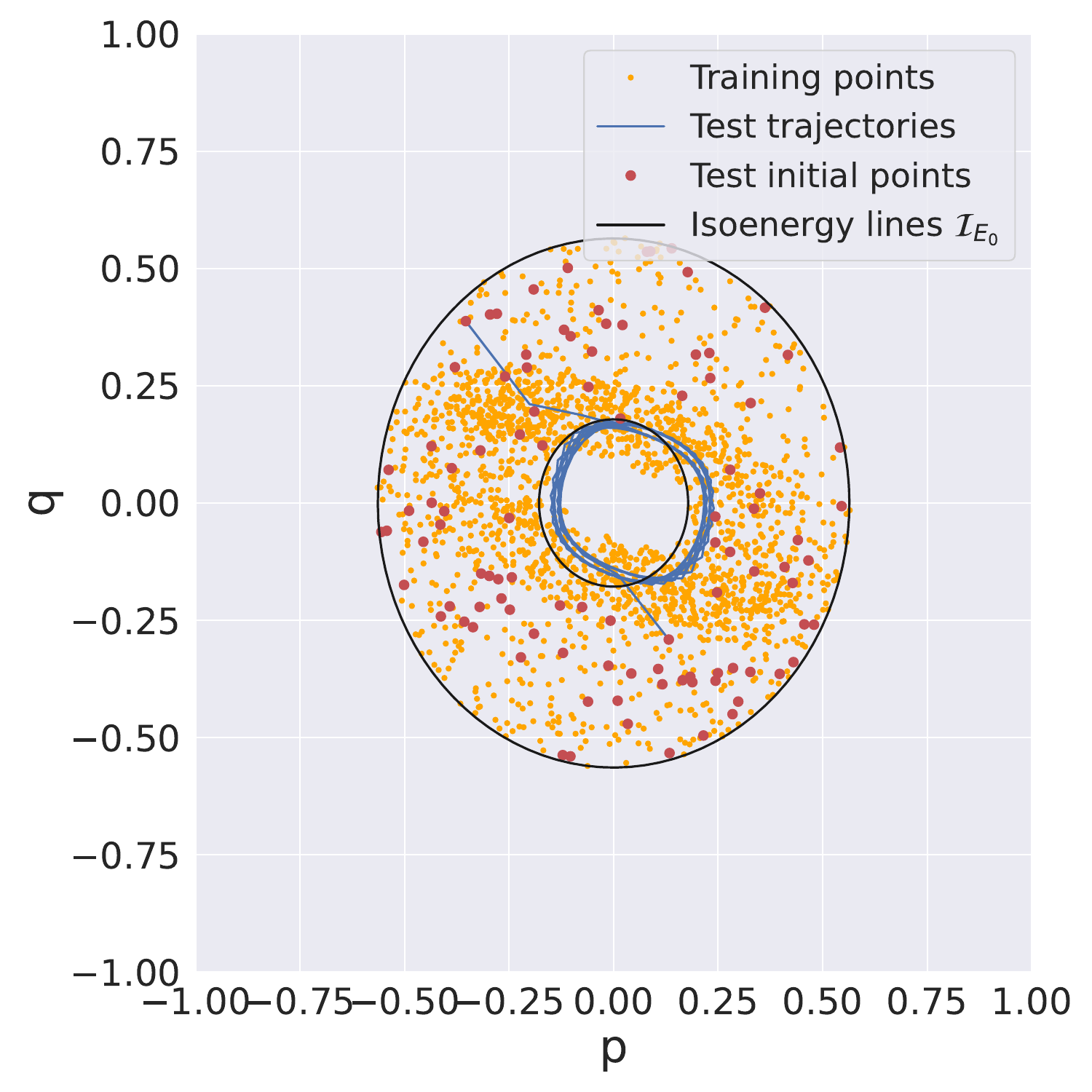}}
        \caption{Training points, test initial points and two complete test trajectories for each of the three oscillatory systems. Note that in the case of the harmonic 
        and Duffing oscillator, the applied control shifted the equilibrium point from $(p,q)=(0,0)$ whereas in the case of the self-sustained oscillator, it stabilizes 
        the trajectories  in a limit cycle.}
        \label{fig:sampling-distribution}
    \end{figure*}

\subsection{Implementation details}
Table \ref{tab:physical_parameters_} shows the parameters considered for the generation of trajectories for each physical system as well 
as the training and inference parameters considered for the neural network experiments, whereas Figure \ref{fig:sampling-distribution} shows 
the distribution of the training and test points as well as two complete test trajectories for each oscillatory system. \\

\textbf{Generation of synthetic data}. The dataset $\mathcal{T}$ consists of $N_{traj}=12500$ trajectories. Each trajectory is generated 
synthetically according to the following criteria:
\begin{enumerate}
    \item[i] \textbf{Initial condition}: We fix $E_{min},E_{max}$, and we sample an initial condition $\bm x_0$ such that
    \begin{equation}
        H(\bm x_0)\in\mathcal{I}_{E_0}=[E_{min},E_{max}]
    \end{equation}
    with $E_{min},E_{max}\in\mathbb{R}^{+}$ (see Table \ref{tab:physical_parameters_} for numerical values).
    \item[ii] \textbf{Control}: For the harmonic and Duffing oscillators, external control is applied as a constant force $\bm u$ so that \begin{equation}
        H(\bm x^{*})\in\mathcal{I}_{E_{eq}}= [E_{eq}^{min},E_{eq}^{max}]
    \end{equation}
    where $\bm x^{*}$ is the equilibrium point of the system. In control theory, this is usually referred to as \textit{potential energy shaping}~\cite{ortega2001}. For the self-sustained oscillator, the external constant control $\bm u$ is applied so that the system stabilizes in a limit cycle around $\bm x^{*}$ 
    (see Table \ref{tab:physical_parameters_} for numerical values).
\end{enumerate}

 Further details about the sampling of the initial conditions and the control design can be found in \ref{appendix:control-design}. As for the 
 intrinsic parameters of each system, they are chosen to satisfy that the natural frequency $f_0$ is $1Hz$. For the harmonic and Duffing oscillator, a linear 
 dissipation function $z(w)=cw$ is considered. In this case, $c$ is chosen so that the damped harmonic oscillator has dissipated $99\%$ of its energy in $D=5T_0=5s$, 
 where $T_0=1/f_0$ is the natural period. The same value of $c$ is considered for the Duffing oscillator. As for the self-sustained oscillator, the considered nonlinear 
 dissipation function is $z(w)=1.3w^3-4w^2+3w$. In each physical system, the numerical scheme considered for the generation of the trajectories is the 
 \textit{Gonzalez discrete gradient}~\eqref{eq: gonzalez-discrete-gradient} for a duration $D$ and a sampling rate ${sr}_{gen}=400f_0$. \\

 \begin{figure*}[h!]
    \centering
    \includegraphics[width=0.8\linewidth]{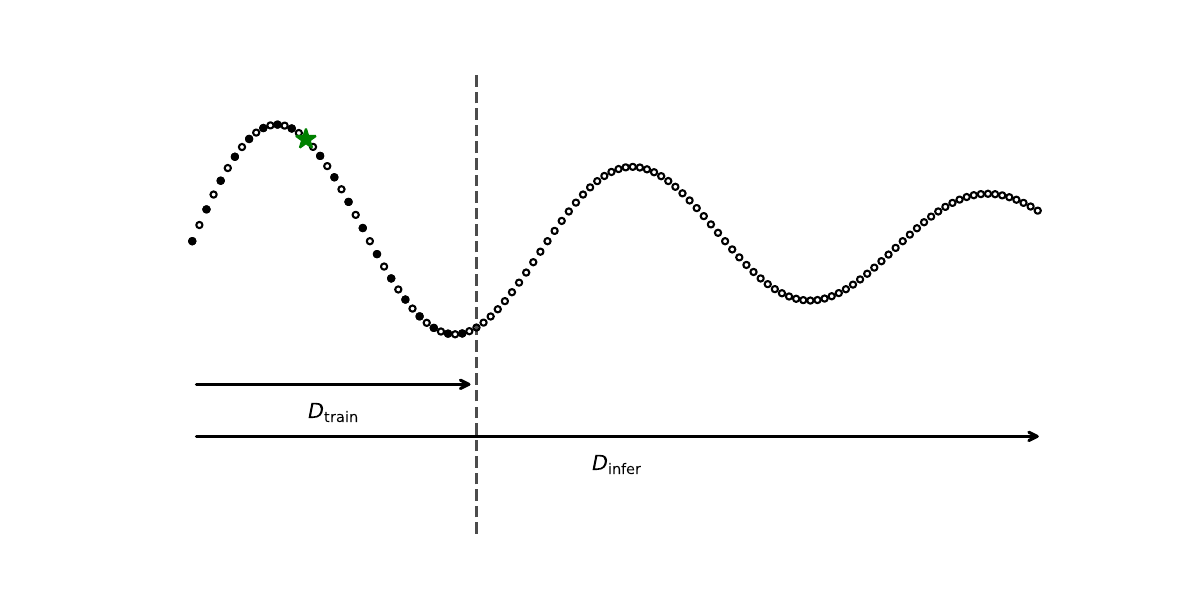}
    \caption{Schematic sampling and dataset construction procedure. A trajectory generated at sampling frequency ${sr}_{gen}$ 
    over a duration $D=D_{infer}=\beta T_0$ is shown as white dot markers. From this trajectory, a training point, highlighted with a green star marker, 
    is uniformly sampled from a subset of samples, shown as black dot markers, obtained at frequency ${sr}_{train}$ and restricted to $t\leq\alpha T_0$. 
    The training horizon $D_{train}$ and the inference horizon $D_{infer}$ are indicated by arrows, with the vertical dashed line marking the end of the 
    training interval.}
    \label{fig:sampling_technique_description}
\end{figure*}

\textbf{Training details}. Let ${D}_{train}=\alpha T_0$ and ${sr}_{train}=\gamma f_0<sr_{gen}$ be the training time and data sampling rate, respectively, where $\alpha$ and $\gamma$ are hyperparameters. 
For each system, we consider $\alpha$ such that the harmonic oscillator has dissipated $25\%$ of its initial energy in the absence of control and $\gamma=100$. The set 
of training trajectories $\mathcal{T}_{train}$ is built considering $N_{train}$ trajectories from $\mathcal{T}$ up until a duration $T_{train}$ sub-sampled at ${sr}_{train}$. 
Once $\mathcal{T}_{train}$ is created, one point $\xi^{i}_{train}=((\bm x_n^i,\bm u_n^i),\bm x_{n+1}^i)$ is uniformly sampled from each trajectory $\tau_{train}^i\in\mathcal{T}_{train}$ (see Figure \ref{fig:sampling_technique_description} for a graphical description). The set of points 
$\xi_{train}=\{\xi_{train}^i\}_{i=1,...,N_{train}}$ constitutes the training dataset. Note that training on a set of isolated points sampled at random from complete trajectories is closer to experimental conditions than training on a set of complete trajectories that might be more difficult to measure accurately in practice. It also probably adds complexity for the model, as isolated points are not obviously correlated. Experiments are carried out for $N_{train}\in[25,100,400]$. After each training 
epoch, models are validated on another dataset $\xi_{eval}$, which is constructed as $\xi_{train}$ but for 2500 trajectories. For each of the datasets, experiments are 
carried out for 10 runs with different model initializations using the Adam optimizer \cite{kingma2014adam} with a batch size $64$ for $50k$ optimizer steps with a learning rate 
of $10^{-3}$. \\

\textbf{Inference details}. The inference dataset $\mathcal{T}_{infer}$ is constructed taking a subset of $N_{infer}=100$ trajectories from $\mathcal{T}$ for a duration 
${D}_{infer}=\beta T_0$ and a sampling rate ${sr}_{infer}={sr}_{train}$. Note that $\beta \gg \alpha$, i.e. the model has to generate in inference longer sequences that it has been trained for.  \\

The performance of the model is then assessed generating autoregressively a trajectory $\tilde{\tau}$ 
from each of the initial conditions of the trajectories in $\mathcal{T}_{infer}$ and comparing it with the reference trajectory $\tau$. For each model initialization, 
we compute the Mean Squared Error (MSE) between the reference and predicted trajectories:
\begin{equation}
        \label{eq:inference-loss}
        \mathcal{L}_{traj}=\frac{1}{N_{inf}\cdot T_{inf}\cdot f_{inf}}\sum_{i=1}^{N_{inf}}\|\tau_{i}-\tilde{\tau}_{i} \|^2
    \end{equation}
\begin{figure*}[ht]
    \centering
    \includegraphics[width=\linewidth]{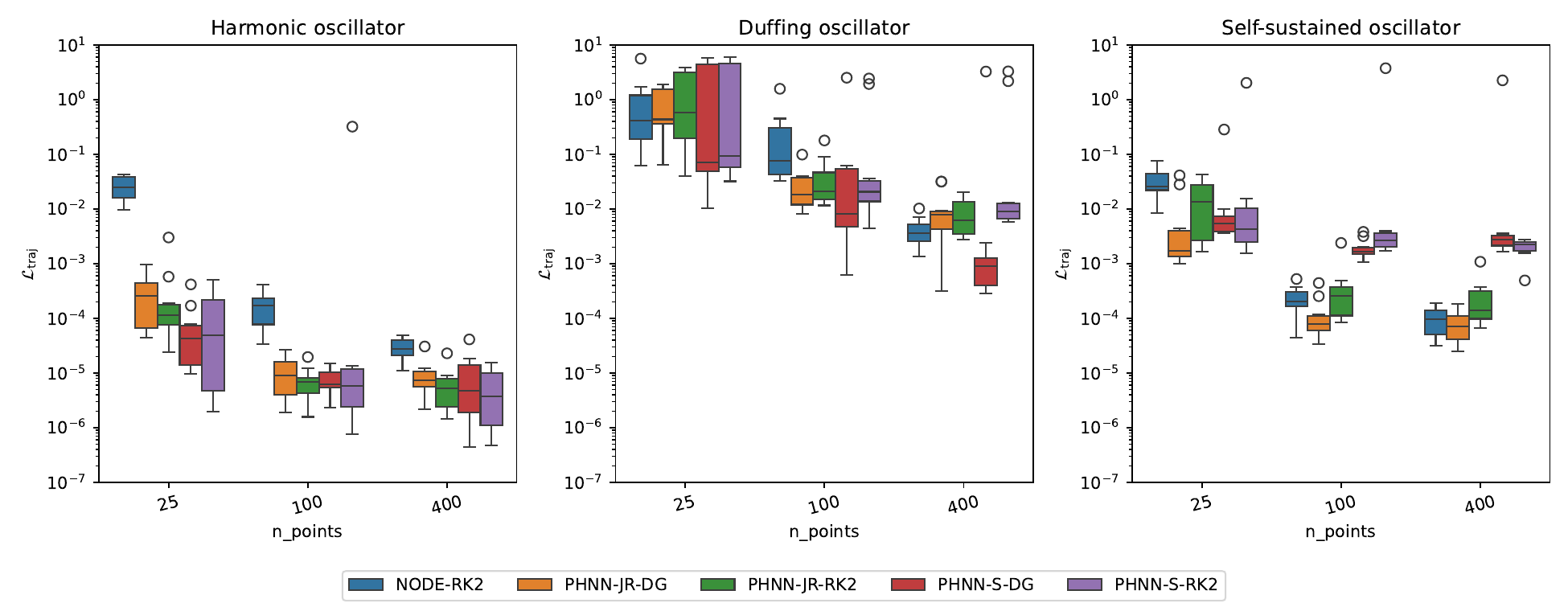}
    \caption{Boxplot of the inference errors for the three different oscillators and varying numbers of training points (small models). From left to right: harmonic, Duffing and self-sustained oscillators.}
    \label{fig:study-1-training-points}
\end{figure*}

\section{Results and discussions}\label{sec:results}
\begin{figure*}[h!]
    \centering
        \subfigure[Harmonic oscillator]{\includegraphics[width=0.32\textwidth]{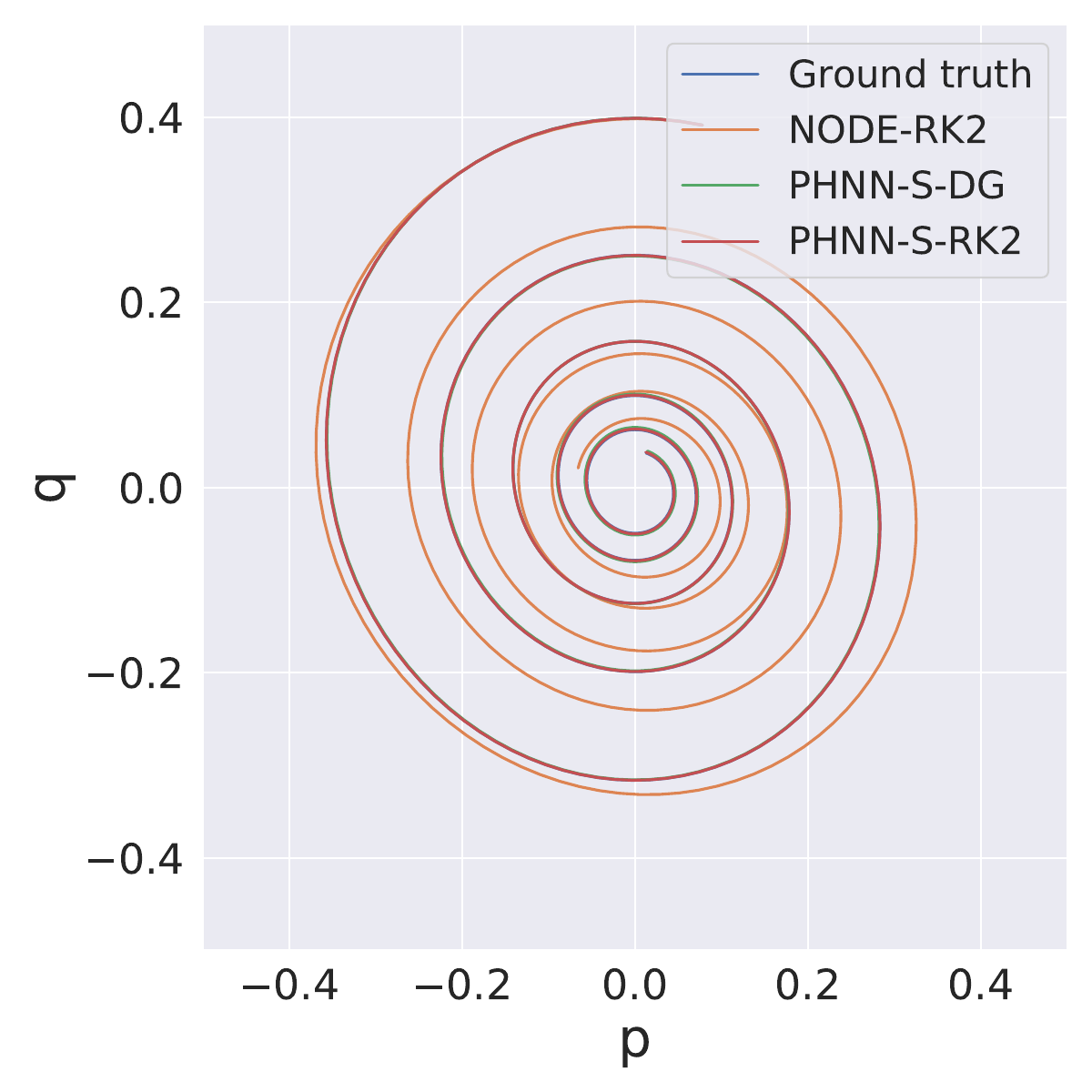}}
        \hfill 
        \subfigure[Duffing oscillator]{\includegraphics[width=0.32\textwidth]{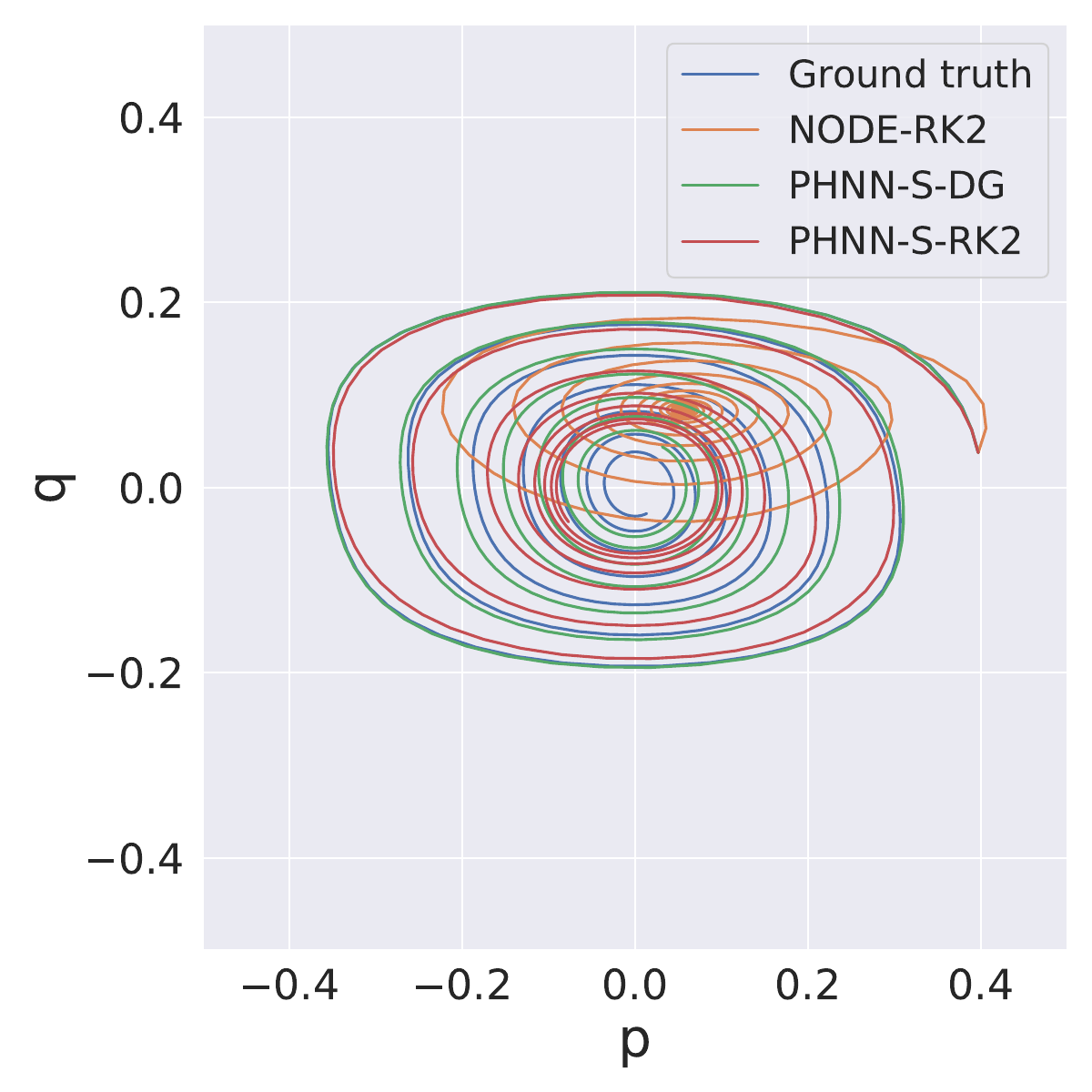}}
        \hfill 
        \subfigure[Self-sustained oscillator]{\includegraphics[width=0.32\textwidth]{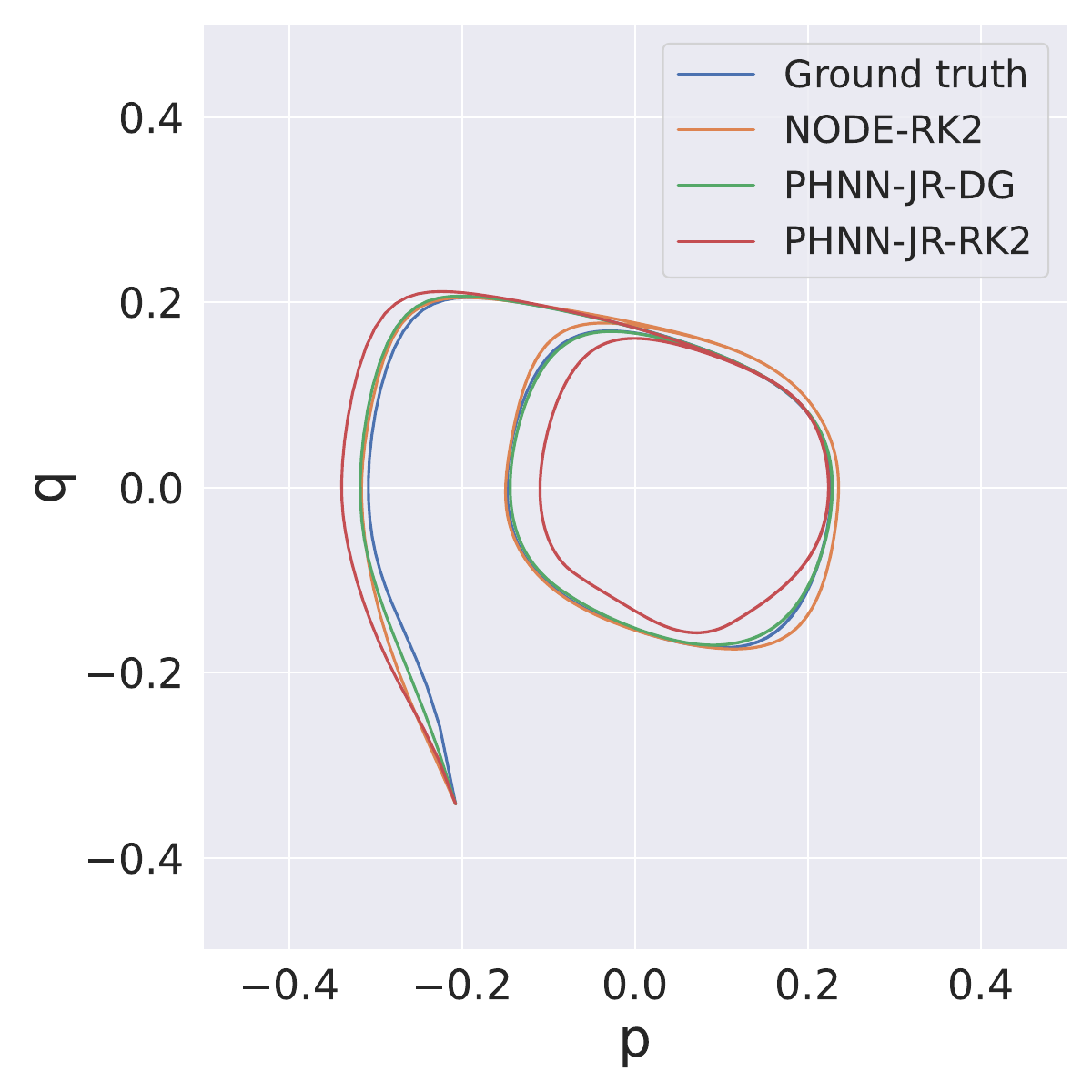}}
        \caption{Comparison of the learned trajectory discretizations for each oscillatory system starting on $\bm x_0$ such that 
        $H(\bm x_0)=0.5J$. (\textbf{Left}) NODE and PHNN-S models trained on $N_{train}=25$ from the harmonic oscillator. (\textbf{Center}) NODE and PHNN-S models 
        trained on $N_{train}=100$ from the Duffing oscillator. (\textbf{Right}) NODE and PHNN-JR models trained on $N_{train}=25$ from the self-sustained oscillator. For the harmonic and Duffing 
        oscillator, the trajectory is obtained with $\bm u=0$; and for the self-sustained, with $\bm u$ such that the system stabilizes 
        in a limit cycle.}
        \label{fig:compare-one-trajectory-study-1}
    \end{figure*}
\subsection{Impact of the number of training points}
Figure \ref{fig:study-1-training-points} shows the inference error (see Table \ref{tab: results-study-I} in the \ref{ap: results-experiments} 
for the detailed numerical values) for the different models across the three values of $N_{train}$ when learning the three 
controlled oscillatory systems with small model size. As expected, increasing the number of points decreases the error for all systems 
and architectures, except for when we train the PHNN-S model on the self-sustained oscillator where the performance seems to be 
fairly constant. The NODE architecture is consistently outperformed by one of the physically constrained models. Regarding the 
differences between the two numerical methods, the discrete gradient outperforms the RK2 in the low-data regime for the harmonic 
oscillator, and for every number of training points in the case of the nonlinear oscillators. For the Duffing oscillator, all 
performance errors show a high dispersion (as shown by large IQR) in the low-data regime, reflecting the model sensitivity to weight initialization when very few training points are available. Finally, whereas the PHNN-S model achieves the lowest performance error for the harmonic and 
Duffing oscillator, it is not the case for the self-sustained oscillator, where it is outperformed by the PHNN-JR and the NODE. Figure \ref{fig:compare-one-trajectory-study-1} 
compares the NODE with the best PHNN architecture for each oscillator when discretizing a trajectory starting on an initial condition $\bm x_0$ such that $H(\bm x_0)=0.5J$.

\subsection{Impact of the number of trainable parameters}

\begin{figure*}[h!]
    \centering
    \includegraphics[width=\linewidth]{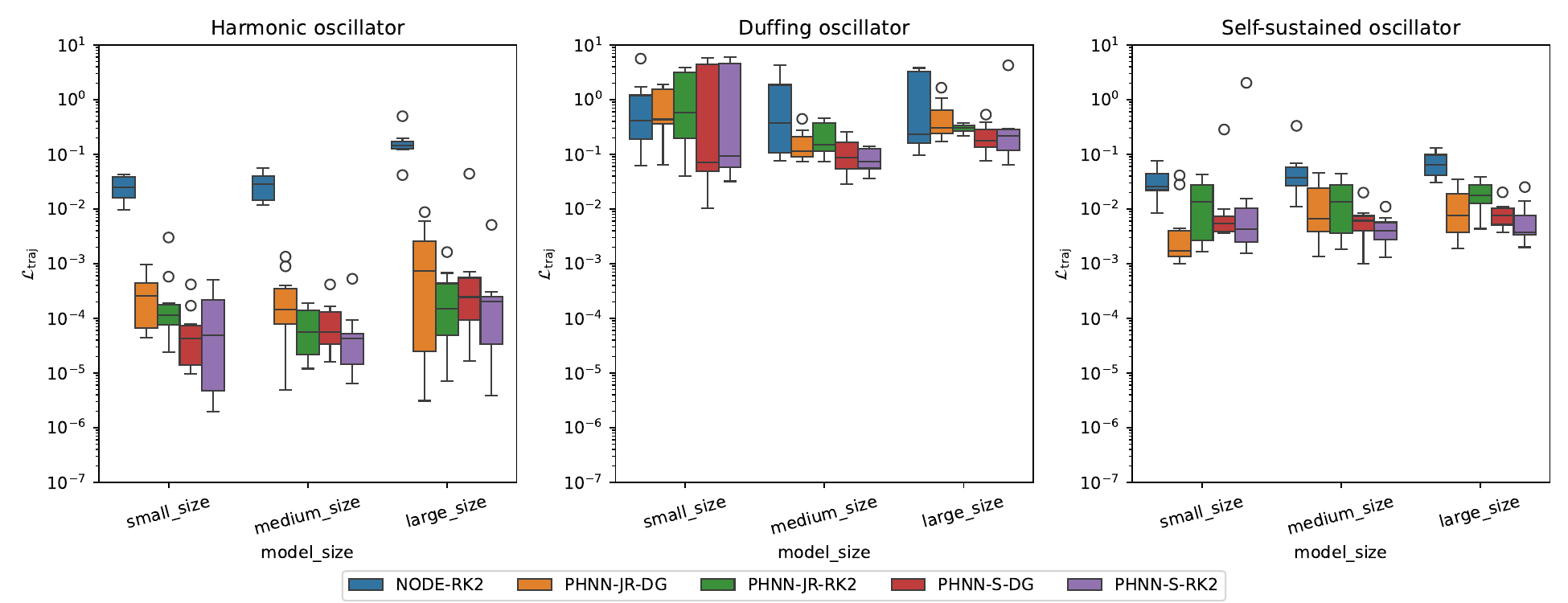}
    \caption{Boxplot of the inference errors for the three different oscillators and model sizes ($N_{train}=25$). From left to right: harmonic, Duffing and self-sustained oscillators.}
    \label{fig:study-2-number-of-parameters}
\end{figure*}

\begin{figure*}[h!]
\centering
    \subfigure[Harmonic oscillator]{\includegraphics[width=0.32\textwidth]{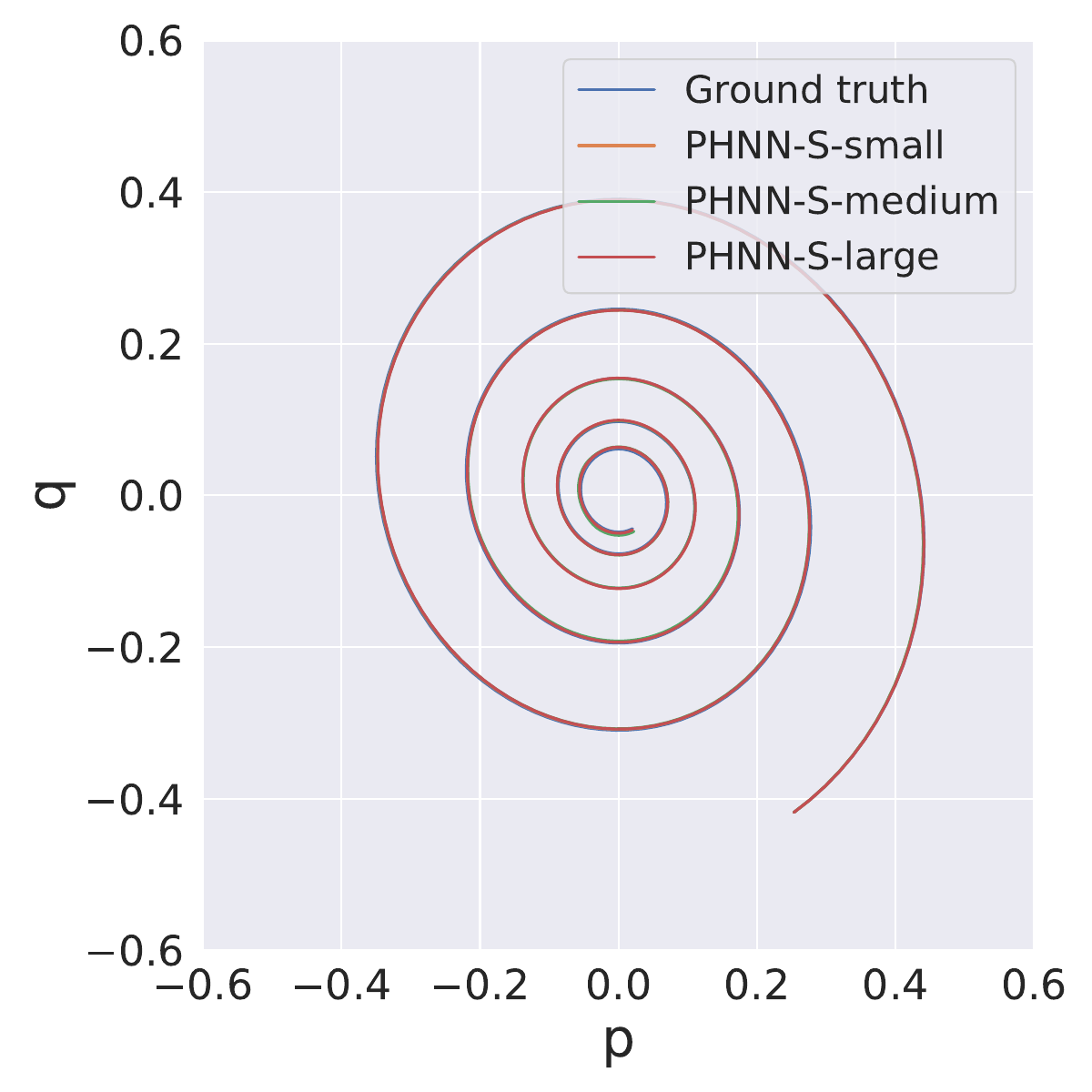}}
    \hfill 
    \subfigure[Duffing oscillator]{\includegraphics[width=0.32\textwidth]{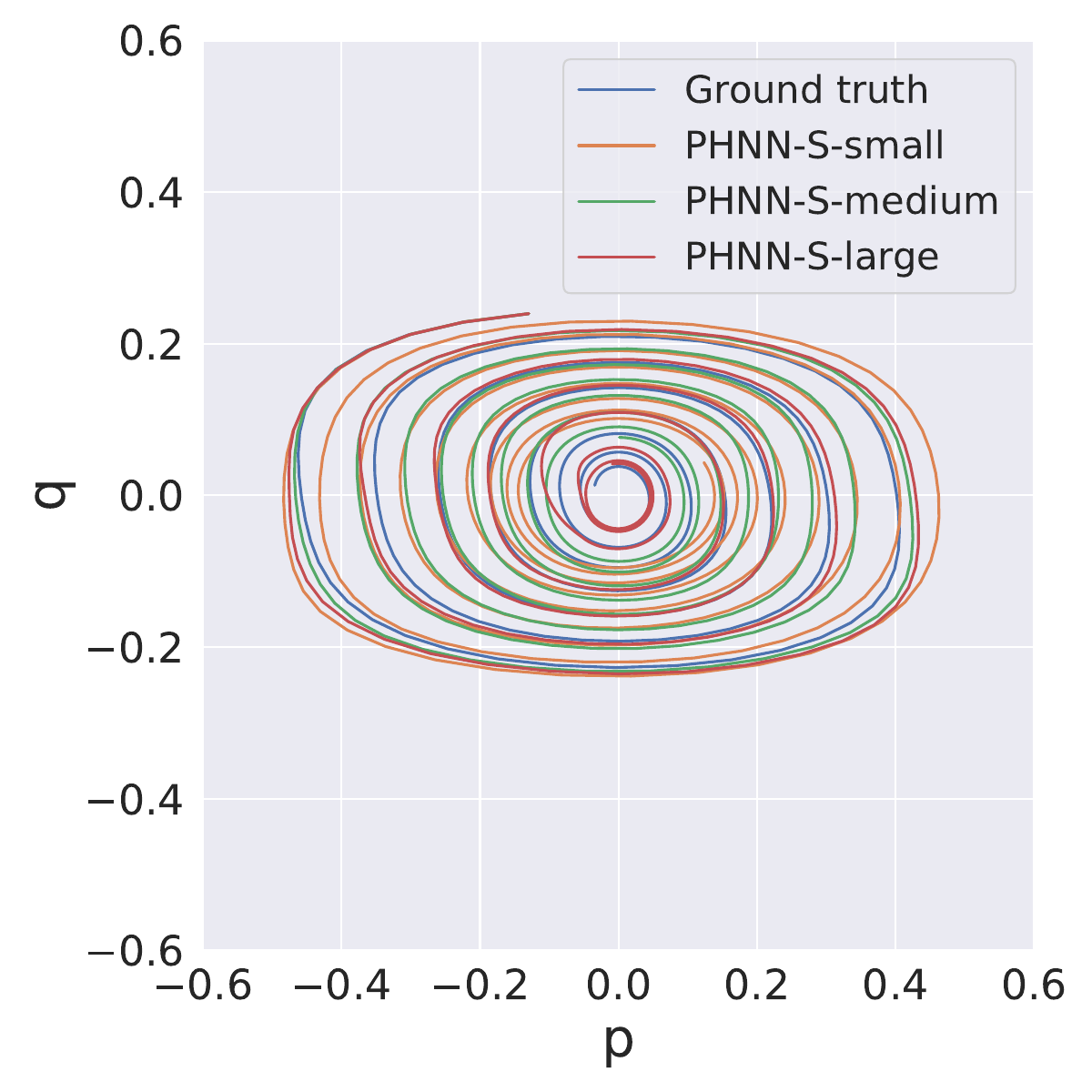}}
    \hfill 
    \subfigure[Self-sustained oscillator]{\includegraphics[width=0.32\textwidth]{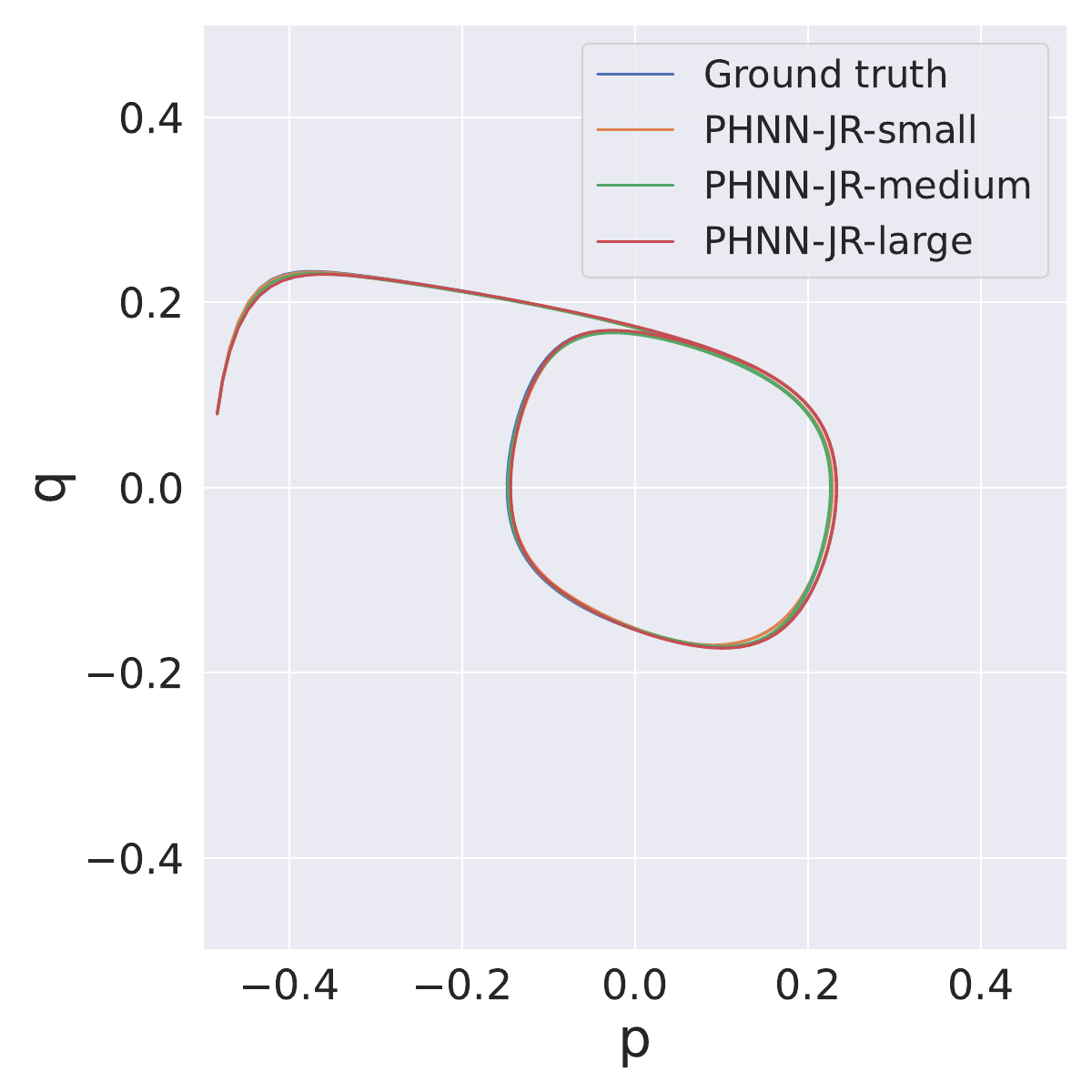}}
    \caption{Comparison of the learned trajectory discretizations by the discrete gradient for each oscillatory system starting 
    on $\bm x_0$ such that $H(\bm x_0)=0.75J$. (\textbf{Left}) PHNN-S models trained on $N_{train}=25$ from the harmonic oscillator. (\textbf{Center}) 
    PHNN-S models trained on $N_{train}=25$ from the Duffing oscillator. (\textbf{Right}) PHNN-JR models trained on $N_{train}=25$ from the self-sustained oscillator. For the harmonic and Duffing 
    oscillator, the trajectory is obtained with $\bm u=0$; and for the self-sustained, with $\bm u$ such that the system stabilizes 
    in a limit cycle.}
    \label{fig:compare-one-trajectory-study-2}
\end{figure*}

Figure \ref{fig:study-2-number-of-parameters} shows the inference error (see Table \ref{tab: results-study-II} in the \ref{ap: results-experiments} 
for the detailed numerical values) for the different models across the three regimes (small, medium, large) of trainable parameters when learning the 
three controlled oscillatory systems with $N_{train}=25$. In this case, increasing the number of trainable parameters increases 
the error in general for all the systems and architectures. This can be explained by the fact that there are not enough training 
samples to fit larger models and, thus, medium and large size models directly overfit and are not able to generalize correctly. 
The NODE is again consistently outperformed by one of the physically constrained models. Regarding the impact of using the 
different numerical methods, the discrete gradient only outperforms the RK2 when training small models. Interestingly, increasing 
the number of trainable parameters has a positive effect on the dispersion of the results for the Duffing oscillator, as we 
observe how the IQR is reduced with respect to the small size setting. Figure \ref{fig:compare-one-trajectory-study-2} 
shows how the trajectory discretization from the best PHNN architecture for each oscillator changes with respect to the number of trainable parameters 
starting on an initial condition $\bm x_0$ such that $H(\bm x_0)=0.5J$. 

\begin{figure*}[h!]
    \centering
        \subfigure[Condition number]{\includegraphics[width=0.32\textwidth]{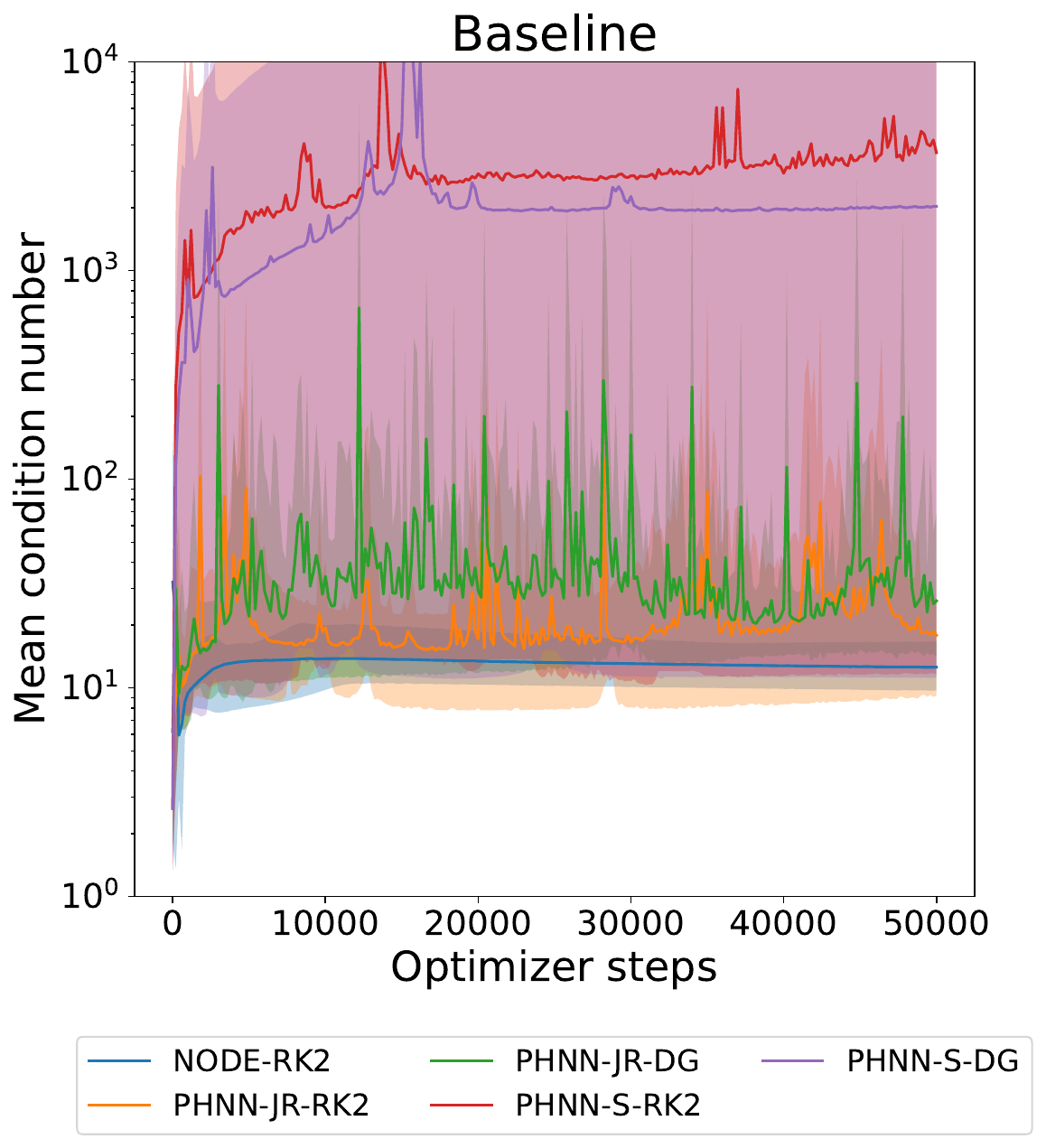}}
        \hfill 
        \subfigure[Spectral norm]{\includegraphics[width=0.32\textwidth]{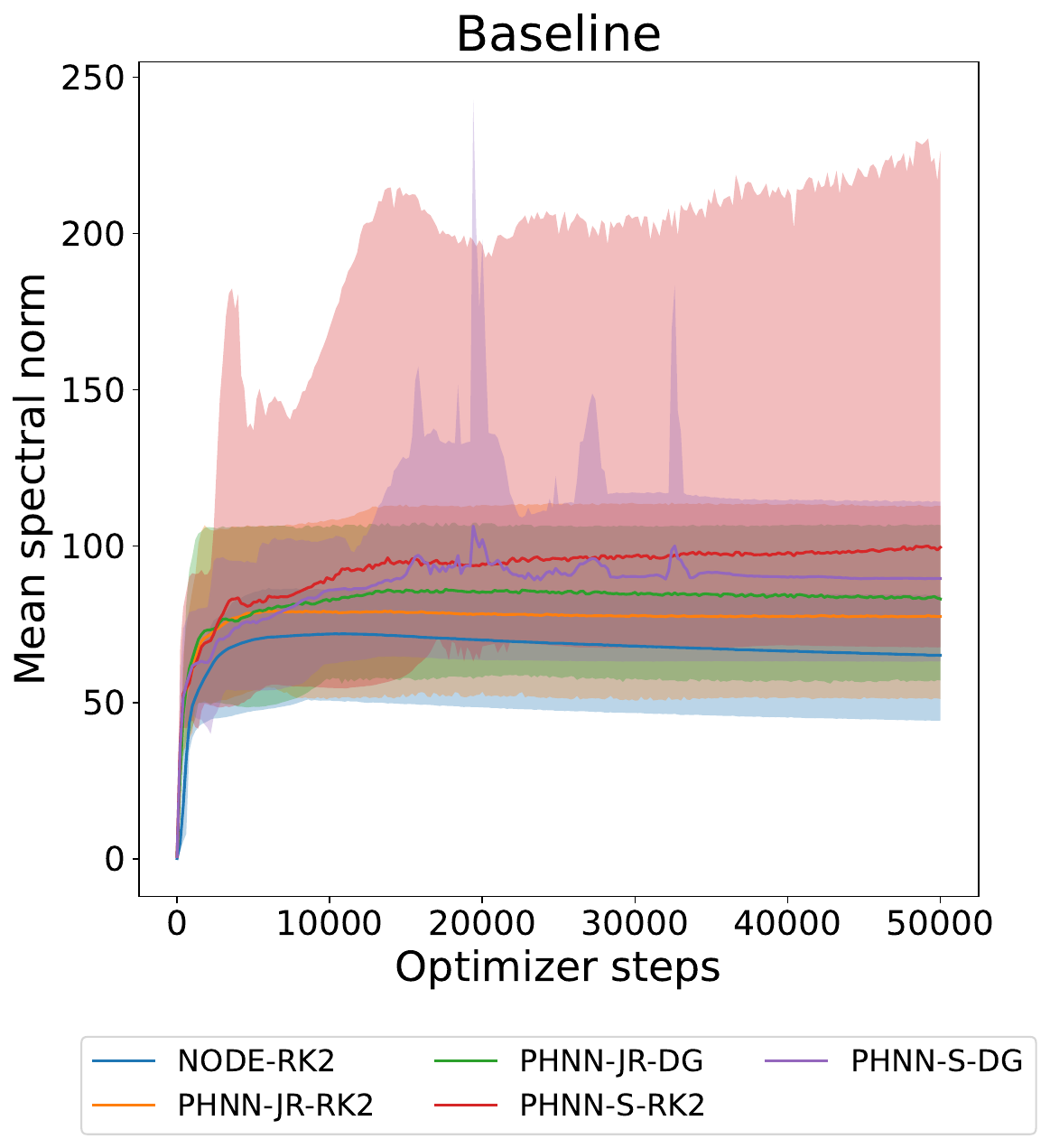}}
        \hfill 
        \subfigure[Stiffness ratio]{\includegraphics[width=0.32\textwidth]{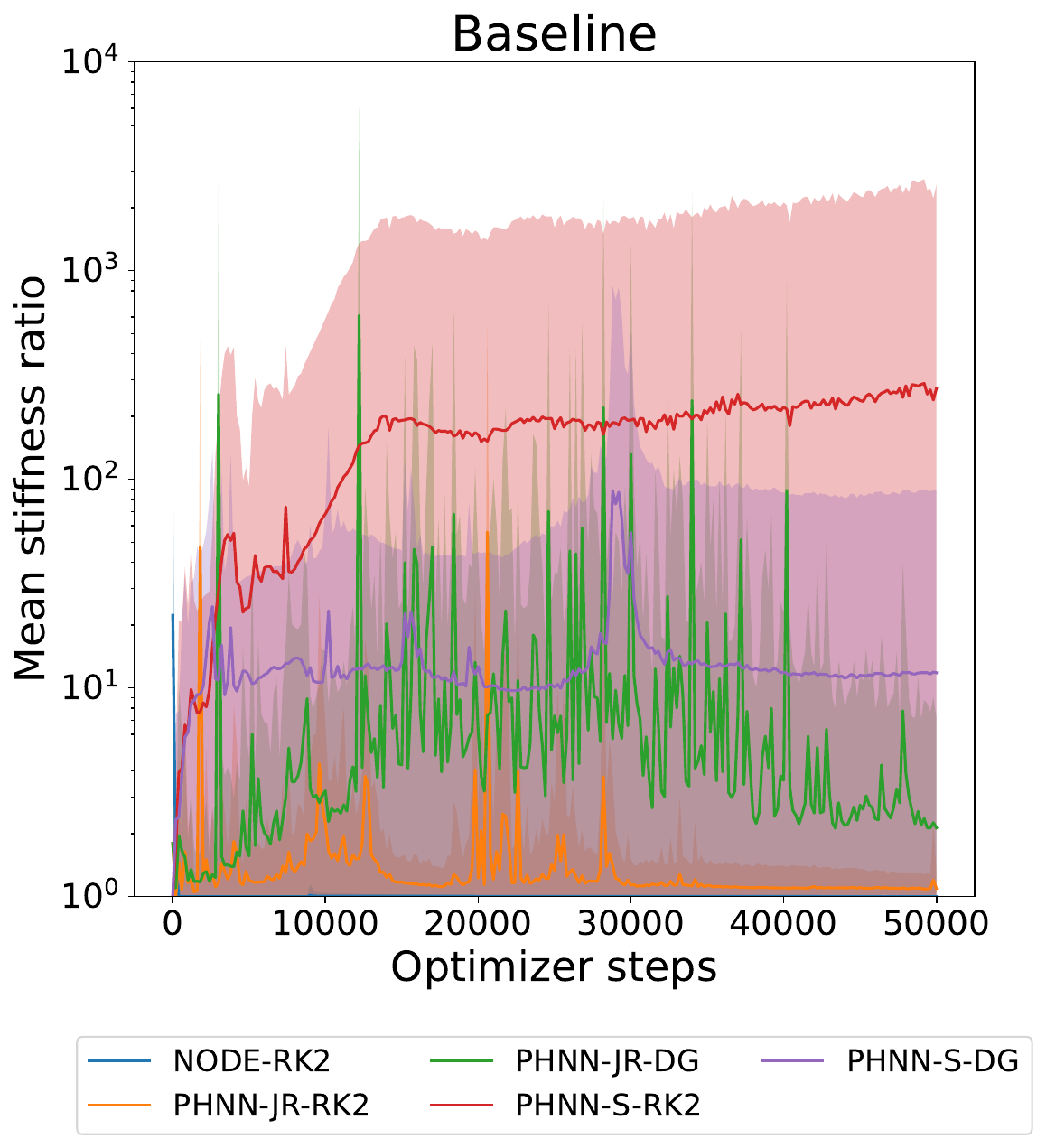}}
        \caption{Mean condition number (\textbf{left}), spectral norm (\textbf{center}) and stiffness ratio (\textbf{right}) per 
        optimizer step during the training for the different architectures when modeling the Duffing oscillator with $N_{train}=25$ 
        points and no Jacobian regularization (BL). For each optimizer step, the corresponding mean value (solid line) is obtained 
        computing the average over the 10 model initializations, with the shaded area indicating the range between the minimum and 
        maximum values.}
        \label{fig:jacobian_quantities_baseline}
    \end{figure*}

\subsection{Impact of the Jacobian regularizations}

\begin{figure*}[h!]
    \centering
    \includegraphics[width=\linewidth]{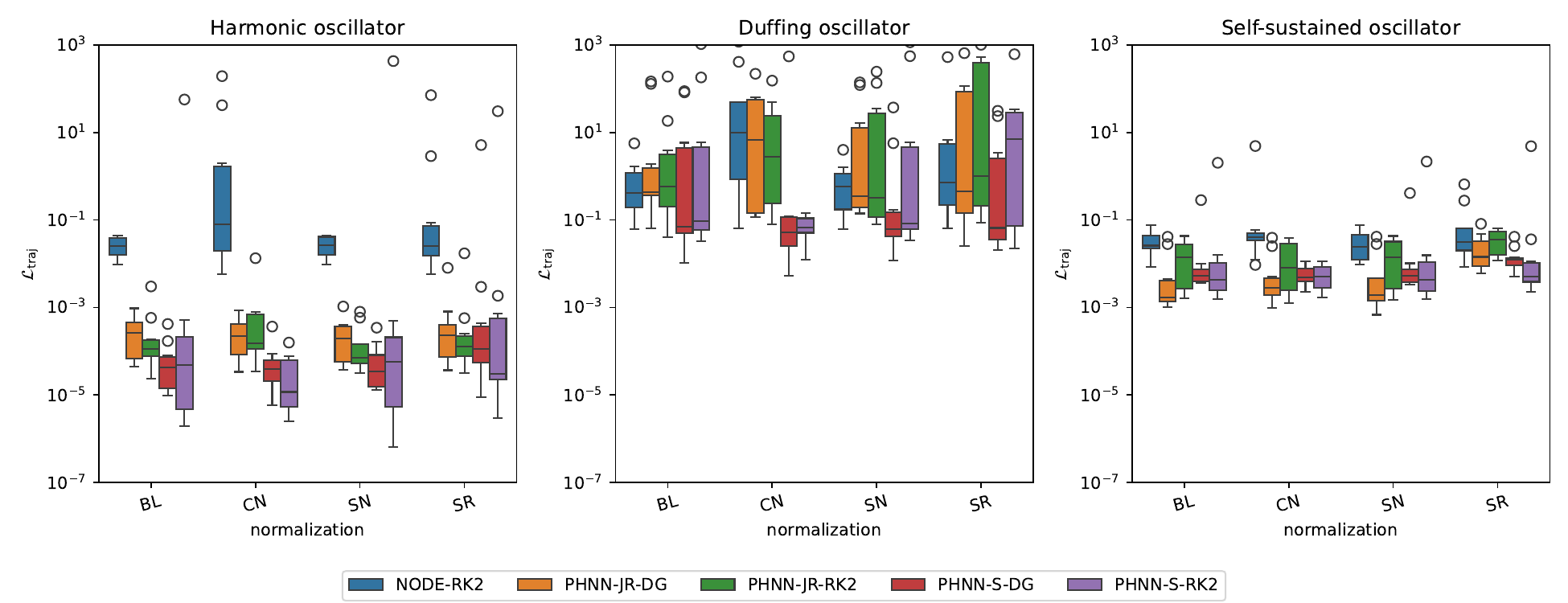}
    \caption{Boxplot of the inference errors for the three different oscillators and normalizations ($N_{train}=25$ and small 
    number of parameters). Notation: BL refers to the baseline (i.e. no Jacobian regularization); CN, to the condition number 
    regularization \eqref{eq:condition-number-loss}; SN, to the spectral norm regularization \eqref{eq:spectral-norm-loss}; 
    and SR, to the stiffness ratio regularization \eqref{eq:stiffness-ratio-loss}. From left to right: harmonic, Duffing and self-sustained oscillators.}
    \label{fig:study_3_jacobian_regularization}
\end{figure*}

\begin{figure*}[h!]
\centering
    \subfigure[Condition number]{\includegraphics[width=0.32\textwidth]{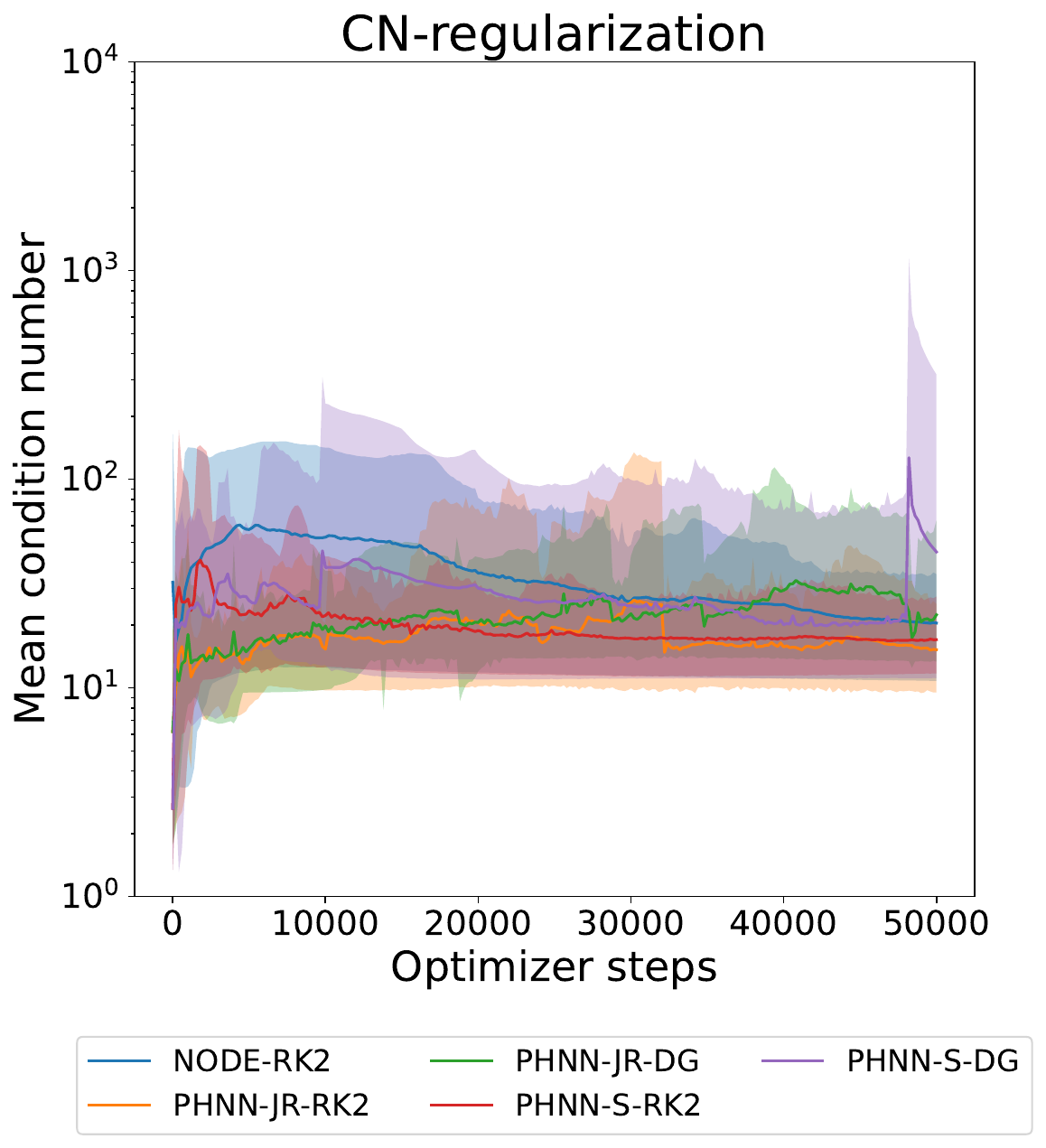}}
    \hfill 
    \subfigure[Spectral norm]{\includegraphics[width=0.32\textwidth]{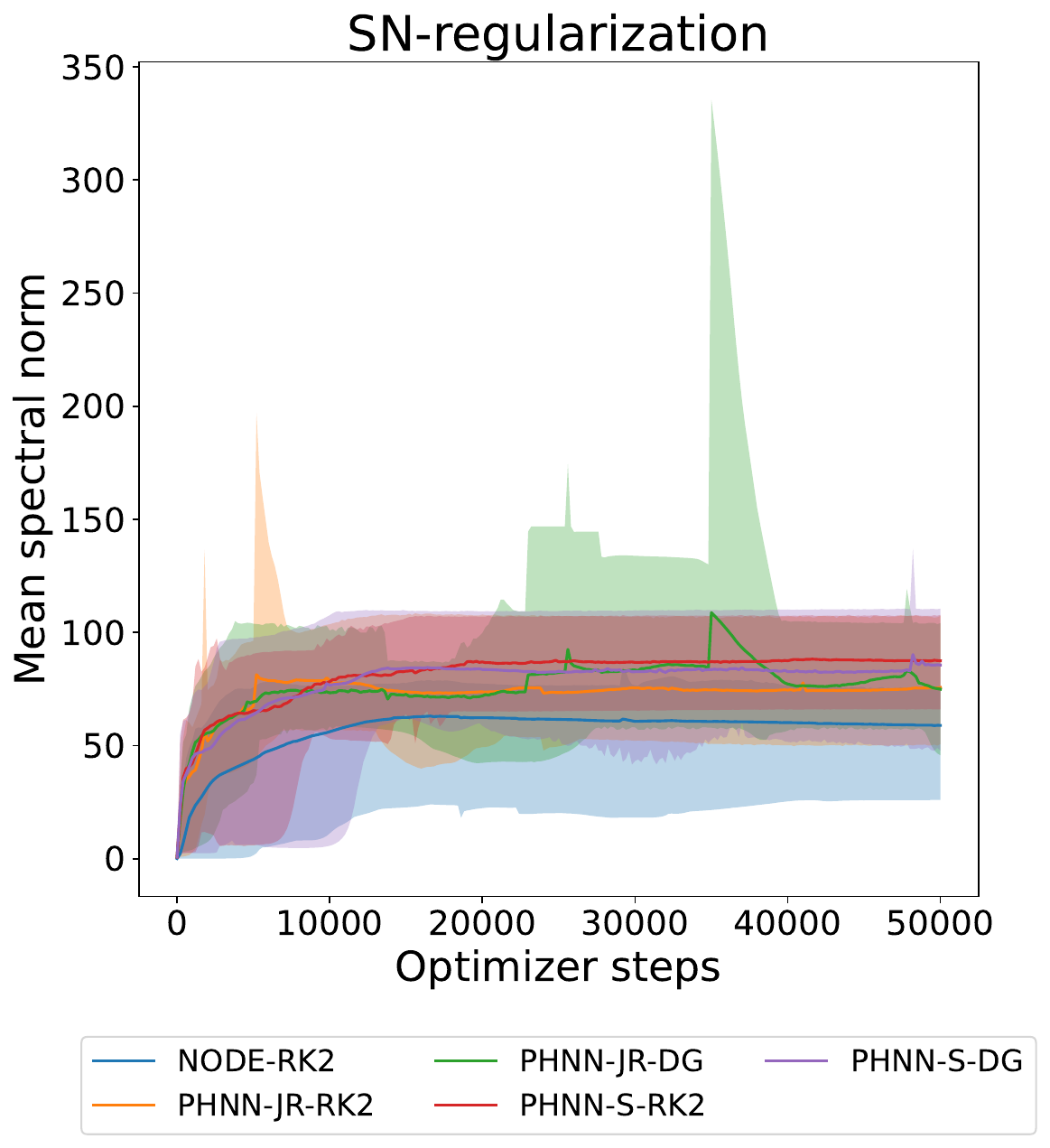}}
    \hfill 
    \subfigure[Stiffness ratio]{\includegraphics[width=0.32\textwidth]{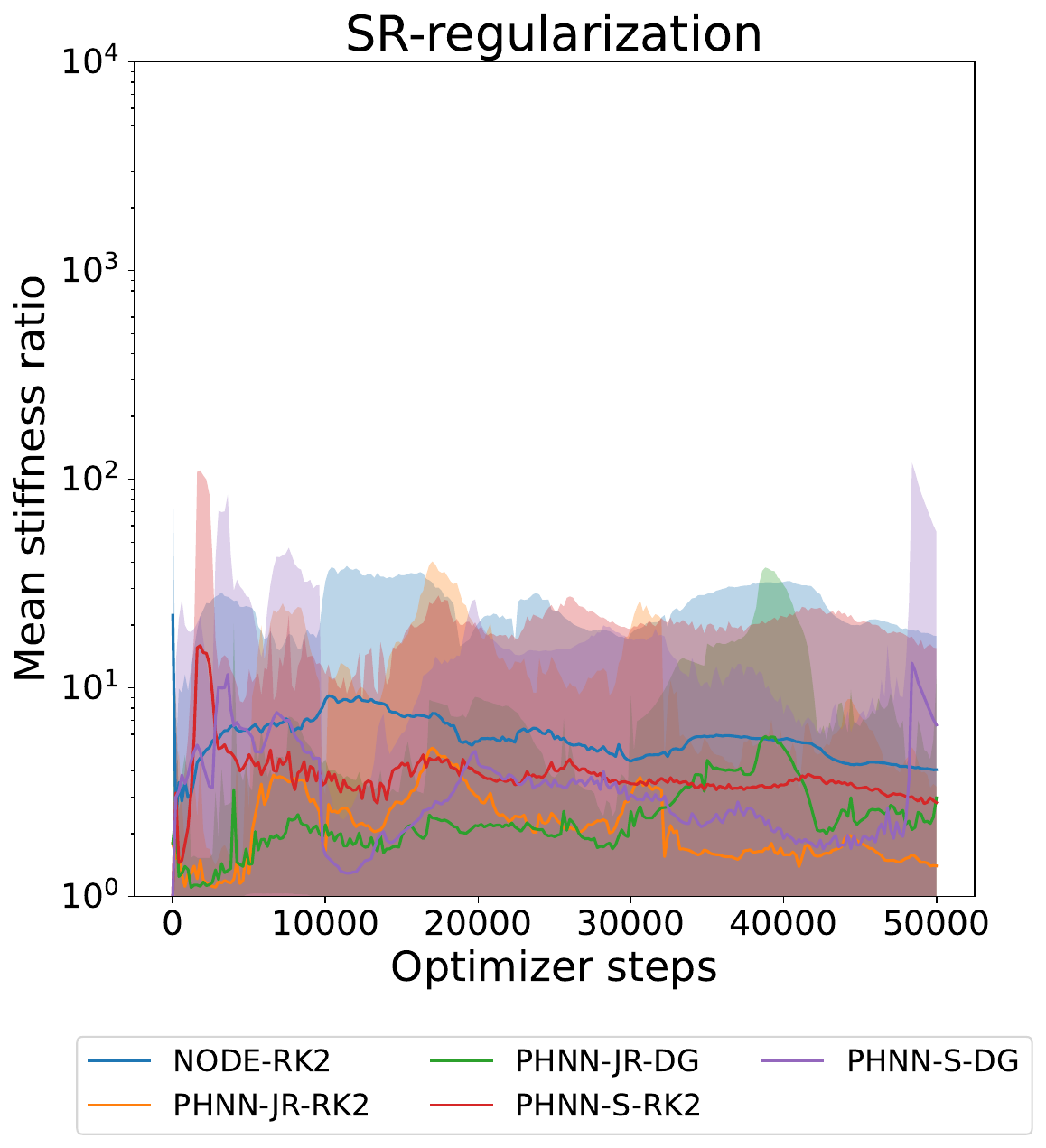}}
    \caption{Impact on the mean condition number (\textbf{left}), spectral norm (\textbf{center}) and stiffness ratio 
    (\textbf{right}) per optimizer step during the training for the different architectures when modeling the Duffing 
    oscillator with $N_{train}=25$ points and the proposed Jacobian regularizations. For each optimizer step, the corresponding 
    mean value (solid line) is obtained computing the average over the 10 model initializations, with the shaded area indicating 
    the range between the minimum and maximum values.}
    \label{fig:jacobian_quantities_regularizations}
\end{figure*}

Regularizing the Jacobian of the PHNN models is motivated by the potential correlation existing between the inference dispersion and a high condition number, spectral norm or stiffness ratio mean value at the end of the training. As we observe in Figure \ref{fig:jacobian_quantities_baseline}, the models with the highest
 mean condition number, spectral norm and stiffness ratio when modeling the Duffing oscillator with $N_{train}=25$, i.e. PHNN-S-RK2 and PHNN-S-DG, are precisely the ones with a higher 
 IQR value (see Figure \ref{fig:study-1-training-points} and Table \ref{tab: results-study-I} in \ref{ap: results-experiments}). Furthermore, their stiffness ratio is higher than 1, meaning
  that these models are learning a representation that is stiffer than the original system from which we generate the data. Experiments are, thus, carried out to test whether controlling the 
  numerical behavior through any of the proposed Jacobian regularizations lowers the inference error and/or dispersion. \\ 
 
 Figure \ref{fig:study_3_jacobian_regularization} shows the inference error (see Table \ref{tab:study_3_jacobian_regularization} 
in the \ref{ap: results-experiments} for the detailed numerical values) when learning the three oscillatory systems under the different Jacobian 
regularizations \eqref{eq:condition-number-loss}-(\ref{eq:stiffness-ratio-loss}) with different small models and $N_{train}=25$. 
BL refers to the baseline (i.e. no Jacobian regularization); CN, to the condition number regularization \eqref{eq:condition-number-loss}; 
SN, to the spectral norm regularization \eqref{eq:spectral-norm-loss}; and SR, to the stiffness ratio regularization \eqref{eq:stiffness-ratio-loss}. 
In general, Jacobian regularization does not improve the results for those systems in which the model performances were already good (harmonic and self-sustained
 oscillator). However, CN-regularization considerably improves the results for the Duffing oscillator, where the dispersion was high. 
 Interestingly, controlling the stiffness ratio towards the ground truth system value is not directly associated with an improvement in
 the accuracy nor the dispersion: the stiffness ratio values after SR-regularization are closer to 1 (see Figure \ref{fig:jacobian_quantities_regularizations}) but the inference error or the dispersion
 is not always reduced. 

\subsection{Differences between PHNN-S and PHNN-JR}
It is a recurrent conclusion in all the studies carried out in this work that the architecture that performs the best for the harmonic and Duffing
oscillator is the PHNN-S, while for the self-sustained oscillator, is the PHNN-JR, independently of the numerical method used for discretization. 
We hypothesize that this could be due to whether the dissipation of the system is nonlinear or not, implying that PHNN-S has difficulty learning the system's 
dynamics in the presence of nonlinear dissipation, which is the case for the self-sustained oscillator.

\section{Conclusion}\label{sec:conclusion}
\textbf{Contributions}. In this work, we have addressed the problem of designing \textit{physically consistent} 
neural network models based on PHS formulations and energy-preserving numerical methods. The main contributions 
of this study include a comparison of two theoretically equivalent PHS formulations: the PH-DAE and the input-state-output 
PHS with feedthrough, when implemented as PHNNs; a performance comparison between a second-order energy-preserving numerical 
method and a Runge-Kutta method of the same order; and a empirical study of the impact of regularizing the Jacobian of PHNN 
through two methods already applied to NODEs and a new one tackling the stiffness of the learned ODE solutions. Our 
results demonstrate that the PHNN based on the discrete gradient method outperform those based on RK2, especially when the 
data comes from nonlinear systems and is learned by models with a small number of trainable parameters. The results also 
indicate that the best PHS formulation to be implemented as a neural network depends on the modeled system and where the 
nonlinearity operates. Interestingly, this dependence is also found in the results when Jacobian regularization is applied, 
where the harmonic and Duffing oscillator clearly benefited from these additional soft constraints while the self-sustained oscillator 
did not. Overall, the experiments on Jacobian regularization show how this technique could be used in PHNNs to improve the generalization
when increasing the number of training points is not possible.\\

\textbf{Limitations}. While the proposed frameworks have shown promising performance, they are subject to certain limitations, 
including the fact the interconnection matrix ($\bm S$ or $\bm J$, depending on the formulation) is always given a priori, and 
that we are only considering Hamiltonians of the form \eqref{eq:quadratic_H} and semi-explicit PHS formulations, which are not 
sufficiently general to describe any given physical system~\cite{van2014port}. Numerically, our results are also limited by 
the fact that in our experiments we use second-order numerical methods, as the objective is to compare two classes of schemes 
(energy preserving versus non-preserving), rather than to achieve the best possible performance. As for the Jacobian regularizations, 
they are limited by the hypothesis that regularizing dynamics at the input data can lead to regularized dynamics across the entire 
solution space, which could not necessarily be the case when working with the nonlinear dynamics induced by neural network-based models. 
Furthermore, the real impact of each Jacobian regularization is not fully explored in this study, as the value of 
$\lambda_{CN},\lambda_{SN}$ and $\lambda_{SR}$ in \eqref{eq:condition-number-loss}-\eqref{eq:stiffness-ratio-loss} 
are chosen equally for each physical system and architecture, without an exhaustive grid search for each case. \\

\textbf{Future works}. Future work will focus on designing and testing learning frameworks capable of modelling PHS without 
knowing the interconnection matrix a priori and from which we only have access to its inputs and outputs. Of particular interest 
will be understanding the reasons why the performance is dependent on the PHS formulation and the modeled physical system, as well as 
detecting the factors that hinder the training of these models and how this relates to a well-conditioned and non-stiff learned 
Jacobian matrix. Whether the same conclusions hold when using a fourth-order discrete gradient and Runge-Kutta method remains to be 
explored. Finally, this framework could be used to model more complex ODE systems or even extended to tackle the learning of Hamiltonian 
PDEs, where power-preserving spatial discretizations should also be considered~\cite{trenchant2018}.  \\

Overall, this paper provides a first comparison between the use of Runge-Kutta and discrete gradient methods in the PHNN framework 
as well as an extensive empirical study on the impact that different PHS formulations, number of training points, size of the models 
or Jacobian regularizations have on the modeling of a given physical system. The authors also hope that this type of work serves as 
a baseline for future methods and thus contribute to addressing the lack of homogeneity that characterizes similar work in the 
literature.\\

\textbf{Acknowledgements}. This project is co-funded by the European Union's Horizon Europe research and innovation program Cofund 
SOUND.AI under the Marie Sklodowska-Curie Grant Agreement No 101081674. This project was also provided with computing HPC and storage 
resources by GENCI at IDRIS thanks to the grant 2024-102911 on the supercomputer Jean Zay's V100 partition.

\bibliographystyle{elsarticle-num}
\bibliography{references}

@article{Cicirello_2024,
   title={Physics-Enhanced Machine Learning: a position paper for dynamical systems investigations},
   volume={2909},
   ISSN={1742-6596},
   number={1},
   journal={Journal of Physics: Conference Series},
   publisher={IOP Publishing},
   author={Cicirello, Alice},
   year={2024},
   month=dec, pages={012034},
   DOI={10.1088/1742-6596/2909/1/012034}}

@article{Baxter_2000,
   title={A Model of Inductive Bias Learning},
   volume={12},
   ISSN={1076-9757},
   DOI={10.1613/jair.731},
   journal={Journal of Artificial Intelligence Research},
   publisher={AI Access Foundation},
   author={Baxter, J.},
   year={2000},
   month=mar, pages={149–198} }

@ARTICLE{karniadakis2021,
       author = {{Karniadakis}, George Em and {Kevrekidis}, Ioannis G. and {Lu}, Lu and {Perdikaris}, Paris and {Wang}, Sifan and {Yang}, Liu},
        title = "{Physics-informed machine learning}",
      journal = {Nature Reviews Physics},
         year = 2021,
        month = jun,
       volume = {3},
       number = {6},
        pages = {422-440},
          doi = {10.1038/s42254-021-00314-5}
}

@article{eidnes2023,
  title={Pseudo-Hamiltonian neural networks with state-dependent external forces},
  author={Eidnes, S{\o}lve and Stasik, Alexander J and Sterud, Camilla and B{\o}hn, Eivind and Riemer-S{\o}rensen, Signe},
  journal={Physica D: Nonlinear Phenomena},
  volume={446},
  pages={133673},
  year={2023},
  publisher={Elsevier},
  doi={10.1016/j.physd.2023.133673}
}

@article{raissi2019,
title = {Physics-informed neural networks: A deep learning framework for solving forward and inverse problems involving nonlinear partial differential equations},
journal = {Journal of Computational Physics},
volume = {378},
pages = {686-707},
year = {2019},
issn = {0021-9991},
doi = {10.1016/j.jcp.2018.10.045},
author = {M. Raissi and P. Perdikaris and G.E. Karniadakis}
}

@article{meng2025physics,
  title={When physics meets machine learning: A survey of physics-informed machine learning},
  author={Meng, Chuizheng and Griesemer, Sam and Cao, Defu and Seo, Sungyong and Liu, Yan},
  journal={Machine Learning for Computational Science and Engineering},
  volume={1},
  pages={20},
  year={2025},
  publisher={Springer},
  doi={10.1007/s44379-025-00016-0}
}

@book{khalil_nonlinear_2002,
  address = {Upper Saddle River, {N.J.}},
  title = {Nonlinear systems},
  edition = {3},
  publisher = {Prentice Hall},
  author = {Khalil, Hassan K},
  year = {2002}
}

@article{hadamard1902problemes,
  title={Sur les probl{\`e}mes aux d{\'e}riv{\'e}es partielles et leur signification physique},
  author={Hadamard, Jacques},
  journal={Princeton university bulletin},
  pages={49--52},
  year={1902},
  publisher={Princeton University}
}

@book{hirsch1974differential,
  title={Differential equations, dynamical systems, and linear algebra},
  author={Hirsch, Morris W and Devaney, Robert L and Smale, Stephen},
  edition = {1},
  volume={60},
  year={1974},
  publisher={Academic press}
}

@book{golub2013matrix,
  title={Matrix Computations},
  author={Golub, G.H. and Van Loan, C.F.},
  edition = {4},
  series={Johns Hopkins Studies in the Mathematical Sciences},
  year={2013},
  publisher={Johns Hopkins University Press}
}

@book{iserles2009first,
  title={A First Course in the Numerical Analysis of Differential Equations},
  author={Iserles, A.},
  edition = {2},
  series={A First Course in the Numerical Analysis of Differential Equations},
  year={2009},
  publisher={Cambridge University Press}
}

@book{taylor2005,
    author = {John R. Taylor},
    title = {Classical mechanics},
    edition = {1},
    publisher = {Calif.:University Science Books},
    year = {2005}
}

@article{maschke1992,
title = {An intrinsic hamiltonian formulation of network dynamics: non-standard poisson structures and gyrators},
journal = {Journal of the Franklin Institute},
volume = {329},
pages = {923-966},
year = {1992},
doi = {10.1016/S0016-0032(92)90049-M},
author = {B.M. Maschke and A.J. {Van Der Schaft} and P.C. Breedveld}
}

@book{duindam2009modeling,
  title={Modeling and control of complex physical systems: the port-Hamiltonian approach},
  author={Duindam, Vincent and Macchelli, Alessandro and Stramigioli, Stefano and Bruyninckx, Herman},
  year={2009},
  edition = {1},
  publisher={Springer Science \& Business Media}
}

@article{van2014port,
  title={Port-Hamiltonian systems theory: An introductory overview},
  author={Van Der Schaft, Arjan and Jeltsema, Dimitri and others},
  journal={Foundations and Trends{\textregistered} in Systems and Control},
  volume={1},
  number={2-3},
  pages={173--378},
  year={2014},
  publisher={Now Publishers, Inc.}
}

@book{chaigne2016acoustics,
  title={Acoustics of musical instruments},
  author={Chaigne, Antoine and Kergomard, Jean},
  edition = {1},
  year={2016},
  publisher={Springer}
}

@article{aoues2017modeling,
  title={Modeling and control of a rotating flexible spacecraft: A port-Hamiltonian approach},
  author={Aoues, Said and Cardoso-Ribeiro, Fl{\'a}vio Luiz and Matignon, Denis and Alazard, Daniel},
  journal={IEEE Transactions on Control Systems Technology},
  volume={27},
  number={1},
  pages={355--362},
  year={2017},
  publisher={IEEE},
  doi = {10.1109/TCST.2017.2771244.},
}

@unpublished{helie2022,
  TITLE = {{Elementary tools on Port-Hamiltonian Systems with applications to audio/acoustics}},
  AUTHOR = {H{\'e}lie, Thomas},
  DOI = {hal-03986168},
  NOTE = {Lecture},
  TYPE = {Doctoral},
  ADDRESS = {Fauenchiemsee, Germany},
  PAGES = {30},
  YEAR = {2022},
  MONTH = Mar,
}

@article{cardoso2024port,
  title={Port-Hamiltonian formulations for the modeling, simulation and control of fluids},
  author={Cardoso-Ribeiro, Fl{\'a}vio Luiz and Haine, Ghislain and Le Gorrec, Yann and Matignon, Denis and Ramirez, Hector},
  journal={Computers \& Fluids},
  pages={106407},
  year={2024},
  publisher={Elsevier},
  doi = {10.1016/j.compfluid.2024.106407},
}

@article{roze2024time,
  title={Time-space formulation of a conservative string subject to finite transformations},
  author={Roze, David and H{\'e}lie, Thomas and Rouhaud, Emmanuelle},
  journal={IFAC-PapersOnLine},
  volume={58},
  number={6},
  pages={232--237},
  year={2024},
  publisher={Elsevier},
  doi = {10.1016/j.ifacol.2024.08.286},
}

@book {Hairer2006,
    AUTHOR = {Hairer, Ernst and Lubich, Christian and Wanner, Gerhard},
     TITLE = {Geometric numerical integration},
    SERIES = {Springer Series in Computational Mathematics},
    VOLUME = {31},
   EDITION = {Second},
 PUBLISHER = {Springer-Verlag, Berlin},
      YEAR = {2006},
}

@article{celledoni2017energy,
  title={Energy-preserving and passivity-consistent numerical discretization of port-Hamiltonian systems},
  author={Celledoni, Elena and H{\o}iseth, Eirik Hoel},
  journal={arXiv preprint arXiv:1706.08621},
  year={2017}
}

@article{quispel1996discrete,
  title={Discrete gradient methods for solving ODEs numerically while preserving a first integral},
  author={Quispel, GRW and Turner, Grant S},
  journal={Journal of Physics A: Mathematical and General},
  volume={29},
  number={13},
  pages={L341},
  year={1996},
  publisher={IOP Publishing},
  doi = {10.1088/0305-4470/29/13/006},
}

@article{gonzalez1996time,
  title={Time integration and discrete Hamiltonian systems},
  author={Gonzalez, Oscar},
  journal={Journal of Nonlinear Science},
  volume={6},
  pages={449--467},
  year={1996},
  publisher={Springer},
  doi = {10.1007/BF02440162},
}

@article{CELLEDONI2025134471,
title = {Learning dynamical systems from noisy data with inverse-explicit integrators},
journal = {Physica D: Nonlinear Phenomena},
volume = {472},
pages = {134471},
year = {2025},
doi = {10.1016/j.physd.2024.134471},
author = {Elena Celledoni and Sølve Eidnes and Håkon Noren Myhr},
}

@inproceedings{chen2018,
author = {Chen, Ricky T. Q. and Rubanova, Yulia and Bettencourt, Jesse and Duvenaud, David},
title = {Neural ordinary differential equations},
year = {2018},
publisher = {Curran Associates Inc.},
address = {Red Hook, NY, USA},
booktitle = {Proceedings of the 32nd International Conference on Neural Information Processing Systems},
pages = {6572–6583},
numpages = {12},
location = {Montr\'{e}al, Canada},
series = {NIPS'18}
}

@inproceedings{dupont2019augmentedneuralodes,
author = {Dupont, Emilien and Doucet, Arnaud and Teh, Yee Whye},
title = {Augmented neural ODEs},
year = {2019},
publisher = {Curran Associates Inc.},
address = {Red Hook, NY, USA},
booktitle = {Proceedings of the 33rd International Conference on Neural Information Processing Systems},
articleno = {282},
pages = {3140-3150}
}

@article{deng2012mnist,
  title={The mnist database of handwritten digit images for machine learning research},
  author={Deng, Li},
  journal={IEEE Signal Processing Magazine},
  volume={29},
  number={6},
  pages={141--142},
  year={2012},
  publisher={IEEE},
  doi = {10.1109/MSP.2012.2211477},
}

@inproceedings{finlay2020trainneuralodeworld,
author = {Finlay, Chris and Jacobsen, J\"{o}rn-Henrik and Nurbekyan, Levon and Oberman, Adam M},
title = {How to train your neural ODE: the world of Jacobian and Kinetic regularization},
year = {2020},
publisher = {JMLR.org},
booktitle = {Proceedings of the 37th International Conference on Machine Learning},
articleno = {296},
pages = {3154-3164},
series = {ICML'20}
}

@InProceedings{josias2022,
author="Josias, Shane
and Brink, Willie",
title="Jacobian Norm Regularisation and Conditioning in Neural ODEs",
booktitle="Artificial Intelligence Research",
year="2022",
publisher="Springer Nature Switzerland",
address="Cham",
pages="31--45"
}

@article{yoshida2017spectralnormregularizationimproving,
      title={Spectral Norm Regularization for Improving the Generalizability of Deep Learning}, 
      author={Yuichi Yoshida and Takeru Miyato},
      year={2017},
      primaryClass={stat.ML},
      journal={arXiv preprint arXiv:1705.10941}
}

@inproceedings{miyato2018spectralnormalizationgenerativeadversarial,
  title={Spectral normalization for generative adversarial networks},
  author={Takeru, Miyato and Toshiki, Kataoka and Masanori, Koyama and Yuichi, Yoshida},
  booktitle={International Conference on Learning Representations},
  pages={1--26},
  year={2018}
}

@article{greydanus2019,
  title={Hamiltonian neural networks},
  author={Greydanus, Samuel and Dzamba, Misko and Yosinski, Jason},
  journal={Advances in neural information processing systems},
  volume={32},
  year={2019}
}

@article{greydanus2022,
  title={Dissipative hamiltonian neural networks: Learning dissipative and conservative dynamics separately},
  author={Sosanya, Andrew and Greydanus, Sam},
  journal={arXiv preprint arXiv:2201.10085},
  year={2022}
}

@article{desai2021,
  title={Port-Hamiltonian neural networks for learning explicit time-dependent dynamical systems},
  author={Desai, Shaan A and Mattheakis, Marios and Sondak, David and Protopapas, Pavlos and Roberts, Stephen J},
  journal={Physical Review E},
  volume={104},
  number={3},
  pages={034312},
  year={2021},
  publisher={APS}
}

@article{zhong2020,
  title={Symplectic ode-net: Learning hamiltonian dynamics with control},
  author={Zhong, Yaofeng Desmond and Dey, Biswadip and Chakraborty, Amit},
  journal={arXiv preprint arXiv:1909.12077},
  year={2019}
}

@article{zhong2020_workshop,
  title={Dissipative SymODEN: Encoding Hamiltonian dynamics with dissipation and control into deep learning},
  author={Zhong, Yaofeng Desmond and Dey, Biswadip and Chakraborty, Amit},
  journal={arXiv preprint arXiv:2002.08860},
  year={2020}
}

@article{cherifi2025nonlinearporthamiltonianidentificationinputstateoutput,
  title={Nonlinear port-Hamiltonian system identification from input-state-output data},
  author={Cherifi, Karim and Messaoudi, Achraf El and Gernandt, Hannes and Roschkowski, Marco},
  journal={arXiv preprint arXiv:2501.06118},
  year={2025}
}

@article{roth2025stableporthamiltonianneuralnetworks,
  title={Stable Port-Hamiltonian Neural Networks},
  author={Roth, Fabian J and Klein, Dominik K and Kannapinn, Maximilian and Peters, Jan and Weeger, Oliver},
  journal={arXiv preprint arXiv:2502.02480},
  year={2025}
}

@article{chen2019,
  title={Symplectic recurrent neural networks},
  author={Chen, Zhengdao and Zhang, Jianyu and Arjovsky, Martin and Bottou, L{\'e}on},
  journal={arXiv preprint arXiv:1909.13334},
  year={2019}
}

@article{zhu2020deephamiltoniannetworksbased,
  title={Deep Hamiltonian networks based on symplectic integrators},
  author={Zhu, Aiqing and Jin, Pengzhan and Tang, Yifa},
  journal={arXiv preprint arXiv:2004.13830},
  year={2020}
}

@article{dipietro2020sparsesymplecticallyintegratedneural,
  title={Sparse symplectically integrated neural networks},
  author={DiPietro, Daniel and Xiong, Shiying and Zhu, Bo},
  journal={Advances in Neural Information Processing Systems},
  volume={33},
  pages={6074--6085},
  year={2020}
}

@article{xiong2022nonseparablesymplecticneuralnetworks,
  title={Nonseparable symplectic neural networks},
  author={Xiong, Shiying and Tong, Yunjin and He, Xingzhe and Yang, Shuqi and Yang, Cheng and Zhu, Bo},
  journal={arXiv preprint arXiv:2010.12636},
  year={2020}
}

@article{choudhary2025learninggeneralizedhamiltoniansusing,
  title={Learning Generalized Hamiltonians using fully Symplectic Mappings},
  author={Choudhary, Harsh and Gupta, Chandan and Kungurtsev, Vyacheslav and Leok, Melvin and Korpas, Georgios},
  journal={arXiv preprint arXiv:2409.11138},
  year={2024}
}

@book{van2000l2,
  title={L2-gain and passivity techniques in nonlinear control},
  author={Van der Schaft, Arjan},
  edition = {2},
  year={2000},
  publisher={Springer}
}

@inproceedings{muller2018power,
  title={Power-balanced modelling of circuits as skew gradient systems},
  author={Muller, R{\'e}my and H{\'e}lie, Thomas},
  booktitle={21 st International Conference on Digital Audio Effects (DAFx-18)},
  year={2018},
  pages = {1-8},
}

@book{press2007numerical,
  title={Numerical Recipes: The art of Scientific Computing, Thrid Edition in C++},
  author={Press, W and Teukolsky, S and Vetterling, W and Flannery, B},
  year={2007},
  edition = {3},
  publisher={Cambridge University Press}
}

@inproceedings{helie2025,
  TITLE = {{Mod{\`e}le passif minimal d'instrument musical auto-oscillant {\`a} configuration variable en temps}},
  AUTHOR = {H{\'e}lie, Thomas and Linares, Maximino and Doras, Guillaume},
  BOOKTITLE = {{CFA 2025 - 17e Congr{\`e}s Fran{\c c}ais d'Acoustique}},
  ADDRESS = {Paris, France},
  YEAR = {2025},
  MONTH = Apr,
  DOI = {hal-05228704},
  HAL_VERSION = {v1},
  PAGES = {1-21}
}

@inproceedings{zhu2022numericalintegrationneuralordinary,
  title={On numerical integration in neural ordinary differential equations},
  author={Zhu, Aiqing and Jin, Pengzhan and Zhu, Beibei and Tang, Yifa},
  booktitle={International Conference on Machine Learning},
  pages={27527--27547},
  year={2022},
  organization={PMLR}
}

@inproceedings{neary2023,
  title={Compositional learning of dynamical system models using port-Hamiltonian neural networks},
  author={Neary, Cyrus and Topcu, Ufuk},
  booktitle={Learning for Dynamics and Control Conference},
  pages={679--691},
  year={2023},
  organization={PMLR}
}

@incollection{ortega2001,
  title={Energy shaping control revisited},
  author={Ortega, Romeo and van der Schaft, Arjan J and Mareels, Iven and Maschke, Bernhard},
  booktitle={Advances in the control of nonlinear systems},
  pages={277--307},
  year={2007},
  publisher={Springer}
}

@article{kingma2014adam,
  title={Adam: A method for stochastic optimization},
  author={Diederik P. Kingma and Jimmy Ba},
  journal={arXiv preprint arXiv:1412.6980},
  year={2014}
}

@article{trenchant2018,
title = {Finite differences on staggered grids preserving the port-Hamiltonian structure with application to an acoustic duct},
journal = {Journal of Computational Physics},
volume = {373},
pages = {673-697},
year = {2018},
issn = {0021-9991},
doi = {10.1016/j.jcp.2018.06.051},
author = {Vincent Trenchant and Hector Ramirez and Yann {Le Gorrec} and Paul Kotyczka},
}

@book{strogatz2018nonlinear,
  title={Nonlinear dynamics and chaos: with applications to physics, biology, chemistry, and engineering},
  author={Strogatz, Steven H},
  year={2024},
  edition = {3},
  publisher={CRC press}
}

@article{schwerdtner2021porthamiltonianidentificationnoisyfrequency,
  title={Port-Hamiltonian system identification from noisy frequency response data},
  author={Schwerdtner, Paul},
  journal={arXiv preprint arXiv:2106.11355},
  year={2021}
}

@article{schwerdtner2022structurepreservingmodelorderreduction,
  title={Structure-preserving model order reduction for index one port-Hamiltonian descriptor systems},
  author={Schwerdtner, Paul and Moser, Tim and Mehrmann, Volker and Voigt, Matthias},
  journal={arXiv preprint arXiv:2206.01608},
  year={2022}
}

@book{trefethrn1997,
author = {Trefethen, Lloyd N. and Bau, III, David},
title = {Numerical Linear Algebra},
publisher = {Society for Industrial and Applied Mathematics},
year = {1997},
doi = {10.1137/1.9780898719574},
address = {Philadelphia, PA},
edition   = {1}
}

\appendix
\section{Well-posedness} \label{ap: preliminaries}
\begin{thm}
\label{thm: well-posedness}
    If $\|\bm J_{\bm f}(\bm x)\|_2\leq K<\infty,\forall\bm x\in\mathcal{D}$, then $\bm f$ is K-Lipschitz.
\end{thm}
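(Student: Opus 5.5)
The plan is the standard mean-value argument along a line segment, using the fundamental theorem of calculus in vector form. One caveat: the domain $\mathcal{D}$ must contain the segment between any two points. This holds if $\mathcal{D}$ is convex, which is the implicit setting here (e.g.\ $\mathcal{D}=\mathbb{R}^{n_x}$). Otherwise the statement only gives a local Lipschitz property, or a Lipschitz bound with respect to the path metric. I will state this assumption explicitly at the start.

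Fix $\bm x,\bm x'\in\mathcal{D}$ and set $\bm\gamma(s)=\bm x+s(\bm x'-\bm x)$ for $s\in[0,1]$. By convexity, $\bm\gamma(s)\in\mathcal{D}$ for all $s$. Define $\bm g(s)=\bm f(\bm\gamma(s))$. Since $\bm f\in\mathcal{C}^1$, the chain rule gives
\[
\bm g'(s)=\bm J_{\bm f}(\bm\gamma(s))(\bm x'-\bm x),
\]
and $\bm g'$ is continuous on $[0,1]$. Applying the fundamental theorem of calculus componentwise then yields
\[
\bm f(\bm x')-\bm f(\bm x)=\int_0^1 \bm J_{\bm f}(\bm\gamma(s))(\bm x'-\bm x)\,ds .
\]

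Next I take Euclidean norms on both sides. I use the integral inequality $\|\int_0^1\bm v(s)\,ds\|_2\le\int_0^1\|\bm v(s)\|_2\,ds$, which holds for continuous vector-valued functions; it follows from Cauchy--Schwarz by pairing the integral with its own normalized direction. I then combine it with the operator-norm property of the spectral norm \eqref{eq: spectral-norm-jacobian}, namely $\|\bm A\bm v\|_2\le\|\bm A\|_2\|\bm v\|_2$. Together with the hypothesis $\|\bm J_{\bm f}\|_2\le K$ on $\mathcal{D}$, this gives
\[
\|\bm f(\bm x')-\bm f(\bm x)\|_2\le\int_0^1 K\|\bm x'-\bm x\|_2\,ds=K\|\bm x'-\bm x\|_2,
\]
which is exactly the $K$-Lipschitz property.

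There is no real obstacle in this argument. The only delicate points are the convexity assumption on $\mathcal{D}$ and justifying the vector integral inequality. Note that a naive componentwise mean value theorem would give a worse constant (depending on the dimension), which is why I integrate the full vector rather than apply the scalar mean value theorem to each component.
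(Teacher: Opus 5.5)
Your proof is correct and follows the same route as the paper: parametrize the segment, apply the chain rule and the fundamental theorem of calculus, then bound with the integral norm inequality and the operator-norm property of $\|\cdot\|_2$. Your explicit convexity assumption on $\mathcal{D}$ is a real improvement, since the paper's argument silently requires the segment $\gamma([0,1])$ to lie in $\mathcal{D}$.
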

\textbf{Proof}. Fix $\bm x,\bm y\in\mathcal{D}$. Define the segment between them like $\gamma(t)=\bm x+t(\bm y-\bm x)$ for $t\in[0, 1]$. Let $\bm g(t)=\bm f(\gamma(t))$. By the chain rule, $\bm g'(t)=\bm J_{\bm f}(\gamma(t))(\bm y-\bm x)$. Using the fundamental theorem of calculus:
\[ \bm f(\bm y)-\bm f(\bm x)=\int_0^1\bm J_{\bm f}(\gamma(t))(\bm y-\bm x)dt  \]
Considering norms,
\[ \|\bm y-\bm x\|_2\leq\int_0^1 \|\bm J_{\bm f}(\gamma(t))(\bm y-\bm x)\|_2dt \]
By operator norm inequality
\[ \|\bm J_{\bm f}(\gamma(t))(\bm y-\bm x)\|_2\leq\|\bm J_{\bm f}(\gamma(t))\|_2\| (\bm y-\bm x)\|_2 \]
So 
\[ \|\bm f(\bm y)-\bm f(\bm x)\|_2\leq \int_0^1\|\bm J_{\bm f}(\gamma(t))\|_2\| (\bm y-\bm x)\|_2dt  \]
From which, using $\|\bm J_f(\bm x)\|_2\leq K$,
\[ \|\bm f(\bm y)-\bm f(\bm x)\|_2\leq K\|\bm y-\bm x\|_2 \]\hfill $\square$ \\

Following the proof of Theorem \ref{thm: well-posedness}, it is easy to see that $K=\sup_{\bm x\in\mathcal{D}}\|\bm J_{\bm f}(\bm x)\|_2$ is a Lipschitz constant.

\section{Passivity} \label{appendix-a-passive-power-balance}

The concept of \textit{passivity} is related to the power-balance property and 
it is a powerful tool for the analysis of nonlinear open systems. In the following, 
we denote $\langle \bm a|\bm b\rangle=\bm a^T\bm b$.

\begin{defi}[\protect{Passivity, \cite[Def 6.3]{khalil_nonlinear_2002}}]
\label{defi: passivity}
    A system of state $\bm x(t)\in\mathbb{R}^n$, input 
    $\overline{\bm u}(t)\in\mathbb{R}^p$ and output 
    $\overline{\bm y}(t)\in\mathbb{R}^p$ is said to be \textit{passive} 
    if there exists a continuously differentiable positive semidefinite 
    function $V(\bm x)$ (called the storage function\footnote{The storage 
    function is a Lyapunov function if it is positive definite.}) such that 
    for all the trajectories and inputs
    \begin{equation}
        \label{eq: passivity}
        \langle \overline{\bm u}(t)|\overline{\bm y}(t)\rangle\geq\langle\nabla V(\bm x(t))|\dot{\bm x}(t)\rangle,~~~\forall t\in\mathbb{R}^{+}
    \end{equation}
    Moreover, it is said to be
    \begin{itemize}
        \item \textit{lossless} if
        \begin{equation}
          \langle \overline{\bm u}(t)|\overline{\bm y}(t)\rangle=\langle\nabla V(\bm x(t))|\dot{\bm x}(t)\rangle 
          \label{eq: passive}
        \end{equation}
        \item \textit{strictly passive} if 
        \begin{equation}
            \langle\overline{\bm u}(t)|\overline{\bm y}(t)\rangle\geq\langle\nabla V(\bm x(t))|\dot{\bm x}(t)\rangle+\psi(\bm x(t))
            \label{eq: stricly-passive}
        \end{equation}
        for some positive definite function $\psi(\bm x)$.
    \end{itemize}
\end{defi}

Note that if $V(\bm x)$ represents the system's energy and $\overline{P}_{ext}=\langle \overline{\bm u}|\overline{\bm y}\rangle$ the external power given to the system, the integral of \eqref{eq: passivity} over any period of time $[0,t]$ reads
\begin{equation}
    \label{eq: integral-passivity}
    \int_0^t\langle\overline{\bm u}(s)| \overline{\bm y}(s)\rangle ds\geq V(\bm x(t))-V(\bm x(0))
\end{equation}
which interprets as "the system is passive if the energy given to the network through inputs and outputs over that period is greater or equal to the increase in the energy stored in the network over the same period". In the absence of external control $\bm u=0$, the passivity condition \eqref{eq: passivity} implies
    \begin{equation}
        \label{eq: passivity-without-control}
        V(\bm x(0))\geq V(\bm x(t))~~~~\forall t\in\mathbb
        R
    \end{equation}

\begin{rmk}
    If the system is governed by 
\begin{equation}
    \label{eq: state-space-representation}
    \begin{cases}
        \dot{\bm x}=\bm f(\bm x,\overline{\bm u}) \\
        \overline{\bm y}=\bm h(\bm x,\overline{\bm u})
    \end{cases},
\end{equation}
the criterion \eqref{eq: passivity} can be written as
\begin{equation}
    \langle \overline{\bm u}|\bm h(\bm x,\overline{\bm u})\rangle\geq\nabla V(\bm x)^T\bm f(\bm x, \overline{\bm u}),~~~\forall (\bm x,\bm u)\in\mathbb{R}^{n}\times\mathbb{R}^p.
\end{equation}
\end{rmk}

\subsection{Passivity of PHS formulations (i)-(iii)}
\begin{proposition}
    The semi-explicit port-Hamiltonian DAE (formulation (i)) governed by
    \begin{equation}
\label{eq:passivity-in-phs_S}
    \underbrace{\begin{bmatrix} \dot{\bm x} \\ \bm w \\ \bm y \end{bmatrix}}_{\mathcal{F}_{i}}
= \bm S
\underbrace{\begin{bmatrix} \nabla H(\bm x) \\ \bm z(\bm w) \\ \bm u \end{bmatrix}}_{\mathcal{E}_{i}},
\end{equation}
with $\bm S=-\bm S^T$ and $\langle \bm z(\bm w)|\bm w\rangle\geq0$, is passive in the sense of Definition \ref{defi: passivity} for storage function $V=H$ and input-output $(\overline{\bm u},\overline{\bm y})=(\bm u,-\bm y)$. 
\end{proposition}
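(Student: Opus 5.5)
The plan is to verify inequality \eqref{eq: passivity} directly with $V=H$, $\overline{\bm u}=\bm u$ and $\overline{\bm y}=-\bm y$. The proof rests on the power balance already derived for \eqref{eq:phs_S_intro}. Skew-symmetry of $\bm S$ gives $\langle \mathcal{E}_i|\mathcal{F}_i\rangle=\mathcal{E}_i^T\bm S\mathcal{E}_i=0$, because a scalar equals its own transpose:
\[
\mathcal{E}_i^T\bm S\mathcal{E}_i=(\mathcal{E}_i^T\bm S\mathcal{E}_i)^T=\mathcal{E}_i^T\bm S^T\mathcal{E}_i=-\mathcal{E}_i^T\bm S\mathcal{E}_i .
\]
Expanding the inner product block by block then yields, for every admissible state, input and dissipation variable,
\[
\nabla H(\bm x)^T\dot{\bm x}+\bm z(\bm w)^T\bm w+\bm u^T\bm y=0 .
\]

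Next, I rearrange this identity in terms of the barred quantities. This gives
\[
\langle\overline{\bm u}|\overline{\bm y}\rangle=-\bm u^T\bm y=\nabla H(\bm x)^T\dot{\bm x}+\langle\bm z(\bm w)|\bm w\rangle .
\]
The resistive hypothesis $\langle\bm z(\bm w)|\bm w\rangle\geq0$ then gives $\langle\overline{\bm u}|\overline{\bm y}\rangle\geq\langle\nabla H(\bm x)|\dot{\bm x}\rangle$ at every time $t$. This is exactly \eqref{eq: passivity} with $V=H$.

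It remains to check that $V=H$ is an admissible storage function in the sense of Definition \ref{defi: passivity}. By the standing assumptions of Section \ref{sec:physical_constraints}, $H$ is $\mathcal{C}^1$ and positive definite, hence continuously differentiable and positive semidefinite, as required.

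The only delicate point is the DAE nature of the system: $\bm w$ appears on both sides of \eqref{eq:passivity-in-phs_S}. The argument does not need to solve for $\bm w$; it only uses that, along any trajectory, the triple $(\bm x,\bm w,\bm u)$ satisfies the algebraic relations. I will state this explicitly, assuming (implicitly, as in the paper) that the DAE is well-posed so that trajectories exist. I will also remark that the sign convention $(\overline{\bm u},\overline{\bm y})=(\bm u,-\bm y)$ is exactly what reconciles $P_{\text{ext}}$ with the convention of Definition \ref{defi: passivity}. I do not expect any further obstacle.
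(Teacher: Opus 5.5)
Your proposal is correct and follows essentially the same route as the paper. Skew-symmetry of $\bm S$ gives the power balance $\langle\nabla H(\bm x)|\dot{\bm x}\rangle+\langle\bm z(\bm w)|\bm w\rangle+\langle\bm u|\bm y\rangle=0$; rewriting it with $(\overline{\bm u},\overline{\bm y})=(\bm u,-\bm y)$ and using $\langle\bm z(\bm w)|\bm w\rangle\geq0$ yields the passivity inequality, and your extra checks on $V=H$ and the DAE structure are correct, though the paper leaves them implicit.
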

\textbf{Proof}. By skew-symmetry of $\bm S$, $\mathcal{E}^T\mathcal{F}=\mathcal{E}^T\bm S\mathcal{E}=0$, so that the instantaneous power balance 
\begin{equation}
    \underbrace{\langle \nabla H(\bm x)|\dot{\bm x}\rangle}_{\text{stored power}~P_S}+\underbrace{\langle\bm z(\bm w)|\bm w\rangle}_{\text{dissipated power}~P_R\geq0}+\underbrace{\langle\bm u|\bm y\rangle}_{\text{external power}~P_P}=0.
    \label{eq: power-balance-equation-S}
\end{equation} 
After introducing $V=H$ and $(\overline{\bm u},\overline{\bm y})=(\bm u,-\bm y)$, \eqref{eq: power-balance-equation-S} implies
\begin{equation}
    \langle \nabla V(\bm x)|\dot{\bm x}\rangle-\langle\overline{\bm u}|\overline{\bm y}\rangle=-\langle\bm z(\bm w)|\bm w\rangle\leq0
\end{equation}
from which we obtain the passivity condition (\eqref{eq: passivity}).
\hfill $\square$ 
\begin{proposition}
    The input-state-output port-Hamiltonian system with feed through:
    \begin{equation}
            \label{eq:passivity-formulation-phs_JR}
    \underbrace{\begin{bmatrix} \dot{\bm x} \\ \bm y \end{bmatrix}}_{\mathcal{F}_{ii}}
    = 
\big(\bm J- \bm R(\bm x, \bm u)\big) 
\underbrace{\begin{bmatrix} \nabla H(\bm x) \\ \bm u \end{bmatrix}}_{\mathcal{E}_{ii}},
    \end{equation}
    where $\bm J=-\bm J^T$ and $\bm R=\bm R^T\succeq 0$ is passive in the sense of Definition \ref{defi: passivity} for storage function $V=H$ and input-output $(\overline{\bm u},\overline{\bm y})=(\bm u,-\bm y)$. Moreover, it is strictly passive if
    \begin{equation}
        \psi(\bm x)=\begin{bmatrix}
            \nabla H(\bm x) \\ 0
        \end{bmatrix}^T\bm R\begin{bmatrix}
            \nabla H(\bm x) \\ 0
        \end{bmatrix}>0,~~~\forall\bm x\neq 0
    \end{equation}
\end{proposition}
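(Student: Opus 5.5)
The plan mirrors the proof of the previous proposition, with the quadratic form of $\bm J-\bm R$ taking the place of the skew-symmetry of $\bm S$. First compute the total power $\mathcal{E}_{ii}^T\mathcal{F}_{ii}$. Substituting \eqref{eq:passivity-formulation-phs_JR} gives
\[
\mathcal{E}_{ii}^T\mathcal{F}_{ii}=\mathcal{E}_{ii}^T\bm J\mathcal{E}_{ii}-\mathcal{E}_{ii}^T\bm R(\bm x,\bm u)\mathcal{E}_{ii}.
\]
The first term vanishes because $\bm J=-\bm J^T$. Expanding the left-hand side blockwise then yields the power balance
\[
\langle\nabla H(\bm x)|\dot{\bm x}\rangle+\langle\bm u|\bm y\rangle=-\mathcal{E}_{ii}^T\bm R(\bm x,\bm u)\mathcal{E}_{ii}\leq 0,
\]
where the inequality follows from $\bm R\succeq0$.

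Next, set $V=H$, which is $\mathcal{C}^1$ and positive definite by the standing assumptions, hence a valid storage function. With $(\overline{\bm u},\overline{\bm y})=(\bm u,-\bm y)$ we have $\langle\overline{\bm u}|\overline{\bm y}\rangle=-\langle\bm u|\bm y\rangle$. The balance above therefore rewrites as
\[
\langle\overline{\bm u}|\overline{\bm y}\rangle=\langle\nabla V|\dot{\bm x}\rangle+\mathcal{E}_{ii}^T\bm R\,\mathcal{E}_{ii}\geq\langle\nabla V|\dot{\bm x}\rangle,
\]
which is exactly \eqref{eq: passivity}. This proves passivity.

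For strict passivity, we need to lower-bound the dissipation term $\mathcal{E}_{ii}^T\bm R\,\mathcal{E}_{ii}$ by $\psi(\bm x)$, which is the $\bm u=0$ restriction of that quadratic form. This is the main obstacle: for $\bm u\neq 0$ the cross terms $2\nabla H^T\bm R_{xu}\bm u+\bm u^T\bm R_{uu}\bm u$ need not be nonnegative. Two routes are available.

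\begin{itemize}
\item[(a)] Read the definition of strict passivity in the unforced setting $\bm u=0$, in line with \eqref{eq: passivity-without-control}. Then $\mathcal{E}_{ii}^T\bm R\,\mathcal{E}_{ii}=\psi(\bm x)$ exactly, and the hypothesis $\psi(\bm x)>0$ for $\bm x\neq0$ (with $\psi(0)=0$, since $\nabla H(0)=0$ at the minimum of $H$) makes $\psi$ positive definite, so \eqref{eq: stricly-passive} holds.
\item[(b)] For general $\bm u$, assume the coupling block $\bm R_{xu}$ vanishes, as it does for the harmonic and Duffing examples in Table~\ref{tab:PHS-examples-and-formulations}. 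Then $\mathcal{E}_{ii}^T\bm R\,\mathcal{E}_{ii}=\psi(\bm x)+\bm u^T\bm R_{uu}\bm u\geq\psi(\bm x)$, because $\bm R_{uu}\succeq0$ as a principal block of a positive semidefinite matrix.
\end{itemize}

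I would present (a) as the intended meaning of the statement and note (b) as the extension valid for all inputs. In either case, strict passivity follows from \eqref{eq: stricly-passive}.
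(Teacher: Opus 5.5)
Your passivity argument matches the paper's. Skew-symmetry of $\bm J$ kills $\mathcal{E}_{ii}^T\bm J\mathcal{E}_{ii}$, and substituting $V=H$, $(\overline{\bm u},\overline{\bm y})=(\bm u,-\bm y)$ gives the dissipation inequality. Your non-strict $\leq 0$ is the right sign; the paper's strict $<0$ is a slip.

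For strict passivity, the paper's proof simply asserts that $\bm R\succeq0$ gives $-P_R<-\psi(\bm x)$. That is exactly the step you flagged, and your objection is correct: the claim is false without extra structure. Take $n_x=n_u=1$, $H(x)=x^2/2$, $\bm J=0$ and $\bm R=\begin{bmatrix}1&1\\1&1\end{bmatrix}\succeq0$, so that $\psi(x)=x^2>0$ for $x\neq0$. Then $\dot x=y=-(x+u)$. The constant input $u=-x_0$ holds the state at $x_0\neq0$ with $\langle\overline{\bm u}|\overline{\bm y}\rangle=\langle\nabla V|\dot x\rangle=0$. Hence no storage function and no positive definite $\psi$ can satisfy the strict inequality there.

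An additional hypothesis is therefore unavoidable, and your route (b) supplies one correctly: with $\bm R_{xu}=0$, $P_R=\psi(\bm x)+\bm u^T\bm R_{uu}\bm u\geq\psi(\bm x)$ for every input. Two refinements:
\begin{itemize}
\item Since $\bm R=\bm R(\bm x,\bm u)$, the $\psi$ of the statement also depends on $\bm u$. You need $\bm R_{xx}$ independent of $\bm u$ (or a positive definite lower bound uniform in $\bm u$) for $\psi$ to be a function of $\bm x$ alone, as the definition requires.
\item For $\bm R$ independent of $\bm u$, the sharp condition replaces $\bm R_{xx}$ in $\psi$ by the Schur complement $\bm R_{xx}-\bm R_{xu}\bm R_{uu}^{+}\bm R_{xu}^T$, because that quadratic form in $\nabla H$ is exactly $\min_{\bm u}P_R$.
\end{itemize}

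Route (a), by contrast, does not prove the stated property. Definition~\ref{defi: passivity} requires the strict inequality for all trajectories \emph{and all inputs}. The relation \eqref{eq: passivity-without-control} you invoke is a consequence of passivity when $\bm u=0$, not a weaker version of the definition. Setting $\bm u=0$ only shows $\dot H=-\psi(\bm x)$ along unforced trajectories, i.e.\ that $H$ is a strict Lyapunov function of the free system. That is strictly weaker than strict passivity: the example above satisfies it ($\dot H=-x^2$ when $u=0$) yet is not strictly passive.

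So do not present (a) as ``the intended meaning'' and then conclude that \eqref{eq: stricly-passive} holds. The correct write-up proves passivity as you did and states strict passivity only under the hypothesis of (b), or under the Schur-complement condition above.
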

\textbf{Proof}. The instantaneous power balance
\begin{equation}
    \begin{split}
    \mathcal{E}^T\mathcal{F}&=\underbrace{\langle \nabla H(\bm x)|\dot{\bm x}\rangle}_{P_S}+\underbrace{\langle\bm u|\bm y\rangle}_{P_P}\\ &=-\underbrace{\left\langle\begin{bmatrix}
        \nabla H(\bm x) \\ \bm u
    \end{bmatrix}\left| \bm R(\bm x)\right|\begin{bmatrix}
        \nabla H(\bm x) \\ \bm u
    \end{bmatrix} \right\rangle}_{P_R\succeq0}<0
    \end{split}
\label{eq: power-balance-jr}
\end{equation}
The passivity condition is obtained from \eqref{eq: power-balance-jr} introducing $V=H$ and $(\overline{\bm u},\overline{\bm y})=(\bm u,-\bm y)$.
Note that, because $\bm R\succeq0$, $-P_R<-\psi(\bm x)$. Introducing this inequality into
\eqref{eq: power-balance-jr} gives
\begin{equation}
    \langle\nabla V(\bm x)|\bm x\rangle-\langle \overline{\bm u}|\overline{\bm y}\rangle=-P_R<-\psi(\bm x),
\end{equation}
from which the strict passivity condition (\eqref{eq: stricly-passive}) is obtained.
$\square$
\begin{proposition} The skew-symmetric gradient PH-DAE:
\begin{equation}
    \label{eq:skew-gradient-system}
    \underbrace{\begin{bmatrix} \dot{\bm x} \\ \bm w \\ \bm y \end{bmatrix}}_{\mathcal{F}_{iii}}
    = \bm S \, \underbrace{\nabla F\!\!\left(\! \begin{bmatrix}
        \bm x \\ \bm w \\ \bm u
    \end{bmatrix}\!\right)}_{\mathcal{E}_{iii}},
\end{equation}
where $\bm S=-\bm S^T$ and $F= H(\bm x)+Z(\bm w) + \frac{\bm u^T \bm u}{2}$, with $\nabla \bm Z(\bm w)\bm w\geq0$, is passive in the sense of Definition \ref{defi: passivity} for storage function $V=H$ and input-output $(\overline{\bm u},\overline{\bm y})=(\bm u,-\bm y)$.
\end{proposition}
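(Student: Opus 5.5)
The plan is to reduce this proposition to the one already proved for formulation (i). I will write out the gradient of $F$ blockwise. Since $F(\bm x,\bm w,\bm u)=H(\bm x)+Z(\bm w)+\frac{\bm u^T\bm u}{2}$ is additively separable, its gradient is
\[
\nabla F\left(\begin{bmatrix}\bm x\\ \bm w\\ \bm u\end{bmatrix}\right)=\begin{bmatrix}\nabla H(\bm x)\\ \nabla Z(\bm w)\\ \bm u\end{bmatrix}.
\]
Setting $\bm z(\bm w):=\nabla Z(\bm w)$, the effort vector $\mathcal{E}_{iii}$ coincides exactly with $\mathcal{E}_i$. Hence the skew-gradient system is literally an instance of formulation (i) with the same skew-symmetric $\bm S$.

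Next I need the resistive property $\langle \bm z(\bm w)|\bm w\rangle\geq 0$. This is precisely the hypothesis $\nabla Z(\bm w)^T\bm w\geq 0$ in the statement, with the paper's notation $\nabla \bm Z(\bm w)\bm w$ read as this inner product. So both hypotheses of the proposition for formulation (i) hold.

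To keep the argument self-contained, I will redo the power balance directly rather than only cite that proposition. Skew-symmetry gives $\mathcal{E}_{iii}^T\mathcal{F}_{iii}=\mathcal{E}_{iii}^T\bm S\mathcal{E}_{iii}=0$. Written out, this reads
\[
\langle\nabla H(\bm x)|\dot{\bm x}\rangle+\langle\nabla Z(\bm w)|\bm w\rangle+\langle\bm u|\bm y\rangle=0.
\]
Now take $V=H$ and $(\overline{\bm u},\overline{\bm y})=(\bm u,-\bm y)$. Then $\langle\nabla V|\dot{\bm x}\rangle-\langle\overline{\bm u}|\overline{\bm y}\rangle=-\langle\nabla Z(\bm w)|\bm w\rangle\leq 0$, which is the passivity inequality \eqref{eq: passivity}.

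There is also the requirement on the storage function. Definition \ref{defi: passivity} asks for $V$ to be continuously differentiable and positive semidefinite. This follows from the standing assumption that $H$ is $\mathcal{C}^1$ and positive definite.

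I do not expect a substantive obstacle. The only point needing care is the interpretation of the hypothesis $\nabla \bm Z(\bm w)\bm w\geq 0$ as the scalar $\nabla Z(\bm w)^T\bm w$. I also need to confirm that the $\bm u$-block of $\nabla F$ is exactly $\bm u$, which holds because of the $\tfrac12\bm u^T\bm u$ term. With those checked, the identification with formulation (i) is immediate.
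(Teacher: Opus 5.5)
Your proposal is correct and takes the paper's route: the paper's proof just says it is the proof of Proposition 1 with $\bm z(\bm w)=\nabla Z(\bm w)$. That is exactly your identification $\mathcal{E}_{iii}=\mathcal{E}_i$ followed by the skew-symmetric power balance, and your explicit checks (the $\bm u$-block of $\nabla F$, and that $V=H$ is $\mathcal{C}^1$ and positive semidefinite) only fill in details the paper leaves implicit.
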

The proof is similar to Proposition 1 with $\bm z(\bm w)=\nabla \bm Z(\bm w)$. 
\section{Jacobian quantities for the physical systems}
\label{ap: jacobian-quantities}

Tables \ref{tab:jacobian-quantities-harmonic-oscillator}-\ref{tab:jacobian-quantities-self-oscillating} 
show the Jacobian matrix $\bm J_{\bm f}(\bm x)$, the spectral norm $\|\bm J_{\bm f}(\bm x)\|_2$, 
the condition number $\kappa(\bm J_{\bm f}(\bm x))$ and the stiffness ratio $\rho(\bm J_{\bm f}(\bm x))$ 
for the different physical systems. The spectral norm is computed using the formula~\cite{golub2013matrix}
\begin{equation}
\label{eq: spectral-norm-expression}
    \|\bm J_{\bm f}(\bm x) \|_2=\sqrt{\lambda_{max}((\bm J_{\bm f}(\bm x))^T\bm J_{\bm f}(\bm x))}=\sigma_{max}(\bm J_{\bm f}(\bm x)),
\end{equation}
where $\lambda_{max}(\cdot),\sigma_{max}(\cdot)$ denotes the maximum eigenvalue and singular value of a matrix. 
The condition number is computed using the formula~\cite{golub2013matrix}
\begin{equation}
\label{eq: condition-number-expression}
\kappa(\bm J_{\bm f}(\bm x))=\|\bm J_{\bm f}(\bm x)\|_2\|\bm J_{\bm f}(\bm x)^{-1}\|_2=\frac{\sigma_{max}(\bm J_{\bm f}(\bm x))}{\sigma_{min}(\bm J_{\bm f}(\bm x))}
\end{equation}
where $\sigma_{max}(\cdot),\sigma_{min}(\cdot)$ denote the maximum and minimum singular value of a matrix. 
The stiffness ratio is computed using the formula~\cite{iserles2009first}
\begin{equation}
    \label{eq: stiffness-ratio-expression}
    \rho(\bm J_{\bm f}(\bm x))=\frac{\lambda_{max}(\bm J_{\bm f}(\bm x))}{\lambda_{min}(\bm J_{f}(\bm x))}
\end{equation}
where $\lambda_{max}(\cdot),\lambda_{min}(\cdot)$ denote the maximum and minimum eigenvalue of a matrix.
\begin{table*}[h!]
    \centering
    \begin{tabular}{|c|c|}
    \hline 
       System  & Harmonic oscillator (linear) \\
    \hline 
    \rule{0pt}{0.6cm}
      $\bm J_{\bm f}(\bm x)$  &  $\begin{bmatrix}
    0 & \frac{1}{m} \\ -k & -\frac{\alpha}{m}
\end{bmatrix}$ \\[4mm]
    \hline
    \rule{0pt}{0.8cm}
    $\|\bm J_{\bm f}(\bm x) \|_2$ & $\sqrt{\frac{1}{2}\left(k^2+\frac{1+\alpha^2}{m^2}+\sqrt{\left(k^2+\frac{1+\alpha^2}{m^2}\right)^2-\frac{4k^2}{m^2}}\right)}$ \\[4mm]
\hline 
\rule{0pt}{1cm}
$\kappa(\bm J_{\bm f}(\bm x))$ & $\sqrt{ \frac{ k^2+\tfrac{1+\alpha^2}{m^2} + \sqrt{ \left(k^2-\tfrac{1+\alpha^2}{m^2}\right)^2 + \tfrac{4k^2\alpha^2}{m^2} } }{ k^2+\tfrac{1+\alpha^2}{m^2} - \sqrt{ \left(k^2-\tfrac{1+\alpha^2}{m^2}\right)^2 + \tfrac{4k^2\alpha^2}{m^2}}}}$\\[6mm]
\hline 
\rule{0pt}{0.8cm}
$\rho(\bm J_{\bm f}(\bm x))$ & $1,~if~\alpha^2<4km~(\text{underdamped})$\\[4mm]
\hline 
    \end{tabular}
    \caption{Jacobian matrix, spectral norm, condition number and stiffness ratio for the harmonic oscillator.}
    \label{tab:jacobian-quantities-harmonic-oscillator}
\end{table*}

\begin{table*}[h!]
    \centering
    \begin{tabular}{|c|c|}
    \hline 
       System  & Duffing oscillator (nonlinear) \\
    \hline 
    \rule{0pt}{0.6cm}
      $\bm J_{\bm f}(\bm x)$  &  $\begin{bmatrix}
    0 & \frac{1}{m} \\ -k_1-3k_3q^2 & -\frac{\alpha}{m}
\end{bmatrix}$ \\[4mm]
    \hline
    \rule{0pt}{0.8cm}
    $\|\bm J_{\bm f}(\bm x) \|_2$ & $\sqrt{\frac{1}{2}\left((k_1+3k_3 q^2)^2 + \frac{1+\alpha^2}{m^2}+\sqrt{\left((k_1+3k_3 q^2)^2 + \frac{1+\alpha^2}{m^2}\right)^2-\frac{4 (k_1+3k_3 q^2)^2}{m^2}}\right)}$ \\[4mm]
\hline 
\rule{0pt}{1cm}
$\kappa(\bm J_{\bm f}(\bm x))$ & $\sqrt{ \frac{ (k_1+3k_3q^2)^2+\tfrac{1+\alpha^2}{m^2} + \sqrt{ \left((k_1+3k_3q^2)^2-\tfrac{1+\alpha^2}{m^2}\right)^2 + \tfrac{4\alpha^2 (k_1+3k_3q^2)^2}{m^2} } }{ (k_1+3k_3q^2)^2+\tfrac{1+\alpha^2}{m^2} - \sqrt{ \left((k_1+3k_3q^2)^2-\tfrac{1+\alpha^2}{m^2}\right)^2 + \tfrac{4\alpha^2 (k_1+3k_3q^2)^2}{m^2} } }}$\\[6mm]
\hline 
\rule{0pt}{0.8cm}
$\rho(\bm J_{\bm f}(\bm x))$ & 1,~if~$\alpha^2<4m(k_1+3k_3q^2)$~(\text{underdamped})\\[4mm]
\hline 
    \end{tabular}
    \caption{Jacobian matrix, spectral norm, condition number and stiffness ratio for the Duffing oscillator.}
    \label{tab:jacobian-quantities-duffing}
\end{table*}

\begin{table*}[h!]
    \centering
    \begin{tabular}{|c|c|}
    \hline 
       System  & Self-sustained oscillator (nonlinear) \\
    \hline 
    \rule{0pt}{0.6cm}
      $\bm J_{\bm f}(\bm x)$  &  $\begin{bmatrix}
    -\Gamma'(w)kq-\Gamma(w)k & -\frac{1}{m} \\
    k & 0
\end{bmatrix}$ \\[4mm]
    \hline
    \rule{0pt}{0.8cm}
    $\|\bm J_{\bm f}(\bm x) \|_2$ & $\sqrt{\frac{1}{2}\left(k^2\big(\Gamma'(w)q+\Gamma(w)\big)^2 + k^2 + \frac{1}{m^2}+\sqrt{\left(k^2\big(\Gamma'(w)q+\Gamma(w)\big)^2 + k^2 + \frac{1}{m^2}\right)^2-\frac{4k^2}{m^2}}\right)}$ \\[4mm]
\hline 
\rule{0pt}{1cm}
$\kappa(\bm J_{\bm f}(\bm x))$ & $\sqrt{ \frac{ k^2(\Gamma'(w)q+\Gamma(w))^2+k^2+\tfrac{1}{m^2} + \sqrt{ \left(k^2(\Gamma'(w)q+\Gamma(w))^2+k^2-\tfrac{1}{m^2}\right)^2 + \tfrac{4k^2(\Gamma'(w)q+\Gamma(w))^2}{m^2} } }{ k^2(\Gamma'(w)q+\Gamma(w))^2+k^2+\tfrac{1}{m^2} - \sqrt{ \left(k^2(\Gamma'(w)q+\Gamma(w))^2+k^2-\tfrac{1}{m^2}\right)^2 + \tfrac{4k^2(\Gamma'(w)q+\Gamma(w))^2}{m^2} } }}$\\[6mm]
\hline 
\rule{0pt}{0.8cm}
$\rho(\bm J_{\bm f}(\bm x))$ & 1,~if~$k^2(\Gamma'(w)q+\Gamma(w))^2<\frac{4k}{m}$~(\text{underdamped})\\[4mm]
\hline 
    \end{tabular}
    \caption{Jacobian matrix, spectral norm, condition number and stiffness ratio for the self-sustained oscillator.}
    \label{tab:jacobian-quantities-self-oscillating}
\end{table*}

\section{Implementation details}
\label{appendix:implementation-details}
Table \ref{tab:reference-size-parameters-neural-networks}-\ref{tab:low-size-parameters-neural-networks} show 
the design choices for each subnetwork inside the PHNN that models the energy or the dissipation of the system.

\section{Initial conditions and control design}
\label{appendix:control-design}
\textbf{Initial condition sampling}: All the Hamiltonians in this work can be 
written in the form
\begin{equation}
    \label{eq: all-Hamiltonians}
    H(q,p)=\frac{1}{2m}p^2+\frac{1}{2}k_1q^2+\frac{\beta}{4}k_3q^4
\end{equation}
For the harmonic (HO) and the self-sustained oscillator (SSO), the Hamiltonian 
satisfies $\beta=0$, so that \eqref{eq: all-Hamiltonians} particularizes to
\begin{equation}
    \label{eq: Hamiltonian-mass-spring-non-linear-RLC}
    H(q,p)=\frac{p^2}{2m}+\frac{1}{2}k_1q^2
\end{equation}
We now want to sample an initial condition $\bm x_0=(q_0,p_0)$ so that 
$H(\bm x_0)\in\mathcal{I}_{E_0}=[E_{min},E_{max}]$. This problems is equivalent 
to sample $(q,p)$ such that 
\begin{equation}
    \label{eq: sampling-energy-problem}
    E_0=\frac{p^2}{2m}+\frac{1}{2}k_1q^2,~~~~~~E_0\in\mathcal{I}_{E_0}
\end{equation}
Note that \eqref{eq: sampling-energy-problem} is the equation of an ellipse 
with semiaxes $a=\sqrt{\frac{2E_0}{k_1}}$ and $b=\sqrt{2E_0m}$. In parametric 
equations, this means that $(q,p)$ can be written as:
\begin{equation}
\label{eq: parametrization-p-q-mass-spring-nonlinear-rlc}
\begin{cases}
    q=\sqrt{\frac{2E_0}{k_1}}\sin\theta \\
    p=\sqrt{2mE_0}\cos\theta
\end{cases}~~~~E_0\in\mathcal{I}_{E_0},\theta\in[0,2\pi)
\end{equation}
The initial condition sampling method (Algorithm \ref{algo: sampling-mass-spring}) 
for the harmonic and self-sustained oscillator is based on \eqref{eq: parametrization-p-q-mass-spring-nonlinear-rlc}.\\

\begin{algorithm}
\caption{Initial condition sampling for HO and SSO}
\label{algo: sampling-mass-spring}
\begin{algorithmic}[1]
\Require $E_{\min}, E_{\max}$
\Ensure $(q,p)$

\State Sample $\theta \sim \mathcal{U}(0,2\pi)$
\State Sample $r \sim \mathcal{U}(E_{\min}, E_{\max})$
\State Compute $E_0 \gets \sqrt{r}$
\State Compute
\[
q \gets \sqrt{\frac{2E_0}{k_1}}\sin\theta, \qquad
p \gets \sqrt{2mE_0}\cos\theta
\]
\State \Return $(q,p)$
\end{algorithmic}
\end{algorithm}

For the Duffing oscillator (DO), the Hamiltonian satisfies $\beta>0$ in 
\eqref{eq: all-Hamiltonians}. This Hamiltonian is not quadratic, which increases 
the complexity if we are to design a similar sampling technique to the one 
described before. In this case, the initial condition sampling is based on an 
acceptance-rejection method ((Algorithm \ref{algo:sampling-duffing})). \\

\begin{algorithm}
\caption{Initial condition sampling for DO}
\label{algo:sampling-duffing}
\begin{algorithmic}[1]
\Require $E_{\min}, E_{\max}, q_{\min}, q_{\max}, p_{\min}, p_{\max}$
\Ensure $(q,p)$

\Repeat
    \State Sample $q \sim \mathcal{U}(q_{\min}, q_{\max})$
    \State Sample $p \sim \mathcal{U}(p_{\min}, p_{\max})$
    \State Compute $H_0 \gets H(q,p)$
\Until{$E_{\min} \le H_0 \le E_{\max}$}  \Comment{Accept only if Hamiltonian in 
desired range}

\State \Return $(q,p)$
\end{algorithmic}
\end{algorithm}

\textbf{Control design}: For the harmonic and Duffing oscillator, external 
control was applied as a constant force $\bm u$ so that 
\begin{equation}
        H(\bm x^{*})\in\mathcal{I}_{E_{eq}}= [E_{eq}^{min},E_{eq}^{max}]
\end{equation}
The equilibrium ph-DAE formulation for these systems is
\begin{equation}
    \label{eq: equilibrium-ms-duf}
    \begin{bmatrix} \dot{q}=0 \\ \dot{p}=0 \\  w^{*} \\  y^{*} \end{bmatrix}
= \begin{bmatrix}
    0 & 1 & 0 & 0 \\ -1 & 0 & -1 & -1 \\ 0 & 1& 0 & 0 \\ 0 & 1 & 0 & 0
\end{bmatrix}
\begin{bmatrix} \nabla_{q} H(q^{*},p^{*})=f(q^{*}) \\ \nabla_{p} H(q^{*},p^{*})=p^{*}/m \\ z(w^{*}) \\  u^{*} \end{bmatrix}
\end{equation}
which simplifies to
\begin{equation}
\label{eq: equilibrium-system-of-equations-ms-duf}
    \begin{cases}
        0=p^{*}/m \\
        0=-f(q^{*})-z(w^{*})-u^{*} \\
        w^{*}=p^{*}/m \\
        y^{*}=p^{*}/m
    \end{cases}
\end{equation}
From \eqref{eq: equilibrium-system-of-equations-ms-duf}, $p^{*}=w^{*}=y^{*}=0$, 
which also implies that $z(w^{*})=cw^{*}=0$. The equilibrium point of the system 
$(q^{*},p^{*})$ satisfies
\begin{equation}
    \begin{cases}
    H(q^{*},p^{*})=\frac{1}{2}k_1(q^{*})^2+\frac{\beta}{4}k_3(q^{*})^4=E_{eq}\in\mathcal{I}_{E_{eq}}\\
        f(q^{*})=k_1q^{*}+\beta k_3(q^{*})^3=-u^{*} \\
    \end{cases}
\end{equation}
The following algorithm is designed to find the control $u^{*}$ that shifts the 
equilibrium point to the desired energy. \\

\begin{algorithm}
\caption{Control design for the HO and DO}
\begin{algorithmic}[1]
\Require $E_{eq}^{\min}, E_{eq}^{\max}$
\Ensure Control input $u^{*}$

\State Sample $E_{eq} \sim \mathcal{U}(E_{eq}^{\min}, E_{eq}^{\max})$
\State Find $q^{*}$ such that $H(q^{*},0) = E_{eq}$
\State Compute control input $u^{*} \gets -k_1 q^{*} - \beta k_3 (q^{*})^3$
\State \Return $u^{*}$
\end{algorithmic}
\end{algorithm}

For the case of the self-sustained oscillator, we want to apply a constant 
control $\bm u$ such that the system stabilizes in a limit cycle around $\bm x^{*}$. 
The equilibrium ph-DAE formulation for this system is
\begin{equation}
        \label{eq: equilibrium-nl-rlc}
    \begin{bmatrix} \dot{q}=0 \\ \dot{p}=0 \\  w^{*} \\  y^{*} \end{bmatrix}
= \begin{bmatrix}
    0 & -1 & 1 & 0 \\ 1 & 0 & 0 & 0 \\ -1 & 0 & 0 & -1 \\ 0 & 0 & 1 & 0
\end{bmatrix}
\begin{bmatrix} \nabla_{q} H(q^{*},p^{*})=kq^{*} \\ \nabla_{p} H(q^{*},p^{*})=p^{*}/m \\ z(w^{*}) \\  u^{*} \end{bmatrix}
\end{equation}
which simplifies to
\begin{equation}
\label{eq: equilibrium-non-linear-rlc-simplification}
    \begin{cases}
        0=-p^{*}/m+z(w^{*}) \\
        0=kq^{*} \\
        w^{*}=-kq^{*}-u^{*} \\
        y^{*}=z(w^{*})
    \end{cases}
\end{equation}
We note that the system has en equilibrium point in $(q^{*},p^{*})=(0, m\cdot z(w^{*}))$. 
In order to determine whether closed orbits arise in this system, we use the 
Poincaré-Bendixson theorem~\cite{strogatz2018nonlinear}. As there is only one fixed 
point, and the phase space is of dimension 2, the existence of a close orbit 
(or, limit cycle) is guaranteed if the fixed point is unstable. The fixed point is 
unstable if $\Delta>0$ and $\tau<0$ (see~\cite{strogatz2018nonlinear}), where 
$\Delta$ and $\tau$ are the determinant and the trace of the Jacobian matrix 
$\bm J_{\bm f}$ of $f=(\dot{q},\dot{p})$ evaluated at the fixed point. 
From \eqref{eq: equilibrium-non-linear-rlc-simplification}
\begin{equation}
    \label{eq: jacobian-matrix-at-fixed-point}
    \bm J_{\bm f}(q^{*},p^{*})=\begin{bmatrix}
        z'(w^{*})\cdot k & -\frac{1}{m} \\ k & 0
    \end{bmatrix}
\end{equation}

\begin{table*}[h!]
\centering
\begin{tabular}{|c|c|c|c|c|c|c|c|c|}
\hline
\multicolumn{1}{|c|}{\textbf{Model}} & \textbf{Components} & \textbf{Network} & \textbf{Input} & \textbf{Layers} & \textbf{Hidden} & \textbf{Output} & \multicolumn{2}{|c|}{\textbf{Number of parameters}} \\
\cline{8-9}
\multicolumn{1}{|c|}{} & & \textbf{type} & \textbf{dimension} & & \textbf{units} & \textbf{dimension} & (per component) & (per model) \\
\hline
NODE & Black-box & MLP & 3 & 3 & 100 & 2 & 21,4k & 21,4k \\
\hline

\multirow{2}{*}{PHNN-JR} & $\mathbf{L}_{\theta_{L_H}}(\mathbf{x})$ & MLP & 2 & 2 & 100 & 3 & 10,7k & \multirow{2}{*}{21,8k} \\
\cline{2-8}
& $\mathbf{L}_{\theta_{L_R}}(\mathbf{x},\mathbf{u})$ & MLP & 3 & 2 & 100 & 3  & 11,1k & \\
\hline

\multirow{2}{*}{PHNN-S} & $\mathbf{L}_{\theta_{L_H}}(\mathbf{x})$ & MLP & 2 & 2 & 100 & 3 & 10,7k & \multirow{3}{*}{21,1k}\\
\cline{2-8}
& $\mathbf{L}_{\theta_{L_z}}(\mathbf{w})$ & MLP & 1 & 2 & 100 & 1 & 10,4k & \\
\cline{2-8}
&  $\mathbf{K}_{\theta_{L_z}}(\mathbf{w})$ & MLP & 1 & 2 & 100 & 0 & 0 & \\
\hline
\end{tabular}
\caption{Specificities of the implemented port-Hamiltonian neural networks with large number of parameters.}
\label{tab:reference-size-parameters-neural-networks}
\end{table*}

\begin{table*}[h!]
\centering
\begin{tabular}{|c|c|c|c|c|c|c|c|c|}
\hline
\multicolumn{1}{|c|}{\textbf{Model}} & \textbf{Components} & \textbf{Network} & \textbf{Input} & \textbf{Layers} & \textbf{Hidden} & \textbf{Output} & \multicolumn{2}{|c|}{\textbf{Number of parameters}} \\
\cline{8-9}
\multicolumn{1}{|c|}{} & & \textbf{type} & \textbf{dimension} & & \textbf{units} & \textbf{dimension} & (per component) & (per model) \\
\hline
\multirow{1}{*}{NODE}  & Black-box & MLP & 3 & 2 & 60 & 2 & 4,3k & 4,3k \\
\hline

\multirow{2}{*}{PHNN-JR}  & $\mathbf{L}_{\theta_{L_H}}(\mathbf{x})$ & MLP & 2 & 2 & 42 & 3 & 2,1k & \multirow{2}{*}{4,3k} \\
\cline{2-8}
 & $\mathbf{L}_{\theta_{L_R}}(\mathbf{x},\mathbf{u})$ & MLP & 3 & 2 & 42 & 3  & 2,2k & \\
\hline

\multirow{3}{*}{PHNN-S} & $\mathbf{L}_{\theta_{L_H}}(\mathbf{x})$ & MLP & 2 & 2 & 42 & 3 & 2,1k & \multirow{3}{*}{4,1k}\\
\cline{2-8}
& $\mathbf{L}_{\theta_{L_z}}(\mathbf{w})$ & MLP & 1 & 2 & 42 & 1 & 2k & \\
\cline{2-8}
& $\mathbf{K}_{\theta_{L_z}}(\mathbf{w})$ & MLP & 1 & 2 & 42 & 0 & 0 & \\
\hline
\end{tabular}
\caption{Specificities of the implemented port-Hamiltonian neural networks with medium size of parameters.}
\label{tab:medium-size-parameters-neural-networks}
\end{table*}

\begin{table*}[h!]
\centering
\begin{tabular}{|c|c|c|c|c|c|c|c|c|}
\hline
\multicolumn{1}{|c|}{\textbf{Model}} & \textbf{Components} & \textbf{Network} & \textbf{Input} & \textbf{Layers} & \textbf{Hidden} & \textbf{Output} & \multicolumn{2}{|c|}{\textbf{Number of parameters}} \\
\cline{8-9}
& & \textbf{type} & \textbf{dimension} & & \textbf{units} & \textbf{dimension} & (per component) & (per model) \\
\hline
\multirow{1}{*}{NODE} & Black-box & MLP & 3 & 2 & 24 & 2 & 848 & 848 \\
\hline

\multirow{2}{*}{PHNN-JR} & $\mathbf{L}_{\theta_{L_H}}(\mathbf{x})$ & MLP & 2 & 2 & 16 & 3 & 371 & \multirow{2}{*}{809} \\
\cline{2-8}
 & $\mathbf{L}_{\theta_{L_R}}(\mathbf{x},\mathbf{u})$ & MLP & 3 & 2 & 16 & 3  & 438 & \\
\hline

\multirow{3}{*}{PHNN-S}  & $\mathbf{L}_{\theta_{L_H}}(\mathbf{x})$ & MLP & 2 & 2 & 16 & 3 & 371 & \multirow{3}{*}{852}\\
\cline{2-8}
& $\mathbf{L}_{\theta_{L_z}}(\mathbf{w})$ & MLP & 1 & 2 & 20 & 1 & 481 & \\
\cline{2-8}
& $\mathbf{K}_{\theta_{L_z}}(\mathbf{w})$ & MLP & 1 & 2 & 20 & 0 & 0 & \\
\hline
\end{tabular}
\caption{Specificities of the implemented port-Hamiltonian neural networks with low size of parameters.}
\label{tab:low-size-parameters-neural-networks}
\end{table*} 

The Jacobian matrix $\bm J_{\bm f}$ has determinant $\Delta=km>0$ and trace $\tau=z'(w^{*})\cdot k$. In order to 
generate self-oscillations, the external force $u$ must be chosen so that $z'(w^{*})=z'(-kq^{*}-u)<0$. 
Taking into account that in the equilibrium $q^{*}=0$ and $w^{*}=-u^{*}$, the set of external controls that 
sets the system in a self-oscillation is
\begin{equation}
    u^{*}=-w^{*}~\text{where}~w^{*}\in\{w\in\mathbb{R}~|~z'(w)<0 \}
\end{equation}
The above information gives the following algorithm
\begin{algorithm}
\caption{Control design for the SSO}
\begin{algorithmic}[1]
\Require $z(w)$
\Ensure Control input $u^{*}$

\State Find $I_{w}=\{w\in\mathbb{R}~|~z'(w)<0\}$.
\State Sample $w \sim \mathcal{U}(I_w)$
\State Compute control input $u^{*}=-w$
\State \Return $u^{*}$
\end{algorithmic}
\end{algorithm}
\section{Table of results}
\label{ap: results-experiments}

Tables \ref{tab: results-study-I}-\ref{tab:study_3_jacobian_regularization} show the exact inference errors for each 
of the studies presented in Section \ref{sec:experiments}. In each table cell, the median value and IQR for 10 model 
initializations are reported.

\begin{table*}[h!]
\centering 
\begin{tabular}{|c|c|c|c|}
\hline
 \textbf{Harmonic} & 25 & 100 & 400 \\
\hline
NODE-RK2 & 2.53e-02 [2.33e-02] & 1.69e-04 [1.54e-04] & 2.79e-05 [1.84e-05] \\
\hline 
PHNN-JR-DG & 2.58e-04 [3.82e-04] & 9.10e-06 [1.22e-05] & 7.29e-06 [5.07e-06] \\
\hline 
PHNN-JR-RK2 & 1.14e-04 [1.02e-04] & 6.88e-06 [3.81e-06] & 5.18e-06 [5.50e-06] \\
\hline 
PHNN-S-DG & \textbf{4.22e-05 [5.99e-05]} & 6.24e-06 [4.97e-06] & 4.78e-06 [1.21e-05] \\
\hline
PHNN-S-RK2 & 4.89e-05 [2.11e-04] & \textbf{5.77e-06 [9.55e-06]} & \underline{\textbf{3.73e-06 [8.89e-06]}} \\
\hline 
 \textbf{Duffing} & 25 & 100 & 400 \\
\hline 
NODE-RK2 & 4.09e-01 [1.01e+00] & 7.62e-02 [2.62e-01] & 3.67e-03 [2.73e-03] \\
\hline
PHNN-JR-DG & 4.36e-01 [1.19e+00] & 1.83e-02 [2.55e-02] & 7.89e-03 [4.76e-03] \\
\hline 
PHNN-JR-RK2 & 5.82e-01 [2.96e+00] & 2.13e-02 [3.15e-02] & 6.15e-03 [9.92e-03] \\
\hline 
PHNN-S-DG & \textbf{7.06e-02 [4.37e+00]} & \textbf{8.08e-03 [4.93e-02]} & \underline{\textbf{9.08e-04 [8.69e-04]}} \\
\hline 
PHNN-S-RK2 & 9.35e-02 [4.54e+00] & 2.07e-02 [1.85e-02] & 9.17e-03 [6.13e-03] \\
\hline 
\textbf{Self-sustained} & 25 & 100 & 400 \\
\hline 
NODE-RK2 & 2.59e-02 [2.21e-02] & 2.01e-04 [1.38e-04] & 9.56e-05 [9.18e-05] \\
\hline 
PHNN-JR-DG & \textbf{1.71e-03 [2.71e-03]} & \textbf{7.93e-05 [5.03e-05]} & \underline{\textbf{7.06e-05 [7.01e-05]}} \\
\hline 
PHNN-JR-RK2 & 1.38e-02 [2.48e-02] & 2.59e-04 [2.63e-04] & 1.39e-04 [2.20e-04] \\
\hline 
PHNN-S-DG & 5.40e-03 [3.60e-03] & 1.68e-03 [4.83e-04] & 2.72e-03 [1.12e-03] \\
\hline 
PHNN-S-RK2 & 4.26e-03 [7.81e-03] & 2.64e-03 [1.60e-03] & 2.22e-03 [7.85e-04] \\
\hline 
\end{tabular}
\caption{Median inference error and IQR (in brackets) for different numbers of training points when learning the three oscillatory systems with small models. In bold, the best result for each combination of system and number of training points.}
\label{tab: results-study-I}
\end{table*}

\begin{table*}[h!]
\centering 
\begin{tabular}{|c|c|c|c|}
\hline 
 \textbf{Harmonic} & Small size & Medium size & Large size \\
\hline
NODE-RK2 & 2.53e-02 [2.33e-02] & 2.85e-02 [2.59e-02] & 1.45e-01 [4.67e-02] \\
\hline
PHNN-JR-DG & 2.58e-04 [3.82e-04] & 1.47e-04 [2.68e-04] & 7.33e-04 [2.52e-03] \\
\hline
PHNN-JR-RK2 & 1.14e-04 [1.02e-04] & 5.56e-05 [1.19e-04] & \textbf{1.51e-04 [3.87e-04]} \\
\hline
PHNN-S-DG & \underline{\textbf{4.22e-05 [5.99e-05]}} & 5.56e-05 [9.58e-05] & 2.45e-04 [4.59e-04] \\
\hline 
PHNN-S-RK2 & 4.89e-05 [2.11e-04] & \textbf{4.28e-05 [3.73e-05]} & 2.01e-04 [2.13e-04] \\
\hline 
\textbf{Duffing} & Small size & Medium size & Large size \\
\hline 
NODE-RK2 & 4.09e-01 [1.01e+00] & 3.80e-01 [1.77e+00] & 2.31e-01 [3.08e+00] \\
\hline 
PHNN-JR-DG & 4.36e-01 [1.19e+00] & 1.15e-01 [1.19e-01] & 3.08e-01 [4.01e-01] \\
\hline 
PHNN-JR-RK2 & 5.82e-01 [2.96e+00] & 1.50e-01 [2.60e-01] & 3.08e-01 [7.61e-02] \\
\hline 
PHNN-S-DG & \underline{\textbf{7.06e-02 [4.37e+00]}} & 8.63e-02 [1.10e-01] & \textbf{1.80e-01 [1.46e-01]} \\
\hline 
PHNN-S-RK2 & 9.35e-02 [4.54e+00] & \textbf{7.42e-02 [7.00e-02]} & 2.19e-01 [1.70e-01] \\
\hline
\textbf{Self-sustained} & Small size & Medium size & Large size \\
\hline 
NODE-RK2 & 2.59e-02 [2.21e-02] & 3.74e-02 [3.06e-02] & 6.53e-02 [5.74e-02] \\
\hline 
PHNN-JR-DG & \underline{\textbf{1.71e-03 [2.71e-03]}} & 6.56e-03 [2.02e-02] & 7.59e-03 [1.54e-02] \\
\hline 
PHNN-JR-RK2 & 1.38e-02 [2.48e-02] & 1.36e-02 [2.37e-02] & 1.75e-02 [1.50e-02] \\
\hline 
PHNN-S-DG & 5.40e-03 [3.60e-03] & 6.12e-03 [3.48e-03] & 7.74e-03 [5.18e-03] \\
\hline 
PHNN-S-RK2 & 4.26e-03 [7.81e-03] & \textbf{3.98e-03 [2.95e-03]} & \textbf{3.77e-03 [4.27e-03]} \\
\hline 
\end{tabular}
\caption{Median inference error and IQR (in brackets) for different model sizes when learning the three oscillatory systems with $N_{train}=25$. In bold, the best result for each combination of system and model size.}
\label{tab: results-study-II}
\end{table*}

\begin{table*}[h!]
    \centering

    \begin{tabular}{|c|c|c|c|c|}
        \hline 
        Harmonic & BL & CN & SN & SR \\
        \hline
        NODE-RK2 & 2.53e-02 [2.33e-02] & 7.82e-02 [1.63e+00] & 2.63e-02 [2.53e-02] & 2.56e-02 [5.70e-02] \\
        \hline 
        PHNN-JR-DG & 2.58e-04 [3.82e-04] & 2.20e-04 [3.40e-04] & 1.94e-04 [3.09e-04] & 2.30e-04 [3.19e-04] \\
        \hline 
        PHNN-JR-RK2 & 1.14e-04 [1.02e-04] & 1.50e-04 [5.70e-04] & 7.18e-05 [9.39e-05] & 1.26e-04 [1.46e-04] \\
        \hline 
        PHNN-S-DG & \textbf{4.22e-05 [5.99e-05]} & 3.84e-05 [4.05e-05] & \textbf{3.44e-05 [6.62e-05]} & 1.13e-04 [3.07e-04] \\
        \hline 
        PHNN-S-RK2 & 4.89e-05 [2.11e-04] & \underline{\textbf{1.17e-05 [5.67e-05]}} & 5.66e-05 [2.02e-04] & \textbf{2.97e-05 [5.35e-04]} \\
        \hline
        Duffing & BL & CN & SN & SR \\
        \hline
        NODE-RK2 & 4.09e-01 [1.01e+00] & 9.93e+00 [4.77e+01] & 5.77e-01 [9.72e-01] & 7.33e-01 [5.18e+00] \\
        \hline 
        PHNN-JR-DG & 4.36e-01 [1.19e+00] & 6.72e+00 [5.63e+01] & 3.55e-01 [1.24e+01] & 4.43e-01 [8.66e+01] \\
        \hline 
        PHNN-JR-RK2 & 5.82e-01 [2.96e+00] & 2.76e+00 [2.35e+01] & 3.26e-01 [2.69e+01] & 9.97e-01 [3.91e+02] \\
        \hline 
        PHNN-S-DG & \textbf{7.06e-02 [4.37e+00]} & \underline{\textbf{5.15e-02 [8.92e-02]}} & \textbf{6.26e-02 [1.07e-01]} & \textbf{6.57e-02 [2.54e+00]} \\
        \hline 
        PHNN-S-RK2 & 9.35e-02 [4.54e+00] & 6.74e-02 [5.83e-02] & 8.27e-02 [4.53e+00] & 6.97e+00 [2.80e+01] \\
        \hline
        Self-sustained & BL & CN & SN & SR \\
        \hline 
        NODE-RK2 & 2.59e-02 [2.21e-02] & 4.04e-02 [1.63e-02] & 2.39e-02 [3.27e-02] & 3.10e-02 [4.33e-02] \\
        \hline 
        PHNN-JR-DG & \underline{\textbf{1.71e-03 [2.71e-03]}} & \textbf{2.84e-03 [2.74e-03]} & \textbf{1.91e-03 [3.14e-03]} & 1.43e-02 [2.36e-02] \\
        \hline 
        PHNN-JR-RK2 & 1.38e-02 [2.48e-02] & 8.24e-03 [2.58e-02] & 1.38e-02 [2.93e-02] & 3.62e-02 [3.97e-02] \\
        \hline 
        PHNN-S-DG & 5.40e-03 [3.60e-03] & 4.90e-03 [3.89e-03] & 5.23e-03 [3.72e-03] & 1.22e-02 [4.37e-03] \\
        \hline 
        PHNN-S-RK2 & 4.26e-03 [7.81e-03] & 5.00e-03 [5.67e-03] & 4.31e-03 [8.56e-03] & \textbf{5.11e-03 [6.72e-03]} \\
        \hline 

    \end{tabular}
    
    \caption{Median inference error and IQR (in brackets) after applying the different Jacobian regularizations when learning the three oscillatory systems with $N_{train}=25$ 
    and small models. The best result for each combination of system and Jacobian regularization is shown in bold. Notation: BL refers to the baseline (i.e. no Jacobian regularization); 
    CN, to the condition number regularization \eqref{eq:condition-number-loss}; SN, to the spectral norm regularization \eqref{eq:spectral-norm-loss}; and SR, to the stiffness ratio 
    regularization \eqref{eq:stiffness-ratio-loss}}
    \label{tab:study_3_jacobian_regularization}
\end{table*}

\clearpage
\clearpage
\textbf{Figure captions} \\

Figure 1: Architecture of the two PHNN models considered in this work. White boxes with orange contour denote fixed algebraic operations whereas orange boxes indicate the trainable parameters. \\

Figure 2: Training and inference diagram for the continuous models $\bm f_{\theta}$. \\

Figure 3: Training and inference diagram for the discrete models $\bm g_{\theta}$. \\

Figure 4: Training points, test initial points and two complete test trajectories for each of the three oscillatory systems. Note that in the case of the harmonic and Duffing oscillator, the applied control shifted the equilibrium point from $(p,q)=(0,0)$ whereas in the case of the self-sustained oscillator, it stabilizes the trajectories in a limit cycle. \\

Figure 5: Schematic sampling and dataset construction procedure. A trajectory generated at sampling frequency ${sr}_{gen}$ 
    over a duration $D=D_{infer}=\beta T_0$ is shown as white dot markers. From this trajectory, a training point, highlighted with a green star marker, 
    is uniformly sampled from a subset of samples, shown as black dot markers, obtained at frequency ${sr}_{train}$ and restricted to $t\leq\alpha T_0$. 
    The training horizon $D_{train}$ and the inference horizon $D_{infer}$ are indicated by arrows, with the vertical dashed line marking the end of the 
    training interval. \\

Figure 6: Boxplot of the inference errors for the three different oscillators and varying numbers of training points (small models). From left to right: harmonic, Duffing and self-sustained oscillators. \\

Figure 7: Comparison of the learned trajectory discretizations for each oscillatory system starting on $\bm x_0$ such that 
        $H(\bm x_0)=0.5J$. (\textbf{Left}) NODE and PHNN-S models trained on $N_{train}=25$ from the harmonic oscillator. (\textbf{Center}) NODE and PHNN-S models 
        trained on $N_{train}=100$ from the Duffing oscillator. (\textbf{Right}) NODE and PHNN-JR models trained on $N_{train}=25$ from the self-sustained oscillator. For the harmonic and Duffing 
        oscillator, the trajectory is obtained with $\bm u=0$; and for the self-sustained, with $\bm u$ such that the system stabilizes 
        in a limit cycle.\\

Figure 8: Boxplot of the inference errors for the three different oscillators and model sizes ($N_{train}=25$). From left to right: harmonic, Duffing and self-sustained oscillators.\\

Figure 9: Comparison of the learned trajectory discretizations by the discrete gradient for each oscillatory system starting 
    on $\bm x_0$ such that $H(\bm x_0)=0.75J$. (\textbf{Left}) PHNN-S models trained on $N_{train}=25$ from the harmonic oscillator. (\textbf{Center}) 
    PHNN-S models trained on $N_{train}=25$ from the Duffing oscillator. (\textbf{Right}) PHNN-JR models trained on $N_{train}=25$ from the self-sustained oscillator. For the harmonic and Duffing 
    oscillator, the trajectory is obtained with $\bm u=0$; and for the self-sustained, with $\bm u$ such that the system stabilizes 
    in a limit cycle.\\

Figure 10: Mean condition number (\textbf{left}), spectral norm (\textbf{center}) and stiffness ratio (\textbf{right}) per 
        optimizer step during the training for the different architectures when modeling the Duffing oscillator with $N_{train}=25$ 
        points and no Jacobian regularization (BL). For each optimizer step, the corresponding mean value (solid line) is obtained 
        computing the average over the 10 model initializations, with the shaded area indicating the range between the minimum and 
        maximum values.\\

Figure 11: Boxplot of the inference errors for the three different oscillators and normalizations ($N_{train}=25$ and small 
    number of parameters). Notation: BL refers to the baseline (i.e. no Jacobian regularization); CN, to the condition number 
    regularization \eqref{eq:condition-number-loss}; SN, to the spectral norm regularization \eqref{eq:spectral-norm-loss}; 
    and SR, to the stiffness ratio regularization \eqref{eq:stiffness-ratio-loss}. From left to right: harmonic, Duffing and self-sustained oscillators.\\

Figure 12: Impact on the mean condition number (\textbf{left}), spectral norm (\textbf{center}) and stiffness ratio 
    (\textbf{right}) per optimizer step during the training for the different architectures when modeling the Duffing 
    oscillator with $N_{train}=25$ points and the proposed Jacobian regularizations. For each optimizer step, the corresponding 
    mean value (solid line) is obtained computing the average over the 10 model initializations, with the shaded area indicating 
    the range between the minimum and maximum values.\\

\end{document}